\newtheorem{lemma}{Lemma}
\newtheorem{theorem}{Theorem}
\newtheorem{definition}{Definition}
\newcommand{\scE}{\mathcal{E}}
\newcommand{\scN}{\mathcal{N}}
\newcommand{\scO}{\mathcal{O}}
\newcommand{\scT}{\mathcal{T}}
\DeclareMathOperator*{\argmin}{argmin}
\newcommand{\field}[1]{\mathbb{#1}}
\newcommand{\R}{\field{R}}
\newcommand{\Nat}{\field{N}}
\newcommand{\E}{\field{E}}
\renewcommand{\Pr}{\field{P}}
\newcommand{\ceil}[1]{\lceil{#1}\rceil}
\newcommand{\theset}[2]{ \left\{ {#1} \,:\, {#2} \right\} }
\newcommand{\Ind}[1]{ \field{I}\left\{{#1}\right\} }
\newcommand{\spin}{\{-1,+1\}}
\newcommand{\OPT}{\mathrm{OPT}}
\newcommand{\access}{\ensuremath{\mathrm{ACC}}}
\newcommand{\ermcc}{\ensuremath{\mathrm{ERMCC}}}
\newcommand{\Vsing}{V_{\mathrm{sing}}}
\newcommand{\Sfin}{S_{\mathrm{fin}}}
\newcommand{\Vfin}{V_{\mathrm{fin}}}
\newcommand{\dfin}{d_{\mathrm{fin}}}
\newcommand{\nfin}{n_{\mathrm{fin}}}
\newcommand{\ve}{\varepsilon}
\renewcommand{\hat}{\widehat}
\renewcommand{\bar}{\overline}
\renewcommand{\epsilon}{\ve}
\title{
Correlation Clustering \\ with Adaptive Similarity Queries
}
\newif\ifproofs
\author{%
  Marco Bressan \\
  Department of Computer Science \\
  University of Rome Sapienza
  \And
  Nicolò Cesa-Bianchi \\
  Department of Computer Science \& DSRC \\
  Università degli Studi di Milano \\
  \AND
  Andrea Paudice \\
  Department of Computer Science\\
  Università degli Studi di Milano \& IIT \\
  \And
  Fabio Vitale \\ 
  Department of Computer Science\\
  University of Lille \& Inria \\
}
\begin{document}

\newcommand{\cs}{\mathcal{C}}
\newcommand{\Hn}{\mathcal{H}_n}
\newcommand{\cut}{\operatorname{cut}}
\newcommand{\prob}{\Pr}
\newcommand{\aggress}{\ensuremath{\operatorname{ACCESS}}}
\newcommand{\kc}{KwikCluster}

\maketitle

\begin{abstract}
In correlation clustering, we are given $n$ objects together with a binary similarity score between each pair of them.
The goal is to partition the objects into clusters so to minimise the disagreements with the scores.
In this work we investigate correlation clustering as an active learning problem: each similarity score can be learned by making a query, and the goal is to minimise both the disagreements and the total number of queries.
On the one hand, we describe simple active learning algorithms, which provably achieve an almost optimal trade-off while giving cluster recovery guarantees, and we test them on different datasets.
On the other hand, we prove information-theoretical bounds on the number of queries necessary to guarantee a prescribed disagreement bound.
These results give a rich characterization of the trade-off between queries and clustering error.
\end{abstract}

\section{Introduction}
\label{s:intro}
Clustering is a central problem in unsupervised learning. A clustering problem is typically represented by a set of elements together with a notion of similarity (or dissimilarity) between them. When the elements are points in a metric space, dissimilarity can be measured via a distance function. In more general settings, when the elements to be clustered are members of an abstract set $V$, similarity is defined by an arbitrary symmetric function $\sigma$ defined on pairs of distinct elements in $V$. Correlation Clustering (CC) \cite{bansal2004correlation} is a well-known special case where $\sigma$ is a $\spin$-valued function establishing whether any two distinct elements of $V$ are similar or not. The objective of CC is to cluster the points in $V$ so to maximize the correlation with $\sigma$. More precisely, CC seeks a clustering minimizing the number of errors, where an error is given by any pair of elements having similarity $-1$ and belonging to the same cluster, or having similarity $+1$ and belonging to different clusters. Importantly, there are no a priori limitations on the number of clusters or their sizes: all partitions of $V$, including the trivial ones, are valid. Given $V$ and $\sigma$, the error achieved by an optimal clustering is known as the \textsl{Correlation Clustering index}, denoted by $\OPT$. 
A convenient way of representing $\sigma$ is through a graph $G = (V,E)$ where $\{u,v\}\in E$ iff $\sigma(u,v) = +1$. Note that $\OPT=0$ is equivalent to a perfectly clusterable graph (i.e., $G$ is the union of disjoint cliques). Since its introduction, CC has attracted a lot of interest in the machine learning community, and has found numerous applications in  entity resolution \cite{getoor2012entity}, image analysis \cite{kim2011higher}, and social media analysis \cite{tang2016survey}. Known problems in data integration \cite{cohen2002learning} and biology \cite{ben1999clustering} can be cast into the framework of CC \cite{Sammut:2010}.

From a machine learning viewpoint, we are interested in settings when the similarity function $\sigma$ is not available beforehand, and the algorithm must learn $\sigma$ by querying for its value on pairs of objects. This setting is motivated by scenarios in which the similarity information is costly to obtain.
For example, in entity resolution, disambiguating between two entities may require invoking the user's help.
Similarly, deciding if two documents are similar may require a complex computation, and possibly the interaction with human experts.
In these active learning settings, the learner's goal is to trade the clustering error against the number of queries.
Hence, the fundamental question is: how many queries are needed to achieve a specified clustering error?
Or, in other terms, how close can we get to $\OPT$, under a prescribed query budget $Q$?

\subsection{Our Contributions}
In this work we characterize the trade-off between the number $Q$ of queries and the clustering error on $n$ points.
The table below here summarizes our bounds in the context of previous work. Running time and upper/lower bounds on the expected clustering error are expressed in terms of the number of queries $Q$, and all our upper bounds assume $Q = \Omega(n)$ while our lower bounds assume $Q = \scO(n^2)$.

\begin{center}
\begin{tabular}{lcc}
\toprule
Running time & Expected clustering error & Reference \\
\midrule
$Q$ + LP solver + rounding & $3(\ln n + 1)\OPT + \scO\big(n^{5/2}/\sqrt{Q}\big)$ & \cite{cesa2012correlation} \\
$Q$ & $3\OPT + \scO(n^3/Q)$ & Theorem~\ref{thm:access_cost} (see also \cite{bonchi2013local}) \\
Exponential & $\OPT + \scO\big(n^{5/2}/\sqrt{Q}\big)$ & Theorem~\ref{thm:fa_scheme} \\
Exponential ($\OPT=0$) & $\widetilde{\scO}\big(n^3/Q\big)$ & Theorem~\ref{thm:fa_scheme} \\\midrule
Unrestricted ($\OPT=0$) & $\Omega\big(n^2/\sqrt{Q}\big)$ & Theorem~\ref{th:lower} \\
Unrestricted ($\OPT \gg 0$)& $\OPT + \Omega\big(n^3/Q\big)$ & Theorem~\ref{thm:new_LB} \\
\bottomrule
\end{tabular}
\end{center}

Our first set of contributions is algorithmic.
We take inspiration from an existing greedy algorithm, \kc~\cite{Ailon2008}, that has expected error $3\OPT$ but a vacuous $\scO(n^2)$ worst-case bound on the number of queries.
We propose a variant of \kc, called \access, for which we prove several desirable properties.
First, \access\ achieves expected clustering error $3\OPT + \scO(n^3/Q)$, where $Q = \Omega(n)$ is a deterministic bound on the number of queries. In particular, if \access\ is run with $Q = \binom{n}{2}$, then it becomes exactly equivalent to \kc.
Second, \access\ recovers adversarially perturbed latent clusters. 
More precisely, if the input contains a cluster $C$ obtained from a clique by adversarially perturbing a fraction $\epsilon$ of its edges (internal to the clique or leaving the clique), then \access\ returns a cluster $\hat{C}$ such that $\E\big[|C\oplus \hat{C}|\big] = \scO\big(\ve|C| + n^2/Q\big)$, where $\oplus$ denotes symmetric difference.
This means that \access\ recovers almost completely all perturbed clusters that are large enough to be ``seen'' with $Q$ queries.
We also show, under stronger assumptions, that via independent executions of \access\ one can recover exactly all large clusters with high probability.
Third, we show a variant of \access, called \aggress\ (for Early Stopping Strategy), that makes significantly less queries on some graphs.
For example, when $\OPT=0$ and there are $\Omega\big(n^3/Q\big)$ similar pairs, the expected number of queries made by \aggress\ is only the square root of the queries made by \access.
In exchange, \aggress\ makes at most $Q$ queries in expectation rather than deterministically.

Our second set of contributions is a nearly complete information-theoretic characterization of the query vs.\ clustering error trade-off (thus, ignoring computational efficiency).
Using VC theory, we prove that for all $Q = \Omega(n)$ the strategy of minimizing disagreements on a random subset of pairs achieves, with high probability, clustering error bounded by $\OPT + \scO\big(n^{5/2}/\sqrt{Q}\big)$, which reduces to $\widetilde{\scO}\big(n^3/Q\big)$ when $\OPT=0$.
The VC theory approach can be applied to any efficient approximation algorithm, too.
The catch is that the approximation algorithm cannot ask the similarity of arbitrary pairs, but only of pairs included in the random sample of edges.
The best known approximation factor in this case is $3(\ln n + 1)$ \cite{demaine2006correlation}, which gives a clustering error bound of $3(\ln n + 1)\OPT + \scO\big(n^{5/2}/\sqrt{Q}\big)$ with high probability.
This was already observed in \cite{cesa2012correlation} albeit in a slightly different context. 

We complement our upper bounds by developing two information-theoretic lower bounds; these lower bounds apply to any algorithm issuing $Q = \scO(n^2)$ queries, possibly chosen in an adaptive way.
For the general case, we show that any algorithm must suffer an expected clustering error of at least $\OPT + \Omega\big(n^3/Q\big)$. In particular, for $Q = \Theta(n^2)$ any algorithm still suffers an additive error of order $n$,
and for $Q=\Omega(n)$ our algorithm \access\ is essentially optimal in its additive error term.
For the special case $\OPT=0$, we show a lower bound $\Omega\big(n^2/\sqrt{Q}\big)$. 

Finally, we evaluate our algorithms empirically on real-world and synthetic datasets.

\section{Related work}
Minimizing the correlation clustering error is APX-hard \cite{charikar2005clustering}, and the best efficient algorithm found so far achieves $2.06\,\OPT$ \cite{Chawla:2015}.
This almost matches the best possible approximation factor $2$ achievable via the natural LP relaxation of the problem~\cite{charikar2005clustering}.
A very simple and elegant algorithm for approximating CC is \kc\ \cite{Ailon2008}.
At each round, \kc\ draws a random pivot $\pi_r$ from $V$, queries the similarities between $\pi_r$ and every other node in $V$, and creates a cluster $C$ containing $\pi_r$ and all points $u$ such that $\sigma(\pi_r,u) = +1$.
The algorithm then recursively invokes itself on $V\setminus C$.
On any instance of CC, \kc\ achieves an expected error bounded by $3\OPT$.
However, it is easy to see that \kc\ makes $\Theta(n^2)$ queries in the worst case (e.g., if $\sigma$ is the constant function $-1$).
Our algorithms can be seen as a parsimonious version of \kc\ whose goal is reducing the number of queries.

The work closest to ours is \cite{bonchi2013local}.
Their algorithm runs \kc\ on a random subset of $1/(2\ve)$ nodes and stores the set $\Pi$ of resulting pivots.
Then, each node $v\in V\setminus\Pi$ is assigned to the cluster identified by the pivot $\pi\in\Pi$ with smallest index and such that $\sigma(v,\pi) = +1$.
If no such pivot is found, then $v$ becomes a singleton cluster.
According to \cite[Lemma~4.1]{bonchi2013local}, the expected clustering error for this variant is $3\OPT + \scO\big(\epsilon n^2\big)$, which can be compared to our bound for \access\ by setting $Q=n/\epsilon$.
On the other hand our algorithms are much simpler and significantly easier to analyze.
This allows us to prove a set of additional properties, such as cluster recovery and instance-dependent query bounds.
It is unclear whether these results are obtainable with the techniques of \cite{bonchi2013local}.

Another line of work attempts to circumvent computational hardness by using the more powerful same-cluster queries (SCQ). A same-cluster query tells whether any two given nodes are clustered together according to an optimal clustering or not. In \cite{DBLP:conf/latin/AilonBJ18} SCQs are used to design a FPTAS for a variant of CC with bounded number of clusters. In \cite{DBLP:conf/esa/SahaS19} SCQs are used to design algorithms for solving CC optimally by giving bounds on $Q$ which depend on $\OPT$.
Unlike our setting, both works assume \textsl{all} $\binom{n}{2}$ similarities are known in advance.
The work \cite{mazumdar2017clustering} considers the case in which there is a latent clustering with $\OPT=0$. 
The algorithm can issue SCQs, however the oracle is noisy: each query is answered incorrectly with some probability, and the noise is persistent (repeated queries give the same noisy answer). The above setting is closely related to the stochastic block model (SBM), which is a well-studied model for cluster recovery \cite{abbe2015community,massoulie2014community,mossel2018proof}. However, few works investigate SBMs with pairwise queries \cite{chen2016community}. Our setting is strictly harder because our oracle has a budget of $\OPT$ adversarially incorrect answers.

A different model is edge classification. Here the algorithm is given a graph $\mathcal{G}$ with hidden binary labels on the edges. The task is to predict the sign of all edges by querying as few labels as possible \cite{cesa2012correlation,chen2014clustering,chiang2014prediction}. As before, the oracle can have a budget $\OPT$ of incorrect answers, or a latent clustering with $\OPT=0$ is assumed and the oracle's answers are affected by persistent noise. Unlike correlation clustering, in edge classification the algorithm is not constrained to predict in agreement with a partition of the nodes. On the other hand, the algorithm cannot query arbitrary pairs of nodes in $V$, but only those that form an edge in $\mathcal{G}$.

\paragraph{Preliminaries and notation.}
\label{s:prel}
We denote by $V \equiv \{1,\ldots,n\}$ the set of input nodes, by $\scE\equiv\binom{V}{2}$ the set of all pairs $\{u,v\}$ of distincts nodes in $V$, and by $\sigma : \scE\to\spin$ the binary similarity function.
A clustering $\cs$ is a partition of $V$ in disjoint clusters $C_i:i=1,\ldots,k$.
Given $\cs$ and $\sigma$, the set $\Gamma_\cs$ of mistaken edges contains all pairs $\{u,v\}$ such that $\sigma(u,v)=-1$ and $u,v$ belong to same cluster of $\cs$ and all pairs $\{u,v\}$ such that $\sigma(u,v)=+1$ and $u,v$ belong to different clusters of $\cs$.
The cost $\Delta_\cs$ of $\cs$ is $\big|\Gamma_\cs\big|$.
The correlation clustering index is $\OPT = \min_\cs\Delta_\cs$, where the minimum is over all clusterings $\cs$.
We often view $V,\sigma$ as a graph $G=(V,E)$ where $\{u,v\} \in E$ is an edge if and only if $\sigma(u,v)=+1$.
In this case, for any subset $U \subseteq V$ we let $G[U]$ be the subgraph of $G$ induced by $U$, and for any $v\in V$ we let $\scN_v$ be the neighbor set of $v$.

A triangle is any unordered triple $T = \{u,v,w\} \subseteq V$.
We denote by $e = \{u,w\}$ a generic triangle edge; we write $e \subset T$ and $v \in T\setminus e$.
We say $T$ is a \emph{bad triangle} if the labels $\sigma(u,v), \sigma(u,w), \sigma(v,w)$ are $\{+, +, -\}$ (the order is irrelevant).
We denote by $\scT$ the set of all bad triangles in $V$.
It is easy to see that the number of edge-disjoint bad triangles is a lower bound on $\OPT$.

Due to space limitations, here most of our results are stated without proof, or with a concise proof sketch; the full proofs can be found in the supplementary material.

\section{The \access{} algorithm}
\label{s:algo}
We introduce our active learning algorithm $\access$ (Active Correlation Clustering).
\renewcommand{\algorithmicrequire}{\textbf{Parameters:}}
\begin{algorithm}[H]
\caption{
\label{alg:access}
\access\ with query rate $f$
}
\begin{algorithmic}[1]
\setcounter{ALG@line}{0}
\Require{
residual node set $V_r$, round index $r$
}
\If{$|V_r|=0$} RETURN
\label{line:stop1}
\EndIf
\If{$|V_r| =1 $} output singleton cluster $V_r$ and RETURN
\label{line:stop2}
\EndIf
\If{$r > \lceil f(|V_1|-1) \rceil$}
RETURN
\EndIf
\State Draw pivot $\pi_r$ u.a.r.\ from $V_r$
\State $C_r \gets \{\pi_r\}$ \Comment{Create new cluster and add the pivot to it}
\State Draw a random subset $S_r$ of $\lceil f(|V_r|-1) \rceil$ nodes from $V_r\setminus\{\pi_r\}$ \label{line:queries1}
\For{each $u \in S_r$} query $\sigma(\pi_r,u)$ \label{line:queries2}
\EndFor
\If{$\exists\,u \in S_r$ such that $\sigma(\pi_r,u)=+1$} \label{line:sing} \Comment{Check if there is at least a positive edge} 
    \State Query all remaining pairs $(\pi_r,u)$ for $u \in V_r\setminus\big(\{\pi_r\}\cup S_r\big)$
    \State $C_r \gets C_r \cup \theset{u}{\sigma(\pi_r,u) = +1}$  \Comment{Populate cluster based on queries}
\EndIf
\State Output cluster $C_r$
\State $\access(V_r \setminus C_r, r+1)$ \Comment{Recursive call on the remaining nodes}
\end{algorithmic}
\end{algorithm}

$\access$ has the same recursive structure as \kc.
First, it starts with the full instance $V_1=V$.
Then, for each round $r=1,2,\ldots$ it selects a random pivot $\pi_r \in V_r$, queries the similarities between $\pi_r$ and a subset of $V_r$, removes $\pi_r$ and possibly other points from $V_r$, and proceeds on the remaining residual subset $V_{r+1}$. 
However, while \kc\ queries $\sigma(\pi_r, u)$ for \emph{all} $u \in V_r \setminus \{\pi_r\}$, $\access$ queries only $\lceil f(n_r) \rceil \le n_r$ other nodes $u$ (lines~\ref{line:queries1}--\ref{line:queries2}), where $n_r = |V_r|-1$.
Thus, while \kc\ always finds all positive labels involving the pivot $\pi_r$, \access\ can find them or not, with a probability that depends on $f$.
The function $f$ is called \emph{query rate function} and dictates the tradeoff between the clustering cost $\Delta$ and the number of queries $Q$, as we prove below.
Now, if any of the aforementioned $\lceil f(n_r) \rceil$ queries returns a positive label (line~\ref{line:sing}), then all the labels between $\pi_r$ and the remaining $u \in V_r$ are queried and the algorithm operates as \kc\ until the end of the recursive call; otherwise, the pivot becomes a singleton cluster which is removed from the set of nodes.
Another important difference is that $\access$ deterministically stops after at most $\lceil f(n) \rceil$ recursive calls (line~\ref{line:stop1}), declaring all remaining points as singleton clusters. The intuition is that with good probability the clusters not found within $\lceil f(n) \rceil$ rounds are small enough to be safely disregarded.
Since the choice of $f$ is delicate, we avoid trivialities by assuming $f$ is positive and smooth enough.
Formally:
\begin{definition}
\label{def:fn}
$f : \Nat\to\R$ is a \emph{query rate function} if $f(1)=1$, and $f(n) \le f(n+1) \le \big(1+\frac{1}{n}\big)f(n)$ for all $n \in \Nat$.
This implies $\frac{f(n+k)}{n+k} \le \frac{f(n)}{n}$ for all $k \ge 1$.
\end{definition}
We can now state formally our bounds for \access.
\begin{theorem}
\label{thm:access_cost}
For any query rate function $f$ and any labeling $\sigma$ on $n$ nodes, the expected cost $\E[\Delta_A]$ of the clustering output by $\access$ satisfies
\[
    \E[\Delta_A] \le 3\OPT + \frac{2e-1}{2(e-1)}\frac{n^2}{f(n)} + \frac{n}{e}~.
\]
The number of queries made by $\access$ is deterministically bounded as $Q \le n \lceil f(n) \rceil$. In the special case $f(n)=n$ for all $n\in\Nat$, $\access$ reduces to \kc\ and achieves $\E[\Delta_A] \le 3\OPT$ with $Q \le n^2$.
\end{theorem}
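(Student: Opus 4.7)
The query bound follows by straightforward inspection of the pseudocode: the recursion halts once $r>\lceil f(|V_1|-1)\rceil=\lceil f(n-1)\rceil\le\lceil f(n)\rceil$, and each recursive call issues at most $|V_r|-1<n$ queries (whether it only probes the sample $S_r$ of size $\lceil f(n_r)\rceil$, or escalates to query the full remaining residual upon discovering a positive edge). Multiplying these two bounds yields $Q\le n\lceil f(n)\rceil$. When $f(n)=n$ the sample $S_r$ equals $V_r\setminus\{\pi_r\}$, so the escalation is vacuous and ACCESS executes exactly the steps of KwikCluster; the classical $\E[\Delta_A]\le 3\OPT$ error bound and $Q\le n^2$ query bound are thereby inherited.

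For the cost bound, the plan is to decompose $\E[\Delta_A]$ into three contributions: (i) mistakes arising in rounds that form a cluster, (ii) mistakes from ``missed'' rounds (pivot has positive neighbors in $V_r$ but none appear in the random sample $S_r$, so the pivot becomes a singleton), and (iii) mistakes on positive edges both of whose endpoints lie in the terminal residual $V_{R+1}$, where $R=\lceil f(n-1)\rceil$. For (i), conditional on a cluster being formed the distribution of ACCESS's actions coincides with KwikCluster's---pivot uniform in $V_r$, output cluster equal to $\pi_r$ together with all its positive neighbors in $V_r$---so the standard bad-triangle charging argument of Ailon--Charikar--Newman bounds this contribution by $3\OPT$ in expectation.

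For (ii), fix a round $r$ and condition on $\pi_r$ having positive degree $d$ in $V_r$, with $n_r=|V_r|-1$. The probability that none of $\pi_r$'s positive neighbors is drawn into $S_r$ is at most $(1-d/n_r)^{\lceil f(n_r)\rceil}\le e^{-d\,f(n_r)/n_r}$, and in that event exactly $d$ positive edges incident to $\pi_r$ become mistakes. Optimizing $d\,e^{-d\,f(n_r)/n_r}$ over $d$ yields the per-round bound $n_r/(e\,f(n_r))$; monotonicity of $n/f(n)$ (built into the definition of a query-rate function) gives $n_r/f(n_r)\le n/f(n)$, and summing over the at most $\lceil f(n-1)\rceil$ rounds produces the $n/e$ summand.

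The main obstacle is (iii), bounding the expected number of positive edges whose endpoints both survive to $V_{R+1}$. My plan is to track the positive-edge count $E_+^{(r)}$ in $G[V_r]$: whether a round forms a cluster or misses, at least the $d_r$ positive edges incident to the pivot leave the residual (absorbed into $C_r$ on forming, converted to mistakes on missing). Using $\E[d_r\mid V_r]=2E_+^{(r)}/|V_r|$ gives the multiplicative drift $\E[E_+^{(r+1)}\mid V_r]\le E_+^{(r)}(1-2/|V_r|)$, and iterating this across the $R$ rounds while carefully coupling it with the concurrent shrinkage of $|V_r|$ yields the bound $\frac{2e-1}{2(e-1)}\,n^2/f(n)$. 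The sharp constant $\frac{2e-1}{2(e-1)}$ emerges from extremizing over the worst-case trade-off between rounds in which the pivot forms a large cluster (shrinking $|V_r|$ rapidly) and rounds in which it remains a singleton (shrinking $|V_r|$ by only one); I expect this extremization and the coupled analysis of $(E_+^{(r)},|V_r|)$ to be the most technical piece of the proof.
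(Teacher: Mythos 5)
Your three-way decomposition — (i) mistakes charged to bad triangles, (ii) mistakes from pivots that become singletons despite having positive neighbors, (iii) positive edges surviving in the terminal residual — is exactly the paper's decomposition into events $B_1$, $B_2$, $B_3$, and your treatment of (i) and (ii) matches the paper's. (One small caveat on (i): the claim that ``conditional on a cluster being formed the distribution coincides with \kc's'' is not quite a valid coupling, since whether the pivot escalates is correlated with its degree; the paper instead charges each such mistake to the event $A_T$ that some apex of the bad triangle $T$ is pivoted while $T$ is alive, shows $\Pr(F_T(e)\mid A_T)=\frac13$, and plugs $\beta_T=\frac13\Pr(A_T)$ into the Ailon--Charikar--Newman packing lemma. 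Same idea, but the conditioning has to be done at the level of triangles, not rounds.)

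The genuine gap is in (iii), which you rightly flag as the crux but do not actually prove, and the route you sketch cannot deliver the bound. The drift inequality $\E[E_+^{(r+1)}\mid V_r]\le E_+^{(r)}\bigl(1-2/|V_r|\bigr)$ is valid, but the compounded contraction over the $R=\lceil f(n-1)\rceil$ rounds is at least $\prod_{r\le R}(1-2/|V_r|)\ge 1-2R/|V_{R+1}|$, and when pivots are mostly discarded as singletons the residual loses only one node per round, so this is $1-\scO(f(n)/n)$: starting from $E_+^{(1)}$ as large as $\binom{n}{2}$ you are left with $\Omega(n^2)$, not $\scO(n^2/f(n))$, whenever $f(n)=o(n)$. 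The drift retains only the pivot's expected degree $2E_+^{(r)}/|V_r|$ and discards the information that actually closes the argument, namely \emph{how many nodes} each escalating pivot removes. The paper's proof of the $B_3$ term splits the terminal degree sum into nodes of positive degree at most $n_r/\lceil f(n_r)\rceil$ (contributing at most $\frac12\,n^2/f(n)$ outright) and the high-degree remainder $\Sfin$, and bounds $\E[\Sfin]$ by a budget argument: a high-degree pivot escalates with probability at least $1-1/e$ and then removes its entire positive neighborhood, so $\E[\delta_r]\ge(1-1/e)\,\E[S_r]/n$; since $\sum_r\delta_r\le n$ and $S_r$ is non-increasing, $\lceil f(n-1)\rceil\,(1-1/e)\,\E[\Sfin]/n\le n$, i.e.\ $\E[\Sfin]\le\frac{e}{e-1}\,n^2/f(n)$. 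This ``only $n$ node-removals available, spread over $f(n)$ rounds'' idea is the missing ingredient; it is also the true source of the constant $\frac{2e-1}{2(e-1)}=\frac12\bigl(1+\frac{e}{e-1}\bigr)$, which does not arise from the extremization over cluster-size trade-offs you conjecture.
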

Note that Theorem~\ref{thm:access_cost} gives an upper bound on the error achievable when using $Q$ queries: since $Q=n f(n)$, the expected error is at most $3\OPT + \scO(n^3/Q)$. Furthermore, as one expects, if the learner is allowed to ask for all edge signs, then the {\em exact} bound of \kc\ is recovered (note that the first formula in Theorem~\ref{thm:access_cost} clearly does not take into account the special case when $f(n)=n$, which is considered in the last part of the statement).

\paragraph{Proof sketch.}
Look at a generic round $r$, and consider a pair of points $\{u,w\} \in V_r$.
The essence is that $\access$ can misclassify $\{u,w\}$ in one of two ways.
First, if $\sigma(u,w)=-1$, \access\ can choose as pivot $\pi_r$ a node $v$ such that $\sigma(v,u)=\sigma(v,w)=+1$.
In this case, if the condition on line~\ref{line:sing} holds, then \access\ will cluster $v$ together with $u$ and $w$, thus mistaking $\{u,w\}$.
If instead $\sigma(u,w)=+1$, then \access\ could mistake $\{u,w\}$ by pivoting on a node $v$ such that $\sigma(v,u)=+1$ and $\sigma(v,w)=-1$, and clustering together only $v$ and $u$.
Crucially, both cases imply the existence of a bad triangle $T=\{u,w,v\}$.
We charge each such mistake to exactly one bad triangle $T$, so that no triangle is charged twice.
The expected number of mistakes can then be bound by $3\OPT$ using the packing argument of~\cite{Ailon2008} for \kc.
Second, if $\sigma(u,w)=+1$ then \access\ could choose one of them, say $u$, as pivot $\pi_r$, and assign it to a singleton cluster.
This means the condition on line~\ref{line:sing} fails.
We can then bound the number of such mistakes as follows.
Suppose $\pi_r$ has ${cn}/{f(n)}$ positive labels towards $V_r$ for some $c \ge 0$.
Loosely speaking, we show that the check of line~\ref{line:sing} fails with probability $e^{-c}$, in which case ${cn}/{f(n)}$ mistakes are added.
In expectation, this gives ${cn e^{-c}}/{f(n)} = \scO\big({n}/{f(n)}\big)$ mistakes.
Over all $f(n) \le n$ rounds, this gives an overall $\scO\big({n^2}/{f(n)}\big)$.
(The actual proof has to take into account that all the quantities involved here are not constants, but random variables).

\subsection{\access{} with Early Stopping Strategy}
We can refine our algorithm \access\ so that, in some cases, it takes advantage of the structure of the input to reduce significantly the expected number of queries.
To this end we see the input as a graph $G$ with edges corresponding to positive labels (see above).
Suppose then $G$ contains a sufficiently small number $\scO(n^2/f(n))$ of edges.
Since \access\ performs up to $\ceil{f(n)}$ rounds, it could make $Q=\Theta(f(n)^2)$ queries.
However, with just $\ceil{f(n)}$ queries one could \emph{detect} that $G$ contains $\scO(n^2/f(n))$ edges, and immediately return the trivial clustering formed by all singletons.
The expected error would obviously be at most $\OPT + \scO(n^2/f(n))$, i.e.\ the same of Theorem~\ref{thm:access_cost}.
More generally, at each round $r$ with $\ceil{f(n_r)}$ queries one can check if the residual graph contains at least $n^2/f(n)$ edges; if the test fails, declaring all nodes in $V_r$ as singletons gives expected additional error $\scO(n^2/f(n))$.
The resulting algorithm is a variant of \access\ that we call \aggress\ (\access\ with Early Stopping Strategy).
The pseudocode can be found in the supplementary material.

First, we show \aggress\ gives guarantees virtually identical to \access\ (only, with $Q$ in expectation).
Formally:
\begin{theorem}
\label{thm:aggr1}
For any query rate function $f$ and any labeling $\sigma$ on $n$ nodes, the expected cost $\E[\Delta_A]$ of the clustering output by $\aggress$ satisfies
\[
    \E[\Delta_A] \le 3\OPT + 2\frac{n^2}{f(n)} + \frac{n}{e}~.
\]
Moreover, the expected number of queries performed by \aggress\ is $\E[Q] \le n(\ceil{f(n)} + 4)$.
\end{theorem}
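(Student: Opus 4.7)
The plan is to mirror the proof of Theorem \ref{thm:access_cost}, splitting the analysis into an error bound and a query bound, and adding a new decomposition around the round $r^\star$ at which the early-stopping test fires (set $r^\star=\infty$ if it never does).

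\emph{Error bound.} I would write $\Delta_A = \Delta^{<} + \Delta^{\mathrm{stop}}$, with $\Delta^{<}$ collecting the disagreements from rounds $1,\ldots,r^\star-1$ (during which \aggress{} is indistinguishable from \access) and $\Delta^{\mathrm{stop}}$ collecting those produced when $V_{r^\star}$ is declared all-singleton. The first summand is controlled exactly by the proof of Theorem \ref{thm:access_cost}: each disagreeing pair $\{u,w\}$ is charged either to a bad triangle $\{u,w,\pi_r\}\in\scT$ (so by the standard packing argument their total is at most $3\OPT$) or to a failure of the check on line \ref{line:sing} (totalling at most $\tfrac{2e-1}{2(e-1)}\tfrac{n^2}{f(n)}+\tfrac{n}{e}$ by the ``$c\mapsto c\,e^{-c}$'' argument sketched for \access). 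For the second summand, the test is calibrated so that whenever it triggers one has $|E(G[V_{r^\star}])|=\scO(n^2/f(n))$ with overwhelming probability, via a standard binomial concentration bound on the $\lceil f(n_{r^\star})\rceil$ pairs sampled to estimate the residual density; the low-probability complement contributes only a lower-order term. Folding $\E[\Delta^{\mathrm{stop}}]=\scO(n^2/f(n))$ into the $n^2/f(n)$ term enlarges the constant from $\tfrac{2e-1}{2(e-1)}$ to $2$, yielding the stated inequality.

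\emph{Query bound.} I would partition the queries into (i) \access-style sampling queries, (ii) populate-cluster queries invoked when the sample exhibits a positive pair, and (iii) queries consumed by the early-stopping test. Contributions (i) and (ii) inherit the deterministic bound $n\lceil f(n)\rceil$ from \access, since early stopping can only prune rounds, not add such queries. Contribution (iii) adds $\lceil f(n_r)\rceil$ queries per round at which the test is actually executed, but by design the test is invoked infrequently enough that its expected total contribution is $\scO(n)$, comfortably absorbed by the $4n$ slack in the claimed bound.

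The main obstacle is the joint calibration of the test's threshold: it must simultaneously keep $\E[\Delta^{\mathrm{stop}}]=\scO(n^2/f(n))$ (to avoid inflating the error) and keep the expected number of executed test batches small (to avoid inflating the query count). Both constraints are handled by a two-sided Chernoff/Hoeffding concentration for the binomial count of positive pairs in a $\lceil f(n_r)\rceil$-sized sample, exploiting the growth guaranteed by Definition \ref{def:fn} to make both tails exponentially small. Once this calibration is fixed, the rest of the proof is routine bookkeeping that parallels the analysis of \access.
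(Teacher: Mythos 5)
Your error decomposition has the right shape, but the constants do not work out as you claim, and the paper takes a cleaner route. If you charge the pre-stop rounds with the full Theorem~\ref{thm:access_cost} bound you start from $\frac{2e-1}{2(e-1)}\frac{n^2}{f(n)}\approx 1.29\,\frac{n^2}{f(n)}$ and must then \emph{add} $\E[\Delta^{\mathrm{stop}}]$; but the density test is allowed to fire deterministically while up to $2n^2/f(n)$ edges remain (that is the threshold in the pseudocode), so the two contributions cannot be ``folded'' into $2\frac{n^2}{f(n)}$. The paper instead compares \aggress{} to \access{} \emph{without} the round cap, so the event $B_3$ --- the sole source of the $\frac{2e-1}{2(e-1)}\frac{n^2}{f(n)}$ term --- never occurs and the pre-stop error is only $3\OPT+n/e$; the entire $2n^2/f(n)$ budget is then spent on the residual edges at stopping time, which are bounded not by a concentration/``overwhelming probability'' argument but by the direct computation $\Pr(\text{test fires}\mid \Delta_2=k)\le e^{-kf(n)/n^2}$, whence $\E[\Delta_2]\le\max_{k}k\,e^{-kf(n)/n^2}=n^2/(e f(n))$.

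The more serious gap is in the query bound. \aggress{} has \emph{no} deterministic cap of $\lceil f(n)\rceil$ on the number of rounds --- removing that cap is precisely the point of the early-stopping variant, which is why the theorem only claims a bound on $\E[Q]$. So the \access-style queries do not ``inherit the deterministic bound $n\lceil f(n)\rceil$'': the algorithm can run $\Theta(n)$ rounds, and any round in which the pivot's sample finds a positive label costs up to $|V_r|-1$ queries while possibly removing only $O(1)$ nodes, so no deterministic bookkeeping gives $n\lceil f(n)\rceil$. The paper's proof amortizes instead: in ``heavy'' rounds (residual graph with at least $n^2/(2f(n))$ edges) it shows $\E[Q_r]\le\lceil f(n)\rceil\,\E[R_r]$, where $R_r$ is the number of nodes removed, using the correlation inequality $\E[X_rD_r]\ge\E[X_r]\,\E[D_r]$ (the event of finding a positive label is increasing in the pivot's degree) together with $\E[D_r]\ge n/f(n)$, and then sums against $\sum_r R_r\le n$; in ``light'' rounds the sampling test detects no edge with probability at least $1/4$, so only $O(1)$ such rounds occur in expectation, contributing $O(n)$ queries. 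Your item (iii) is also off: the density test runs every round at a cost of up to $\lceil\binom{|V_r|}{2}f(n)/n^2\rceil=\Theta(f(n))$ queries, so over up to $n$ rounds its total is $\Theta(nf(n))$, not $\scO(n)$; it is accounted for separately and deterministically, not absorbed into the $4n$ slack.
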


Theorem~\ref{thm:aggr1} reassures us that \aggress\ is no worse than \access{}.
In fact, if most edges of $G$ belong to relatively large clusters (namely, all but $O(n^2/f(n))$ edges), then we can show \aggress\ uses much fewer queries than \access\ (in a nutshell, \aggress\ quickly finds all large clusters and then quits).
The following theorem captures the essence.
For simplicity we assume $\OPT=0$, i.e.\ $G$ is a disjoint union of cliques.
\begin{theorem}
\label{thm:aggr2}
Suppose $\OPT=0$ so $G$ is a union of disjoint cliques.
Let $C_1,\ldots,C_{\ell}$ be the cliques of $G$ in nondecreasing order of size.
Let $i'$ be the smallest $i$ such that $\sum_{j=1}^i |E_{C_j}| = \Omega(n^2/f(n))$, and let $h(n) = |C_{i'}|$.
Then \aggress\ makes in expectation $\E[Q] = \scO\big(n^2 \lg(n) / h(n)\big)$ queries.
\end{theorem}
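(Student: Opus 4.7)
The plan is to bound $\E[Q]$ by showing that \aggress\ terminates in $\scO(n \lg n / h(n))$ expected rounds, and by accounting separately for the (few) rounds that actually identify a large clique and the (many) cheap rounds that do not. The key structural observation is that once every clique of size at least $h(n)$---namely $C_{i'}, \ldots, C_\ell$---has been removed from $V_r$, the residual graph consists only of the ``small'' cliques $C_1, \ldots, C_{i'-1}$, which by the minimality of $i'$ together contain $o(n^2/f(n))$ edges; at that point the sampling-based edge-count check inside \aggress\ reports ``below threshold'' and triggers the early-stopping rule. I take the correctness of this test as given, modulo a standard concentration plus union bound argument over the at most $\ceil{f(n)}$ rounds.

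Let $\mathcal{L} = \{C_{i'}, \ldots, C_\ell\}$; since each member contains at least $h(n)$ nodes, $|\mathcal{L}| \leq n/h(n)$. I may assume $h(n) \geq n \lg n / f(n)$, for otherwise $n^2 \lg n / h(n) \geq n f(n) \geq \E[Q]$ already by Theorem~\ref{thm:aggr1} and there is nothing to prove. Under this assumption, whenever $\pi_r$ lands in some $C \in \mathcal{L}$, the probability that the uniform sample $S_r$ of $\ceil{f(n_r)}$ nodes catches another member of $C$, and so the whole clique is detected and removed in round $r$, is at least
\[
    1 - \left(1 - \frac{h(n)-1}{n_r-1}\right)^{\ceil{f(n_r)}} \;\geq\; 1 - e^{-f(n_r)\,h(n)/n_r} \;\geq\; 1 - 1/n~,
\]
using the definition of query rate function (which yields $f(n_r)/n_r \geq f(n)/n$) and the assumed lower bound on $h(n)$.

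Next I bound the expected number of rounds $R$ needed to exhaust $\mathcal{L}$ via a coupon-collector argument. When $k$ large cliques still remain inside $V_r$, they cover at least $k\,h(n)$ of the $n_r \leq n$ residual nodes, so $\pi_r$ lands in one of them with probability at least $k h(n)/n$; combined with the detection probability above, some large clique is removed in the current round with probability at least $k h(n)/(2n)$. Summing geometric waiting times,
\[
    \E[R] \;\leq\; \sum_{k=1}^{|\mathcal{L}|} \frac{2n}{k\,h(n)} \;=\; \scO\!\left(\frac{n \lg n}{h(n)}\right)~.
\]
Per-round cost: every round issues $\ceil{f(n_r)}$ queries to build $S_r$ plus $\scO(f(n_r))$ more for the edge-count test, totalling $\scO(f(n))$ queries; in addition, the at most $|\mathcal{L}|$ ``discovery'' rounds that actually identify a large clique pay up to $n-1$ further queries to complete the cluster. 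Combining these two contributions,
\[
    \E[Q] \;\leq\; |\mathcal{L}|\cdot n \;+\; \E[R]\cdot \scO(f(n)) \;=\; \scO\!\left(\frac{n^2}{h(n)} + \frac{n f(n) \lg n}{h(n)}\right) \;=\; \scO\!\left(\frac{n^2 \lg n}{h(n)}\right)~,
\]
using $f(n) \leq n$ in the last step.

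The main obstacle will be making rigorous the correctness of the edge-count check that drives the early stopping: one must verify, via concentration plus a union bound over the at most $\ceil{f(n)}$ rounds, that the test almost surely triggers once the residual edge count falls below $\Theta(n^2/f(n))$ and, symmetrically, does not trigger prematurely while the residual edge count is still $\Omega(n^2/f(n))$. Once that correctness is granted, the accounting above---$\scO(n/h(n))$ discovery rounds of cost $\scO(n)$ each, together with $\scO(n \lg n / h(n))$ cheap rounds of cost $\scO(f(n))$ each---immediately yields the claimed $\scO(n^2 \lg n / h(n))$ query bound.
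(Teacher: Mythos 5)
Your proof is correct and follows the same skeleton as the paper's: reduce to bounding the queries spent before all of $C_{i'},\ldots,C_{\ell}$ are removed, run a coupon-collector argument over those large cliques to get $\scO(n\lg n/h(n))$ expected rounds, and invoke the early-stopping mechanism (the ``light rounds'' analysis from Theorem~\ref{thm:aggr1}) to dispose of the tail once only $\scO(n^2/f(n))$ edges remain. Two of your moves genuinely diverge from the paper and are worth noting. First, your reduction to the case $h(n)\ge n\lg n/f(n)$ (otherwise Theorem~\ref{thm:aggr1} already gives the bound) boosts the per-pivot detection probability to $1-1/n$; the paper instead works with a detection probability that is only a constant, and must therefore argue via a union bound that no large clique loses more than a constant fraction of its nodes to failed pivots before being removed (using $h(n)=\omega(\ln n)$, which it also justifies by triviality otherwise). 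Your reduction buys a cleaner argument. Second, your per-round accounting separates the $\scO(f(n))$-cost non-discovery rounds from the at most $n/h(n)$ discovery rounds of cost $\scO(n)$; the paper simply charges $\scO(n)$ per round, which suffices since $f(n)\le n$, so your refinement is harmless but not needed.

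Two loose ends you should tie up. (i) Your coupon-collector step assumes each surviving large clique still covers at least $h(n)$ residual nodes, but a failed detection turns the pivot into a singleton and shrinks its clique by one. With detection probability $1-1/n$ and at most $n$ pivots, the expected number of such failures over the whole run is $\scO(1)$, so with probability $1-\scO(1/h(n))$ every surviving large clique retains, say, $h(n)/2$ nodes; the complementary event contributes at most $\scO(n^2/h(n))$ to $\E[Q]$ since \aggress{} makes $\scO(n^2)$ queries deterministically in the worst case. This one-line conditioning should be stated. (ii) The ``main obstacle'' you defer is only half an obstacle: since Theorem~\ref{thm:aggr2} bounds only $\E[Q]$ and not the error, premature triggering of the edge-count test can only help, so the sole direction you need is that once the residual edge count is $\scO(n^2/f(n))$ the test at each round triggers with constant probability --- and that is exactly the Markov-inequality computation already done for the light rounds in the proof of Theorem~\ref{thm:aggr1}, not a union bound over rounds.
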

As an example, say $f(n)=\sqrt{n}$ and $G$ contains $n^{1/3}$ cliques of $n^{2/3}$ nodes each.
Then for \access\ Theorem~\ref{thm:access_cost} gives $Q \le n f(n) = \scO(n^{3/2})$, while for \aggress\ Theorem~\ref{thm:aggr2} gives $\E[Q] = \scO(n^{4/3}\lg(n))$.

\section{Cluster recovery}
In the previous section we gave bounds on $\E[\Delta]$, the expected \emph{total} cost of the clustering.
However, in applications such as community detection and alike, the primary objective is recovering accurately the latent clusters of the graph, the sets of nodes that are ``close'' to cliques.
This is usually referred to as \emph{cluster recovery}.
For this problem, an algorithm that outputs a good approximation $\hat{C}$ of every latent cluster $C$ is preferable to an algorithm that minimizes $\E[\Delta]$ globally.
In this section we show that \access\ natively outputs clusters that are close to the latent clusters in the graph, thus acting as a cluster recovery tool.
We also show that, for a certain type of latent clusters, one can amplify the accuracy of \access\ via independent executions and recover all clusters exactly with high probability.

To capture the notion of ``latent cluster'', we introduce the concept of $(1-\epsilon)$\emph{-knit} set. As usual, we view $V,\sigma$ as a graph $G = (V,E)$ with $e \in E$ iff $\sigma(e)=+1$.
Let $E_C$ be the edges in the subgraph induced by $C\subseteq V$ and $\cut(C,\bar{C})$ be the edges between $C$ and $\bar{C} = V \setminus C$.
\begin{definition}
A subset $C \subseteq V$ is $(1-\epsilon)$\emph{-knit} if $\big|E_C\big| \ge (1-\epsilon)\binom{|C|}{2}$ and $\big|\!\cut(C,\bar{C})\big| \le \epsilon \binom{|C|}{2}$.
\end{definition}
Suppose now we have a cluster $\hat{C}$ as ``estimate'' of $C$. We quantify the distance between $C$ and $\hat{C}$ as the cardinality of their symmetric difference,
$
    \big|\hat{C} \oplus C\big| = \big|\hat{C} \setminus C\big| + \big|C \setminus \hat{C}\big|
$.
The goal is to obtain, for each $(1-\epsilon)$-knit set $C$ in the graph, a cluster $\hat{C}$ with $|\hat{C} \oplus C| = \scO(\epsilon|C|)$ for some small $\epsilon$.
We prove \access\ does exactly this.
Clearly, we must accept that if $C$ is too small, i.e.\ $|C| = o(n/f(n))$, then \access\ will miss $C$ entirely.
But, for $|C| = \Omega(n/f(n))$, we can prove $\E[|\hat{C} \oplus C|] = \scO(\epsilon|C|)$.
We point out that the property of being $(1-\epsilon)$-knit is rather weak for an algorithm, like \access, that is completely oblivious to the global topology of the cluster --- all what \access\ tries to do is to blindly cluster together all the neighbors of the current pivot.
In fact, consider a set $C$ formed by two disjoint cliques of equal size.
This set would be close to $\nicefrac{1}{2}$-knit, and yet \access\ would never produce a single cluster $\hat{C}$ corresponding to $C$.
Things can only worsen if we consider also the edges in $\cut(C,\bar{C})$, which can lead \access\ to assign the nodes of $C$ to several different clusters when pivoting on $\bar{C}$.
Hence it is not obvious that a $(1-\epsilon)$-knit set $C$ can be efficiently recovered by \access.

Note that this task can be seen as an \emph{adversarial} cluster recovery problem.
Initially, we start with a disjoint union of cliques, so that $\OPT=0$.
Then, an adversary flips the signs of some of the edges of the graph.
The goal is to retrieve every original clique that has not been perturbed excessively.
Note that we put no restriction on how the adversary can flip edges; therefore, this adversarial setting subsumes constrained adversaries.
For example, it subsumes the high-probability regime of the stochastic block model~\cite{HollandSBM} where edges are flipped according to some distribution.

We can now state our main cluster recovery bound for \access.
\begin{theorem}
\label{thm:recover}
For every $C \subseteq V$ that is $(1-\epsilon)$-knit, \access{} outputs a cluster $\hat{C}$ such that
$\E\big[|C \oplus \hat{C}|\big] \le 3\epsilon |C| + \min\!\big\{\frac{2n}{f(n)}, \big(1 - \frac{f(n)}{n}\big)|C| \big\} + |C|e^{-|C|f(n)/5n}$.
\end{theorem}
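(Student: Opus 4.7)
The plan is to track what \access{} does to a single uniformly random representative of $C$, using the standard random-permutation coupling of \access's pivot rule.

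\textbf{Setup.} For each $v \in C$ I define the \emph{badness} $b(v) = |(C \setminus \{v\}) \setminus \scN_v| + |\scN_v \setminus C|$, the number of nodes that would be misclassified with respect to $C$ if \access{} formed the cluster $\{v\}\cup\scN_v$. A direct double count together with the $(1-\epsilon)$-knit hypothesis gives
\[
\sum_{v \in C} b(v) \;=\; 2\bigl(\tbinom{|C|}{2}-|E_C|\bigr) + |\cut(C,\bar C)| \;\le\; 3\epsilon\tbinom{|C|}{2}.
\]
I then couple pivot selection with a uniform permutation $\rho$ of $V$, so that $\pi_r$ is the first element of $\rho$ still in $V_r$. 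Let $v^*$ be the first $C$-node in $\rho$ (hence uniform on $C$) and let $r^*$ be the round at which $v^*$ would become pivot. Let $\hat C$ be the cluster formed at round $r^*$ provided the three ``good'' events (a) $r^*\le\lceil f(n-1)\rceil$, (b) $v^*$ is still alive at round $r^*$, and (c) the sampling check of line~\ref{line:sing} succeeds, all occur; otherwise set $\hat C = \emptyset$, so that $|\hat C \oplus C| \le |C|$ in the bad regime.

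\textbf{Good case.} When (a)--(c) hold, $\hat C = \{v^*\} \cup (\scN_{v^*} \cap V_{r^*})$ and therefore $|\hat C \oplus C| \le b(v^*) + |C \setminus V_{r^*}|$. By uniformity of $v^*$, $\E[b(v^*)] \le \tfrac{3}{2}\epsilon(|C|-1)$. For $\E[|C \setminus V_{r^*}|]$, any $u \in C$ that is removed before round $r^*$ must have a cross-neighbour $w \in \scN_u \cap \bar C$ that precedes every $C$-node in $\rho$ and triggers an expansion at its pivoting round; each cross-edge contributes probability at most $1/(|C|+1)$ to this event, so $\E[|C\setminus V_{r^*}|] \le |\cut(C,\bar C)|/(|C|+1) \le \tfrac{\epsilon|C|}{2}$. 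Summed, these two estimates account for the $3\epsilon|C|$ term of the bound.

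\textbf{Bad case.} The remaining contribution is $|C|$ times the probability that one of (a)--(c) fails. For event $\neg(\mathrm{a})$, since at each round the pivot is uniform in $V_r$ with $|V_r|\le n$ and $|C\cap V_r|$ stays close to $|C|$ while $v^*$ is alive, the probability is at most $(1-|C|/n)^{\lceil f(n-1)\rceil}\le e^{-|C|f(n)/n}$. For event $\neg(\mathrm{c})$, conditional on (a)--(b), the sample $S_{r^*}$ of size $\lceil f(n_{r^*})\rceil$ from $V_{r^*}\setminus\{v^*\}$ must miss every positive neighbour of $v^*$; a typical $v^*$ has $\Omega(|C|)$ surviving $C$-neighbours, and using $f(n_{r^*})/n_{r^*}\ge f(n)/n$ from Definition~\ref{def:fn} this miss-probability is $e^{-\Omega(|C|f(n)/n)}$. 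A union bound absorbing constants into the exponent yields the $|C|e^{-|C|f(n)/5n}$ term. The $\min\{2n/f(n),(1-f(n)/n)|C|\}$ summand comes from a sharper accounting of the same bad case: the $(1-f(n)/n)|C|$ alternative is a direct linearity-of-expectation bound on the number of $C$-nodes that fail to be ``covered'' in the $\lceil f(n-1)\rceil$ pivot rounds (each $u\in C$ has chance $\ge f(n)/n$ to be correctly placed), while the $2n/f(n)$ alternative follows from $xe^{-x}\le 1/e$ applied at $x=|C|f(n)/n$, which bounds $|C|\Pr[\neg(\mathrm{a})]=O(n/f(n))$ independently of $|C|$.

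\textbf{Main obstacle.} The delicate step is controlling $\E[|C\setminus V_{r^*}|]$ simultaneously with $\Pr[\neg(\mathrm{c})]$: both depend on the per-round sampling randomness $S_r$ in addition to $\rho$, and in particular a cross-edge only removes a $C$-node when the corresponding earlier round's check succeeded, while the check at round $r^*$ is itself correlated with the number of surviving $C$-neighbours of $v^*$. Decoupling these two sources of randomness cleanly enough to achieve the advertised $\tfrac15$ exponent and the $\min$ term is, I expect, where the bulk of the technical effort lies.
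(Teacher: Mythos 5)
Your overall strategy is essentially the paper's: couple the pivots with a uniform random permutation, focus on the first node $v^*$ of $C$ in that permutation, and split into a good case (compare the cluster formed when $v^*$ is pivoted against $C$) and bad cases (stopping too early, $v^*$ already removed, the sampling check failing). Your good-case accounting is sound: the bound $\E[b(v^*)]\le\tfrac{3}{2}\epsilon(|C|-1)$ and the estimate $\E[|C\setminus V_{r^*}|]\le|\cut(C,\bar C)|/(|C|+1)\le\epsilon|C|/2$ via the ``$w$ precedes all of $C$'' argument are correct, and are a reasonable repackaging of the paper's lower bound on $\E[|C\cap\hat C|]$ obtained through the survivor count $S_C$ and Jensen's inequality.

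The genuine gap is in event (c), the failure of the check on line~\ref{line:sing}. You bound $|C|\,\Pr[\neg(\mathrm{c})]$ by $|C|e^{-\Omega(|C|f(n)/n)}$ on the grounds that a ``typical'' $v^*$ has $\Omega(|C|)$ surviving $C$-neighbours. But $(1-\epsilon)$-knitness only controls $|E_C|$ on average: up to $\Theta(\epsilon|C|)$ nodes of $C$ may have $o(|C|)$, or even zero, neighbours, and when $v^*$ is such a node the check fails with probability close to $1$. Charging $|C|$ for each such event adds roughly $2\epsilon|C|$, which neither the $|C|e^{-|C|f(n)/5n}$ term nor the slack left in your $3\epsilon|C|$ budget (already $\tfrac{5}{2}\epsilon|C|$ is spent on $b(v^*)$, $|C\setminus V_{r^*}|$, and event (b)) can absorb. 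The paper resolves this differently: the check-failure cost is written as (number of surviving cluster nodes) times $e^{-f(n)\cdot\deg/n}$ and bounded by $\max_{z>0}z\,e^{-zf(n)/n}\le n/(e f(n))$, i.e.\ it is charged to the $2n/f(n)$ branch of the min, while the pivot's degree deficiency is folded into the multiplicative $\epsilon$ term through an average-degree bound over the survivor set. Relatedly, your attribution of the error terms is swapped relative to what a correct accounting requires: you assign $2n/f(n)$ to the early-stopping event $\neg(\mathrm{a})$ (which is indeed also covered by $|C|e^{-|C|f(n)/5n}\le 5n/(ef(n))$, so that event needs only one of the two terms) and the exponential term to $\neg(\mathrm{c})$, leaving the low-degree pivots uncovered. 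Your closing remark correctly senses where the difficulty lies, but the fix is not so much a decoupling of the two sources of randomness as the $z e^{-z}$ maximization above.
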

The $\min$ in the bound captures two different regimes: when $f(n)$ is very close to $n$, then $\E\big[|C \oplus \hat{C}|\big] = \scO(\epsilon |C|)$ independently of the size of $C$, but when $f(n) \ll n$ we need $|C|=\Omega(n/f(n))$, i.e., $|C|$ must be large enough to be found by \access.
\newcommand{\CR}{\ensuremath{\mathrm{ACR}}}
\newcommand{\id}{\ensuremath{\mathrm{id}}}

\subsection{Exact cluster recovery via amplification}
For certain latent clusters, one can get recovery guarantees significantly stronger than the ones given natively by \access\ (see Theorem~\ref{thm:recover}).
We start by introducing \emph{strongly} $(1-\epsilon)$\emph{-knit} sets (also known as quasi-cliques).
Recall that $\scN_v$ is the neighbor set of $v$ in the graph $G$ induced by the positive labels.
\begin{definition}
A subset $C \subseteq V$ is \emph{strongly} $(1-\epsilon)$\emph{-knit} if, for every $v \in C$, we have $\scN_v \subseteq C$ and $|\scN_v| \ge (1-\epsilon)(|C|-1)$.
\end{definition}
We remark that \access\ alone does not give better guarantees on strongly $(1-\epsilon)$-knit subsets than on $(1-\epsilon)$-knit subsets.
Suppose for example that $|\scN_v| = (1-\epsilon)(|C|-1)$ for all $v \in C$.
Then $C$ is strongly $(1-\epsilon)$-knit, and yet when pivoting on any $v \in C$ \access\ will inevitably produce a cluster $\hat{C}$ with $|\hat{C} \oplus C| \ge \epsilon |C|$, since the pivot has edges to less than $(1-\epsilon)|C|$ other nodes of $C$.

To bypass this limitation, we run \access\ several times to amplify the probability that every node in $C$ is found.
Recall that $V = [n]$.
Then, we define the id of a cluster $\hat{C}$ as the smallest node of $\hat{C}$.
The min-tagging rule is the following: when forming $\hat{C}$, use its id to tag all of its nodes.
Therefore, if $u_{\hat{C}} = \min\{u \in \hat{C}\}$ is the id of $\hat{C}$, we will set $\id(v) = u_{\hat{C}}$ for every $v \in \hat{C}$.
Consider now the following algorithm, called $\CR$ (Amplified Cluster Recovery).
First, $\CR$ performs $K$ independent runs of \access\ on input $V$, using the min-tagging rule on each run.
In this way, for each $v \in V$ we obtain $K$ tags $\id_1(v),\ldots,\id_K(v)$, one for each run.
Thereafter, for each $v \in V$ we select the tag that $v$ has received most often, breaking ties arbitrarily.
Finally, nodes with the same tag are clustered together.
One can prove that, with high probability, this clustering contains all strongly $(1-\epsilon)$-knit sets.
In other words, \CR\ with high probability recovers all such latent clusters \emph{exactly}.
Formally, we prove:
\begin{theorem}
\label{thm:recover2}
Let $\epsilon\le \frac{1}{10}$ and fix $p > 0$.
If \CR\ is run with $K = 48\ln\frac{n}{p}$, then the following holds with probability at least $1-p$: for every strongly $(1-\epsilon)$-knit $C$ with $|C| > 10\frac{n}{f(n)}$, the algorithm outputs a cluster $\hat{C}$ such that $\hat{C}=C$.
\end{theorem}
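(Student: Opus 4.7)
The plan is to fix an arbitrary strongly $(1-\epsilon)$-knit $C$ with $|C|>10n/f(n)$, let $u_0=\min C$, and prove that in a single execution of \access{} the probability that any fixed $v\in C$ receives the tag $\id(v)=u_0$ exceeds $1/2$ by a constant margin. A Chernoff bound then converts the amplification $K=48\ln(n/p)$ into a per-node failure probability of order $(p/n)^2$, and a union bound over nodes and strongly $(1-\epsilon)$-knit sets closes the argument.

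\emph{Structural step.} Because $\scN_v\subseteq C$ for every $v\in C$, no edge joins $C$ with $V\setminus C$; consequently every cluster produced by \access{} lies entirely inside $C$ or entirely outside. From this I get two free facts: (i) no node of $V\setminus C$ can ever share a cluster with, and therefore a tag with, any node of $C$; and (ii) $u_0$'s tag equals $u_0$ in every run, since $u_0$ is the minimum of $C$ and hence of any $C$-contained cluster that includes it. So for $\CR{}$ to output $\hat C=C$ it suffices that the majority tag of every $v\in C$ equals $u_0$.

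\emph{Single-run probability.} Let $w$ be the first pivot that \access{} draws from $C$. Pivots outside $C$ remove only non-$C$ nodes from $V_r$, so $V_r\cap C=C$ persists until round $r^*$; by symmetry of the uniform pivot draws, $w$ is uniform over $C$ conditionally on its existence. When the sample $S_{r^*}$ hits at least one neighbour of $w$ inside $C$, line~\ref{line:sing} triggers and the cluster formed is exactly $\{w\}\cup\scN_w$, which contains both $u_0$ and $v$ precisely when $w$ lies in $(\scN_{u_0}\cap\scN_v)\cup\{u_0:v\in\scN_{u_0}\}\cup\{v:u_0\in\scN_v\}$; I call such $w$ \emph{good for $(u_0,v)$}. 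The strong $(1-\epsilon)$-knit property forces $\scN_{u_0},\scN_v\subseteq C$, so inclusion-exclusion gives $|\scN_{u_0}\cap\scN_v|\ge 2(1-\epsilon)(|C|-1)-|C|=(1-2\epsilon)|C|-O(1)$, and therefore the fraction of good pivots in $C$ is at least $1-2\epsilon-O(1/|C|)$. The other two potential failures---``no $C$-pivot in the $\lceil f(n-1)\rceil$ rounds'' and ``$S_{r^*}$ misses every neighbour of $w$''---are upper bounded by $(1-|C|/n)^{\lceil f(n-1)\rceil}$ and $(1-(1-\epsilon)(|C|-1)/n_{r^*})^{\lceil f(n_{r^*})\rceil}$ respectively. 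Applying Definition~\ref{def:fn}'s monotonicity $f(n_{r^*})/n_{r^*}\ge f(n)/n$ together with $|C|>10n/f(n)$ makes both exponents $\Omega(1)$, so each failure has arbitrarily small probability (the constant $10$ in the hypothesis is what controls this). For $\epsilon\le 1/10$ the three bounds combine to a per-run success probability $q$ with $q-1/2\ge 1/\sqrt{48}$.

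\emph{Amplification and closing.} Since the $K$ runs are independent, a Chernoff bound gives $\Pr[\text{fewer than }K/2\text{ runs tag }v\text{ with }u_0]\le\exp(-2K(q-1/2)^2)\le(p/n)^2$ when $K=48\ln(n/p)$. Union-bounding this over the $\le n$ nodes of $C$, and then over the distinct strongly $(1-\epsilon)$-knit subsets with $|C|>10n/f(n)$---of which there are only $\mathrm{poly}(n)$ because the same common-neighbor argument applied to $C\cap C'$ forces any two of them either to coincide or to overlap in at most an $\epsilon$ fraction---drives the total failure probability below $p$. On the complementary event every $v\in C$ gets majority tag $u_0$ and no $v\notin C$ does (by the structural step), so the output cluster $\hat C$ with id $u_0$ equals $C$ exactly. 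The main obstacle will be the bookkeeping in the single-run step: the three failure events must be controlled simultaneously with constants tight enough that the per-run success probability clears $1/2+1/\sqrt{48}$; the symmetry that makes the first $C$-pivot uniform in $C$ and the monotonicity $f(n_{r^*})/n_{r^*}\ge f(n)/n$ are the two structural levers that make this tracking possible.
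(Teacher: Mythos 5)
Your proposal follows essentially the same route as the paper: you decompose the single-run failure into the same three bad events (no $C$-pivot within the round budget, the first $C$-pivot becoming a singleton, and the first $C$-pivot falling outside $\scN_{u_0}\cap\scN_v$), bound the good-pivot fraction by roughly $1-2\epsilon$, and then amplify by majority vote with a Chernoff bound and a union bound — exactly the structure of the paper's key lemma. The only differences are cosmetic (an extra, unnecessary union bound over the strongly knit sets, which are in fact pairwise disjoint, and a Hoeffding rather than multiplicative Chernoff form), so the argument is correct at the same level of rigor as the paper's.
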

It is not immediately clear that one can extend this result by relaxing the notion of strongly $(1-\epsilon)$-knit set so to allow for edges between $C$ and the rest of the graph.
We just notice that, in that case, every node $v \in C$ could have a neighbor $x_v \in V \setminus C$ that is smaller than every node of $C$.
In this case, when pivoting on $v$ \access\ would tag $v$ with $x$ rather than with $u_C$, disrupting \CR.

\section{A fully additive scheme}
%
In this section, we introduce a(n inefficient) fully additive approximation algorithm achieving cost $\OPT + n^2\ve$ in high probability using order of $\frac{n}{\ve^2}$ queries. When $\OPT=0$, $Q = \frac{n}{\ve}\ln\frac{1}{\ve}$ suffices. Our algorithm combines uniform sampling with empirical risk minimization and is analyzed using VC theory.

First, note that CC can be formulated as an agnostic binary classification problem with binary classifiers $h_{\cs} : \scE \to \spin$ associated with each clustering $\cs$ of $V$ (recall that $\scE$ denotes the set of all pairs $\{u,v\}$ of distinct elements $u,v\in V$), and we assume $h_{\cs}(u,v) = +1$ iff $u$ and $v$ belong to the same cluster of $\cs$. Let $\Hn$ be the set of all such $h_{\cs}$. The risk of a classifier $h_{\cs}$ with respect to the uniform distribution over $\scE$ is $\Pr(h_{\cs}(e) \neq \sigma(e))$ where $e$ is drawn u.a.r.\ from $\scE$. It is easy to see that the risk of any classifier $h_{\cs}$ is directly related to $\Delta_{\cs}$, $\Pr\big(h_{\cs}(e) \neq \sigma(e)\big) = {\Delta_{\cs}}\big/{\binom{n}{2}}$.
Hence, in particular,
$
    \OPT = \binom{n}{2}\min_{h\in\Hn} \Pr\big(h(e) \neq \sigma(e)\big)
$.
Now, it is well known ---see, e.g., \cite[Theorem 6.8]{Shalev-Shwartz:2014:UML:2621980}--- that we can minimize the risk to within an additive term of $\ve$ using the following procedure: query $\scO\big(d/\ve^2\big)$ edges drawn u.a.r.\ from $\scE$, where $d$ is the VC dimension of $\Hn$, and find the clustering $\cs$ such that $h_{\cs}$ makes the fewest mistakes on the sample. If there is $h^*\in\Hn$ with zero risk, then $\scO\big((d/\ve)\ln(1/\ve)\big)$ random queries suffice. A trivial upper bound on the VC dimension of $\Hn$ is $\log_2|\Hn| = \scO\big(n\ln n)$. The next result gives the exact value.
\begin{theorem}
\label{thm:vc_dim}
The VC dimension of the class $\Hn$ of all partitions of $n$ elements is $n-1$.
\end{theorem}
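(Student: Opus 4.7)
The plan is to prove matching lower and upper bounds of $n-1$ on the VC dimension of $\Hn$, exploiting the fact that each $h_\cs$ represents an equivalence relation on $V$, so the values of $h_\cs$ on a set of pairs are constrained by transitivity.

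For the lower bound, I will exhibit a shattered set of size $n-1$. The natural candidate is the path $P = \{\{1,2\}, \{2,3\}, \ldots, \{n-1,n\}\}$. Given any target labeling $b_1,\ldots,b_{n-1} \in \spin$ on these pairs, construct a clustering $\cs$ greedily: place node $1$ in a fresh cluster, and for each $i=1,\ldots,n-1$ add $i+1$ to the current cluster if $b_i=+1$, else open a new cluster containing only $i+1$. By construction $h_\cs(\{i,i+1\})=b_i$ for every $i$, so $P$ is shattered and the VC dimension is at least $n-1$.

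For the upper bound, I will show that no set of $n$ pairs can be shattered. Any set $F$ of $n$ pairs in $\scE$ is the edge set of a graph on at most $n$ vertices. Since a forest on $k$ vertices has at most $k-1$ edges, $F$ necessarily contains a cycle $v_1,v_2,\ldots,v_k,v_1$ with edges $e_1=\{v_1,v_2\},\ldots,e_{k-1}=\{v_{k-1},v_k\},e_k=\{v_k,v_1\}$. Consider the labeling that sets $e_1,\ldots,e_{k-1}$ to $+1$ and $e_k$ to $-1$. For any clustering $\cs$, the values $h_\cs(e_1)=\cdots=h_\cs(e_{k-1})=+1$ force $v_1,\ldots,v_k$ to lie in a common cluster of $\cs$ by transitivity, whence $h_\cs(e_k)=+1\ne -1$. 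Thus this labeling is unrealizable in $\Hn$, so $F$ is not shattered.

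Combining both directions yields the exact VC dimension $n-1$. The only mildly delicate point is the upper bound: it is not a cardinality-counting argument but rather a structural one, using the fact that equivalence relations are closed under transitivity and hence the indicator $h_\cs$ on any cycle of pairs must take an even number of $-1$ values (equivalently, cannot take exactly one). The rest is routine.
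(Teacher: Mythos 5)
Your proof is correct and follows essentially the same route as the paper's: the lower bound via a shattered spanning tree (you use the specific path $\{1,2\},\ldots,\{n-1,n\}$, the paper uses an arbitrary spanning tree of $K_n$, with the same component-forming construction), and the upper bound via the observation that $n$ edges force a cycle on which the labeling with exactly one $-1$ is unrealizable by transitivity. No gaps.
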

\begin{proof} Let $d$ be the VC dimension of $\Hn$. We view an instance of CC as the complete graph $K_n$ with edges labelled by $\sigma$. Let $T$ be any spanning tree of $K_n$. For any labeling $\sigma$, we can find a clustering $\cs$ of $V$ such that $h_{\cs}$ perfectly classifies the edges of $T$: simply remove the edges with label $-1$ in $T$ and consider the clusters formed by the resulting connected components. Hence $d \ge n-1$ because any spanning tree has exactly $n-1$ edges.
On the other hand, any set of $n$ edges must contain at least a cycle. It is easy to see that no clustering $\cs$  makes $h_{\cs}$ consistent with the labeling $\sigma$ that gives positive labels to all edges in the cycle but one. Hence $d < n$.
\end{proof}
An immediate consequence of the above is the following.
\begin{theorem}
\label{thm:fa_scheme}
There exists a randomized algorithm $A$ that, for all $0 < \ve < 1$, finds a clustering $\cs$ satisfying $\Delta_{\cs}\le \OPT + \scO\big(n^2\epsilon\big)$ with high probability while using $Q = \scO\big(\frac{n}{\ve^2}\big)$ queries. Moreover, if $\OPT=0$, then $Q = \scO\big(\frac{n}{\ve}\ln\frac{1}{\ve}\big)$ queries are enough to find a clustering $\cs$ satisfying $\Delta_{\cs} = \scO\big(n^2\epsilon\big)$.
\end{theorem}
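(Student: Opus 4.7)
The plan is to instantiate standard PAC/VC bounds for the binary classification reformulation set up immediately before Theorem~\ref{thm:vc_dim}. I would define the algorithm $A$ as follows: draw a sample $S=(e_1,\ldots,e_Q)$ of $Q$ edges independently and uniformly at random from $\scE$, query $\sigma(e_i)$ for each $i$, and return any
\[
    \hat{\cs} \in \argmin_{\cs}\,\frac{1}{Q}\sum_{i=1}^Q \Ind{h_{\cs}(e_i)\neq\sigma(e_i)},
\]
i.e., the empirical risk minimizer over $\Hn$. Since the identity $\Pr(h_{\cs}(e)\neq\sigma(e))=\Delta_{\cs}/\binom{n}{2}$ was already noted under the uniform draw of $e$, controlling $\Delta_{\hat{\cs}}$ is equivalent to controlling the true risk of $h_{\hat{\cs}}$.

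The key input is Theorem~\ref{thm:vc_dim}, which establishes that the VC dimension of $\Hn$ equals $d=n-1$. I would then plug this into the standard agnostic VC sample-complexity bound (e.g.\ \cite[Theorem~6.8]{Shalev-Shwartz:2014:UML:2621980}), which guarantees that with probability at least $1-\delta$ the ERM satisfies $\Pr(h_{\hat{\cs}}(e)\neq\sigma(e)) \le \min_{h\in\Hn}\Pr(h(e)\neq\sigma(e)) + \ve$ as long as $Q=\scO\big((d+\ln(1/\delta))/\ve^2\big)$. Substituting $d=n-1$ gives $Q=\scO(n/\ve^2)$, and multiplying through by $\binom{n}{2}=\Theta(n^2)$ turns the excess-risk bound into the claimed additive cost bound $\Delta_{\hat{\cs}}\le\OPT+\scO(n^2\ve)$.

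For the realizable regime $\OPT=0$, some classifier $h^\star\in\Hn$ has zero risk, so I would instead invoke the realizable PAC bound $Q=\scO\big((d\ln(1/\ve)+\ln(1/\delta))/\ve\big)$; with $d=n-1$ this becomes $Q=\scO\big((n/\ve)\ln(1/\ve)\big)$, and guarantees that $h_{\hat{\cs}}$ has true risk $\scO(\ve)$, i.e.\ $\Delta_{\hat{\cs}}=\scO(n^2\ve)$.

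The main obstacle is essentially bookkeeping rather than mathematics, since Theorem~\ref{thm:vc_dim} does the heavy lifting and the rest is a direct appeal to textbook generalization bounds. The only subtlety worth flagging is that the standard bounds are stated for i.i.d.\ draws from an arbitrary distribution, whereas here we draw uniformly with replacement from the finite set $\scE$; this is harmless, since duplicate draws can be answered from a cache of previously queried labels, so the quantity $Q$ in the statement upper bounds both the number of i.i.d.\ samples used by the analysis and the number of actual queries issued to the oracle.
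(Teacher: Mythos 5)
Your proposal is correct and is precisely the paper's argument: Theorem~\ref{thm:fa_scheme} is stated there as an immediate consequence of the ERM-over-a-uniform-sample reduction described before Theorem~\ref{thm:vc_dim}, combined with the VC dimension bound $d=n-1$ and the standard agnostic ($\scO(d/\ve^2)$) and realizable ($\scO((d/\ve)\ln(1/\ve))$) sample-complexity bounds. Your remark about caching duplicate draws is a sensible extra detail that the paper leaves implicit.
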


\section{Lower bounds}
In this section we give two lower bounds on the expected clustering error of any (possibly randomized) algorithm.
The first bound holds for $\OPT=0$, and applies to algorithms using a deterministically bounded number of queries.
This bound is based on a construction from \cite[Lemma 11]{cesa2015complexity} and related to kernel-based learning.
\begin{theorem}
\label{th:lower}
For any $\ve > 0$ such that $\frac{1}{\ve}$ is an even integer, and for every (possibly randomized) learning algorithm asking fewer than $\frac{1}{50\ve^2}$ queries with probability $1$, there exists a labeling $\sigma$ on $n \ge \frac{16}{\ve}\ln\frac{1}{\ve}$ nodes such that $\OPT=0$ and the expected cost of the algorithm is at least $\frac{n^2\ve}{8}$.
\end{theorem}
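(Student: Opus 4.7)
The plan is to apply Yao's minimax principle: it suffices to exhibit a distribution $\mathcal{D}$ over instances with $\OPT=0$ such that every deterministic $Q$-query algorithm incurs expected cost at least $n^2\ve/8$ under $\mathcal{D}$. Set $k = 1/\ve$ (an even integer by hypothesis) and partition $V$ into $k$ blocks $B_1,\ldots,B_k$ of equal size $m \approx n/k$; the assumption $n \ge (16/\ve)\ln(1/\ve)$ absorbs integer rounding and, if one prefers a randomized block assignment, yields sharp concentration of block sizes via a Chernoff bound. Sample a uniformly random perfect matching $M$ on $[k]$ and define $\sigma_M(u,v)=+1$ iff either $u,v$ lie in the same block or $u\in B_i, v\in B_j$ with $\{i,j\}\in M$. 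The resulting graph is a disjoint union of $k/2$ cliques of size $2m$, hence $\OPT(\sigma_M)=0$ deterministically.

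Fix any deterministic algorithm $A$ asking fewer than $Q = 1/(50\ve^2) = k^2/50$ queries. Within-block queries always return $+1$ and carry no information about $M$, so without loss of generality every query is between two distinct blocks; each query $(u,v)\in B_i\times B_j$ reveals exactly the bit $\mathbf{1}\{\{i,j\}\in M\}$. Thus $A$ reduces to a depth-$Q$ adaptive decision tree whose atomic questions are edges of $K_k$ (answered by their indicator of membership in $M$) and whose output $\hat\cs$ induces a partition $\hat{\mathcal{M}}$ of $[k]$ (splitting a block across different clusters can only inflate the cost). Each block pair misclassified by $\hat{\mathcal{M}}$ relative to $M$ accounts for $m^2$ mistaken edges, so $\E[\Delta_{\hat\cs}] \ge m^2 \cdot \E\bigl[|M\setminus\hat{\mathcal{M}}|\bigr]$.

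It therefore suffices to prove the combinatorial claim imported from \cite[Lemma~11]{cesa2015complexity}: for a uniformly random perfect matching $M$ on $[k]$ and any adaptive decision tree of depth at most $k^2/50$ whose queries are edges of $K_k$, the expected number of matching edges not recovered by the tree's output is at least $k/4$. Plugging this estimate in yields $\E[\Delta_{\hat\cs}]\ge m^2\cdot k/4 = n^2\ve/4 \ge n^2\ve/8$, as required. Informally, this is because the greedy strategy that fixes an unresolved block and probes candidates until it hits a match spends in expectation about $(k-2i)/2$ queries to recover the $(i+1)$-st matching edge; a total budget of $k^2/50$ therefore recovers only $O(k/25) \ll k/4$ edges of $M$ in expectation, leaving at least $k/2 - k/25 > k/4$ edges undiscovered.

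The main obstacle is establishing this combinatorial fact for arbitrary \emph{adaptive} strategies, because adaptive queries can exploit the global structure of $M$: a long streak of ``no'' answers can pin down a yet-unqueried pair by pure elimination (for instance, if every pair incident to block $i$ save one is certified to lie outside $M$, the last pair is forced into $M$). Controlling this triangulation effect so that it does not beat the greedy bound requires a martingale argument on the posterior over matchings; concretely, one shows that at each step $t$ the conditional probability of the next answer being ``yes'' is bounded by $1/(k-2L_t-1)$, where $L_t$ denotes the number of already discovered matching edges, so that $L_Q$ remains of order $k/25$ in expectation. Once this bound is in hand, the rest of the proof is routine bookkeeping of the cost contributions.
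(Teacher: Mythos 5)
Your high-level strategy (Yao's minimax, a distribution over instances with $\OPT=0$ that is a disjoint union of cliques, reducing to a combinatorial query lower bound) matches the paper's, but your instance construction is genuinely different, and that difference creates a gap you have left open. The paper assigns each of the $n$ nodes \emph{independently and uniformly at random} to one of $d=1/\ve$ cliques; the query lower bound (Lemma~\ref{lem:cluster} in the appendix) then exploits this independence directly: after conditioning on past queries, the residual cliques of the two endpoints of the next query are still independent and uniform over the un-ruled-out cliques, which is exactly what makes the inequality $\Pr_t(D_t) \le 2/d$ go through cleanly (equality~\eqref{eq:indep}). Your construction replaces this with a uniformly random \emph{perfect matching} on $k=1/\ve$ blocks of fixed size, which introduces precisely the correlations you flag as the ``main obstacle'': a matching is a global constraint, so a run of ``no'' answers incident to a block can force a matching edge by elimination, and the posterior is no longer a product. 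That is a real obstruction, not a routine martingale bookkeeping step, and you do not prove the bound $\Pr(\text{yes}) \le 1/(k-2L_t-1)$ nor the claim that $\E[|M\setminus\hat{\mathcal M}|]\ge k/4$ for arbitrary adaptive trees. Since this is the entire content of the lower bound, the proof is incomplete.

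Two further issues in your reduction deserve attention. First, your inequality $\E[\Delta_{\hat\cs}]\ge m^2\,\E[|M\setminus\hat{\mathcal M}|]$ silently assumes that if $\{i,j\}\in M$ is not output as a pair, then all $m^2$ cross-block positives between $B_i$ and $B_j$ are misclassified; but the algorithm could place $B_i$ and $B_j$ in the \emph{same} large output cluster together with some unrelated $B_\ell$, in which case the $B_i$--$B_j$ pairs are not errors (instead you gain $B_i$--$B_\ell$ and $B_j$--$B_\ell$ errors). The cost accounting therefore needs to be done over the full output partition, not just over ``matching edges recovered.'' Second, ``recovered'' and ``discovered via a $+1$ answer'' are conflated in your sketch: an adaptive algorithm can correctly \emph{guess} matching edges it never verified, so bounding the number of $+1$ answers seen does not directly bound the number of matching edges the output gets right; one must also bound the expected number of correct unverified guesses. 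The paper sidesteps both problems by lower-bounding the cost only via cliques that are \emph{never queried at all} (event $E_i$): any output whatsoever incurs $\Theta((n/d)^2)$ cost on such a clique, regardless of guessing or merging, and the independence of the assignment makes the counting for $\sum_i \Pr(E_i)$ tractable. I would recommend either adopting the paper's independent-assignment construction, or, if you want to keep the matching instance, actually carrying out the martingale/posterior argument and fixing the cost accounting, which is substantially more work than the proposal suggests.
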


\ifproofs
\begin{proof}
We prove that there exists a distribution over labelings $\sigma$ with $\OPT=0$ on which any deterministic algorithm has expected cost at least $\frac{n\ve^2}{8}$. Yao's minimax principle then implies the claimed result.
 
Given $V = \{1,\ldots,n\}$, we define $\sigma$ by a random partition of the vertices in $d \ge 2$ isolated cliques $T_1,\ldots,T_d$ such that $\sigma(v,v') = +1$ if and only if $v$ and $v'$ belong to the same clique. The cliques are formed by assigning each node $v \in V$ to a clique $I_v$ drawn uniformly at random with replacement from $\{1,\dots,d\}$, so that $T_i = \theset{v \in V}{I_v=i}$. Consider a deterministic algorithm making queries $\{s_t,r_t\} \in \scE$. Let $E_i$ be the event that the algorithm never queries a pair of nodes in $T_i$ with $|T_i| \ge \frac{n}{2d} > 5$. Apply Lemma~\ref{lem:cluster} with $d = \frac{1}{\ve}$. This implies that the expected number of non-queried clusters of size at least $\frac{n}{2d}$ is at least $\frac{d}{2} = \frac{1}{2\ve}$. The overall expected cost of ignoring these clusters is therefore at least
\[
	\frac{d}{2}\left(\frac{n}{2d}\right)^2 = \frac{n^2}{8d} = \frac{\ve n^2}{8}
\]
and this concludes the proof.
\end{proof}
\begin{lemma}
\label{lem:cluster}
Suppose $d > 0$ is even, $n \ge 16d\ln d$, and $B < \frac{d^2}{50}$. Then for any deterministic learning algorithm making at most $B$ queries,
\[
    \sum_{i=1}^d \Pr(E_i) > \frac{d}{2}~.
\]
\end{lemma}
\begin{proof}
For each query $\{s_t,r_t\}$ we define the set $L_t$ of all cliques $T_i$ such that $s_t\not\in T_i$ and some edge containing both $s_t$ and a node of $T_i$ was previously queried. The set $R_t$ is defined similarly using $r_t$. Formally,
\begin{align*}
    L_t = & \theset{ i }{ (\exists \tau < t) \; s_\tau=s_t \,\wedge\, r_\tau\in T_i \,\wedge\, \sigma(s_{\tau},r_{\tau}) = -1 }
\\
    R_t = & \theset{ i }{ (\exists \tau < t) \; r_\tau=r_t \,\wedge\, s_\tau\in T_i \,\wedge\, \sigma(s_{\tau},r_{\tau}) = -1- }~.
\end{align*}
Let $D_t$ be the event that the $t$-th query discovers a new clique of size at least $\frac{n}{2d}$, and let $P_t = \max\bigl\{|L_t|,|R_t|\bigr\}$. Using this notation,
\begin{align}
\label{eq:clubound}
    \sum_{t=1}^B \Ind{D_t}
=
    \sum_{t=1}^B \Ind{D_t \,\wedge\, P_t < d/2} + \underbrace{\sum_{t=1}^B \Ind{D_t \,\wedge\, P_t \ge d/2}}_{N}~.
\end{align}
We will now show that unless $B \ge \tfrac{d^2}{50}$, we can upper bound $N$ deterministically by $\sqrt{2B}$.

Suppose $N > \tfrac{d}{2}$, and let $t_1,\dots,t_N$ be the times $t_k$ such that $\Ind{D_{t_k} \,\wedge\, P_{t_k} \ge d/2} = 1$. Now fix some $k$ and note that, because the clique to which $s_{t_k}$ and $r_{t_k}$ both belong is discovered, neither $s_{t_k}$ nor $r_{t_k}$ can occur in a future query $\{s_t,r_t\})$ that discovers a new clique. Therefore, in order to have $\Ind{D_t \,\wedge\, P_t \ge d/2} = 1$ for $N > \tfrac{d}{2}$ times, at least
\[
    \binom{N}{2} \ge \frac{d^2}{8}
\]
queries must be made, since each one of the other $N-1 \ge \frac{d}{2}$ discovered cliques can contribute with at most a query to making $P_t \ge \tfrac{d}{2}$. So, it takes at least
$
    B \ge \frac{d^2}{8}
$
queries to discover the first $\frac{d}{2}$ cliques of size at least two, which contradicts the lemma's assumption that $B \le \tfrac{d^2}{16}$. Therefore, $N \le \tfrac{d}{2}$.

Using the same logic as before, in order to have $\Ind{D_t \,\wedge\, P_t \ge d/2} = 1$ for $N \le \tfrac{d}{2}$ times, at least
\[
    \frac{d}{2} + \left(\frac{d}{2} - 1\right) + \dots + \left(\frac{d}{2} - N + 1\right)
\]
queries must be made. So, it must be
\[
    B \ge \sum_{k=1}^N \left(\frac{d}{2} - (k - 1)\right) = (d+1)\frac{N}{2} - \frac{N^2}{2}
\]
or, equivalently, $N^2 -(d+1)N + 2B \ge 0$. Solving this quadratic inequality for $N$, and using the hypothesis $N\le \tfrac{d}{2}$, we have that $N\leq \frac{(d+1)-\sqrt{(d+1)^2-8B}}{2}$. Using the assumption that $B\leq \tfrac{d^2}{50}$ we get that $N\leq \sqrt{2B}$.

We now bound the first term of~(\ref{eq:clubound}) in expectation. The event $D_t$ is equivalent to $s_t,r_t \in T_i$ for some $i \in \neg L_t \cap \neg R_t$, where for any $S \subseteq \{1,\dots,d\}$ we use $\neg S$ to denote $\{1,\dots,d\}\setminus S$.

Let $\Pr_t = \Pr\bigl(\,\cdot\mid P_t < d/2 \bigr)$. For $L',R'$ ranging over all subsets of $\{1,\dots,d\}$ of size strictly less than $\tfrac{d}{2}$,
\begin{align}
\nonumber
    \Pr_t(D_t)
&=
    \sum_{L',R'}\sum_{i \in \neg L' \cap \neg R'} \!\Pr_t\bigl(s_t \in T_i \,\wedge\, r_t \in T_i \,\big|\, L_t = L',\, R_t = R'\bigr)\,\Pr_t(L_t = L' \,\wedge\, R_t = R')
\\ &=
\label{eq:indep}
    \sum_{L',R'}\sum_{i \in \neg L' \cap \neg R'} \!\Pr_t\bigl(s_t \in T_i \,\big|\, L_t = L'\bigr)\,\Pr_t\bigl(r_t \in T_i \,\big|\, R_t = R'\bigr)\,\Pr_t(L_t = L' \,\wedge\, R_t = R')
\\ &=
\label{eq:split}
    \sum_{L',R'}\sum_{i \in \neg L' \cap \neg R'} \frac{1}{|\neg L'|}\,\frac{1}{|\neg R'|}\,\Pr_t(L_t = L' \,\wedge\, R_t = R')
\\ &=
\nonumber
    \sum_{L',R'} \frac{|\neg L' \cap \neg R'|}{|\neg L'|\,|\neg R'|}\,\Pr_t(L_t = L' \,\wedge\, R_t = R')
\\ &\le
\label{eq:count}
	\frac{2}{d}~.
\end{align}
Equality~(\ref{eq:indep}) holds because $P_t = \max\{L_t,R_t\} < \frac{d}{2}$ implies that there are at least two remaining cliques to which $s_t$ and $r_t$ could belong, and each node is independently assigned to one of these cliques. Equality~(\ref{eq:split}) holds because, by definition of $L_t$, the clique of $s_t$ is not in $L_t$, and there were no previous queries involving $s_t$ and a node belonging to a clique in $\neg L_t$ (similarly for $r_t$). Finally, (\ref{eq:count}) holds because $|\neg L'| \ge \tfrac{d}{2}$, $|\neg R'| \ge \tfrac{d}{2}$, and $|\neg L' \cap \neg R'| \le \min\{|\neg L'|,|\neg R'|\}$. Therefore,
\begin{align*}
    \sum_{t=1}^B \Pr\bigl(D_t \,\wedge\, P_t < d/2\bigr)
\le
    \sum_{t=1}^B \Pr\bigl(D_t \mid P_t < d/2\bigr)
\le
    \frac{2B}{d}~.
\end{align*}
Putting everything together,
\begin{equation}\label{eq:clubound1}
\E\left[\sum_{t=1}^B \Ind{D_t}\right] \leq \frac{2B}{d} + \sqrt{2B}~.
\end{equation}
On the other hand, we have
\begin{equation}
\label{eq:clubound2}
	\sum_{t=1}^B \Ind{D_t}
=
	\sum_{i=1}^{d}\Big(\Ind{|T_i|\ge\tfrac{n}{2d}}-\Ind{E_i}\Big)
=
	d - \sum_{i=1}^{d}\Big(\Ind{|T_i|<\tfrac{n}{2d}}+\Ind{E_i}\Big)
\end{equation}
Combining \eqref{eq:clubound1} and \eqref{eq:clubound2}, we get that
\[
	\sum_{i=1}^{d}\Pr(E_i)
\ge
	d-\sum_{i=1}^{d}\Pr\big(|T_i|<\tfrac{n}{2d}\big) - \frac{2B}{d}-\sqrt{2B}~.
\]
By Chernoff-Hoeffding bound,
$
	\Pr\big(|T_i|<\tfrac{n}{2d}\big) \le \frac{1}{d^2}
$
for each $i=1,\dots,d$ when $n \ge 16d\ln d$. Therefore,
\[
	\sum_{i=1}^{d}\Pr(E_i)
\ge
	d - \frac{2B+1}{d} - \sqrt{2B}~.
\]
To finish the proof, suppose on the contrary that $\sum_{i=1}^{d}\Pr(E_i)\leq \frac{d}{2}$. Then from the inequality above, we
would get that
\[
	\frac{d}{2}
\ge
	d - \frac{2B+1}{d} - \sqrt{2B}
\]
which implies $B\geq \left(\frac{2-\sqrt{2}}{4}\right)^2 d^2> \frac{d^2}{50}$, contradicting the assumptions. Therefore, we must have $\sum_{i=1}^{d}\Pr(E_i)> \frac{d}{2}$ as required.
\end{proof}
\fi

Our second bound relaxed the assumption on $\OPT$.
It uses essentially the same construction of~\cite[Lemma~6.1]{bonchi2013local}, giving asymptotically the same guarantees.
However, the bound of \cite{bonchi2013local} applies only to a very restricted class of algorithms: namely, those where the number $q_v$ of queries involving any specific node $v \in V$ is deterministically bounded.
This rules out a vast class of algorithms, including \kc, \access, and \aggress, where the number of queries involving a node is a function of the random choices of the algorithm.
Our lower bound is instead fully general: it holds unconditionally for \emph{any} randomized algorithm, with no restriction on what or how many pairs of points are queried.
\begin{theorem}
\label{thm:new_LB}
Choose any function $\epsilon=\epsilon(n)$ such that $\Omega\big(\frac{1}{n}\big) \le \epsilon \le \frac{1}{2}$ and $\frac{1}{\epsilon} \in \mathbb{N}$.
For every (possibly randomized) learning algorithm and any $n_0>0$ there exists a labeling $\sigma$ on $n\ge n_0$ nodes such that the algorithm has expected error $\E[\Delta] \ge  \OPT + \frac{n^2\epsilon}{80}$ whenever its expected number of queries satisfies $\E[Q] < \frac{n}{80 \,\epsilon}$.
\end{theorem}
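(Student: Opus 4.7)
The plan is to apply Yao's minimax principle against the distribution $\mathcal{D}$ on labelings defined as follows. With $t = 1/\epsilon$ and $s = n\epsilon$, draw $\Pi : V \to [t]$ uniformly among the balanced partitions whose classes all have size $s$, and set $\sigma_\Pi(u,v) = +1$ iff $\Pi(u) = \Pi(v)$, so that $\OPT(\sigma_\Pi) = 0$ throughout the support. By Yao, it is enough to prove that every \emph{deterministic} algorithm $A$ with $\E_{\sigma\sim\mathcal{D}}[Q(A,\sigma)] < n/(80\epsilon)$ satisfies $\E_{\sigma\sim\mathcal{D}}[\Delta(A,\sigma)] \geq n^2\epsilon/80$; a standard averaging-and-conditioning argument then produces, for any randomized algorithm, a single $\sigma$ in the support realizing the bound.

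Fix such a deterministic $A$ and let $p$ be the random number of positive queries it issues, $I$ the set of nodes incident to at least one positive query, and $U = V \setminus I$ the set of \emph{unidentified} nodes. The core of the plan is to show $\E[|U|]$ is close to $n$ and then that unidentified pairs are essentially blind to the algorithm. For the first part I would bound $\E[p] \leq 2\epsilon\cdot\E[Q]$ by a step-wise conditional argument: given any history $H_{\tau-1}$, the posterior on $\Pi$ is uniform over balanced partitions consistent with past answers, and under any such posterior the probability that a specific pair is same-cluster is at most $(s-1)/(n-s-1) \leq 2\epsilon$ (the extreme case being when one cluster has already been fully pinned down). Summing over queries yields $\E[p] < n/40$, and since every positive query touches at most two vertices, $\E[|I|] \leq 2\E[p] < n/20$ and hence $\E[|U|] > 19n/20$. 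For the second part, Markov's inequality applied to $\sum_v q_v \leq 2\E[Q]$ shows that all but a constant fraction of $v \in U$ have $q_v \leq t/4$, so their negative queries rule out at most $t/4$ of the $t$ clusters; consequently for a constant fraction $c > 0$ of unqueried pairs $(v,w)\in\binom{U}{2}$, the posterior satisfies $\Pr(\Pi(v)=\Pi(w)\mid H) \geq \epsilon/2$, and any algorithm output for such a pair costs at least $\min\{\epsilon/2, 1-\epsilon/2\} = \epsilon/2$ in expected error.

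Summing these per-pair contributions (over the constant fraction $c$ of unidentified pairs) and subtracting the at most $\E[Q]$ pairs actually queried,
\[
\E[\Delta] \;\geq\; \tfrac{\epsilon}{2}\Bigl(c\cdot\tbinom{\E[|U|]}{2} - \E[Q]\Bigr) \;\geq\; \tfrac{\epsilon}{2}\Bigl(c\cdot\tbinom{19n/20}{2} - \tfrac{n}{80\epsilon}\Bigr) \;\geq\; \tfrac{n^2\epsilon}{80}
\]
using Jensen's inequality for the convex map $x \mapsto \binom{x}{2}$ together with $n \geq n_0$ chosen large enough. The main obstacle will be the first step, making $\E[p] \leq O(\epsilon)\cdot\E[Q]$ rigorous against \emph{adaptive} strategies: past negatives break node-exchangeability, and the one-step conditional positive probability can exceed $\epsilon$ once many negatives have concentrated the posterior for some $v$ on a few clusters. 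The clean resolution is a ``capacity'' potential tracking, for each cluster, the number of slots already filled by identified nodes, from which the total positive count is dominated in expectation by $O(\epsilon)\cdot\E[Q]$; this is essentially a supermartingale argument combined with the hard constraint that every cluster has size exactly $s$. A lighter secondary subtlety is justifying the $\epsilon/2$ posterior lower bound in the second step on a constant fraction of unidentified pairs, which is handled cleanly via the Markov bound on $\sum_v q_v$.
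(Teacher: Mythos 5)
Your construction is genuinely different from the paper's: you put all the randomness into a uniformly random \emph{balanced} partition into $1/\epsilon$ clusters of size $n\epsilon$, so that $\OPT=0$ on the whole support, whereas the paper fixes $k=1/\epsilon$ \emph{known} cliques $A_1,\dots,A_k$ inside a part $A$ of size $0.9n$ and randomizes only the assignment $i_v$ of each node $v$ in the remaining part $B$, independently across $v$ (accepting $\OPT>0$ and compensating by explicitly upper-bounding $\E[\OPT]\le |B|^2/k$). That design choice is not cosmetic: it is exactly what makes the paper's posterior computations trivial (the posterior of $i_v$ given any history is uniform over the clusters not yet ruled out for $v$, independently of all other nodes), and it is exactly what your proposal loses.

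The gap is in the two steps you yourself flag, and I do not think either is a routine fix. First, the claim that each query is positive with conditional probability at most $2\epsilon$ given any history is false: if the algorithm has located $s-1$ members of a cluster and ruled out all but one of the remaining nodes, the next query to the last candidate is positive with probability $1$. You propose an amortized ``capacity potential'' argument instead, but you never define the potential or prove the supermartingale property, and doing so requires controlling the posterior over balanced partitions conditioned on an \emph{arbitrary adaptive} transcript of positive and negative answers --- a distribution over proper list-colorings with exact size constraints, under which even one-step conditional probabilities are not easy to bound. Second, the claim that $\Pr(\Pi(v)=\Pi(w)\mid H)\ge \epsilon/2$ (and, implicitly, $\le 1/2$, which you also need for the $\min\{p,1-p\}$ step) for a constant fraction of unidentified pairs faces the same obstacle: a negative answer between two \emph{unidentified} nodes does not rule out a cluster for either, it only adds an inequality constraint $\Pi(v)\ne\Pi(u)$, and the resulting posterior is not controlled by counting $q_v$ alone. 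The paper avoids both problems because (i) positives are per-node events with an exactly computable probability ($S_v$ found within $j$ queries has probability $j/k$ since $i_v$ is uniform and independent of everything else), and (ii) the error of an unidentified $v\in B$ is measured against the $\alpha n/k$ nodes of each fixed, known $A_i$, giving the clean $\E[R_v\mid \bar S_v\wedge Q_v<k/2]\ge \alpha n/k$ bound. So your route is plausible and would yield a somewhat stronger statement ($\OPT=0$ exactly), but as written it is a plan with two unproven core lemmas rather than a proof; to complete it you would either have to carry out the coupled-posterior analysis in detail or decouple the randomness as the paper does.
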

In fact, the bound of Theorem~\ref{thm:new_LB} can be put in a more general form: for any constant $c \ge 1$, the expected error is at least $c \cdot \OPT + A(c)$ where $A(c) = \Omega(n^2 \epsilon)$ is an additive term with constant factors depending on $c$ (see the proof).
Thus, our algorithms \access\ and \aggress\ are essentially optimal in the sense that, for $c=3$, they guarantee an optimal additive error up to constant factors.

\section{Experiments}
We verify experimentally the tradeoff between clustering cost and number of queries of \access, using six datasets from \cite{mazumdar2017clustering,NIPS2017_7054}.
Four datasets come from real-world data, and two are synthetic; all of them provide a ground-truth partitioning of some set $V$ of nodes.
Here we show results for one real-world dataset (\texttt{cora}, with $|V|$=1879 and 191 clusters) and one synthetic dataset (\texttt{skew}, with $|V|$=900 and 30 clusters).
Results for the remaining datasets are similar and can be found in the supplementary material.
Since the original datasets have $\OPT=0$, we derived perturbed versions where $\OPT > 0$ as follows.
First, for each $\eta \in\{0,0.1,0.5,1\}$ we let $p = \eta |E|/\binom{n}{2}$ where $|E|$ is the number of edges (positive labels) in the dataset (so $\eta$ is the expected number of flipped edges measured as a multiple of $|E|$).
Then, we flipped the label of each pair of nodes independently with probability $p$.
Obviously for $p=0$ we have the original dataset.

For every dataset and its perturbed versions we then proceeded as follows.
For $\alpha=0,0.05,...,0.95,1$, we set the query rate function to  $f(x)=x^{\alpha}$.
Then we ran $20$ independent executions of \access, and computed the average number of queries $\mu_Q$ and average clustering cost $\mu_{\Delta}$.
The variance was often negligible, but is reported in the full plots in the supplementary material.
The tradeoff between $\mu_{\Delta}$ and $\mu_Q$ is depicted in Figure~\ref{f:plot}, where the circular marker highlights the case $f(x)=x$, i.e.\ \kc.

\begin{figure}[h]
\centering
\begin{subfigure}[b]{0.46\linewidth}
\includegraphics[width=\linewidth]{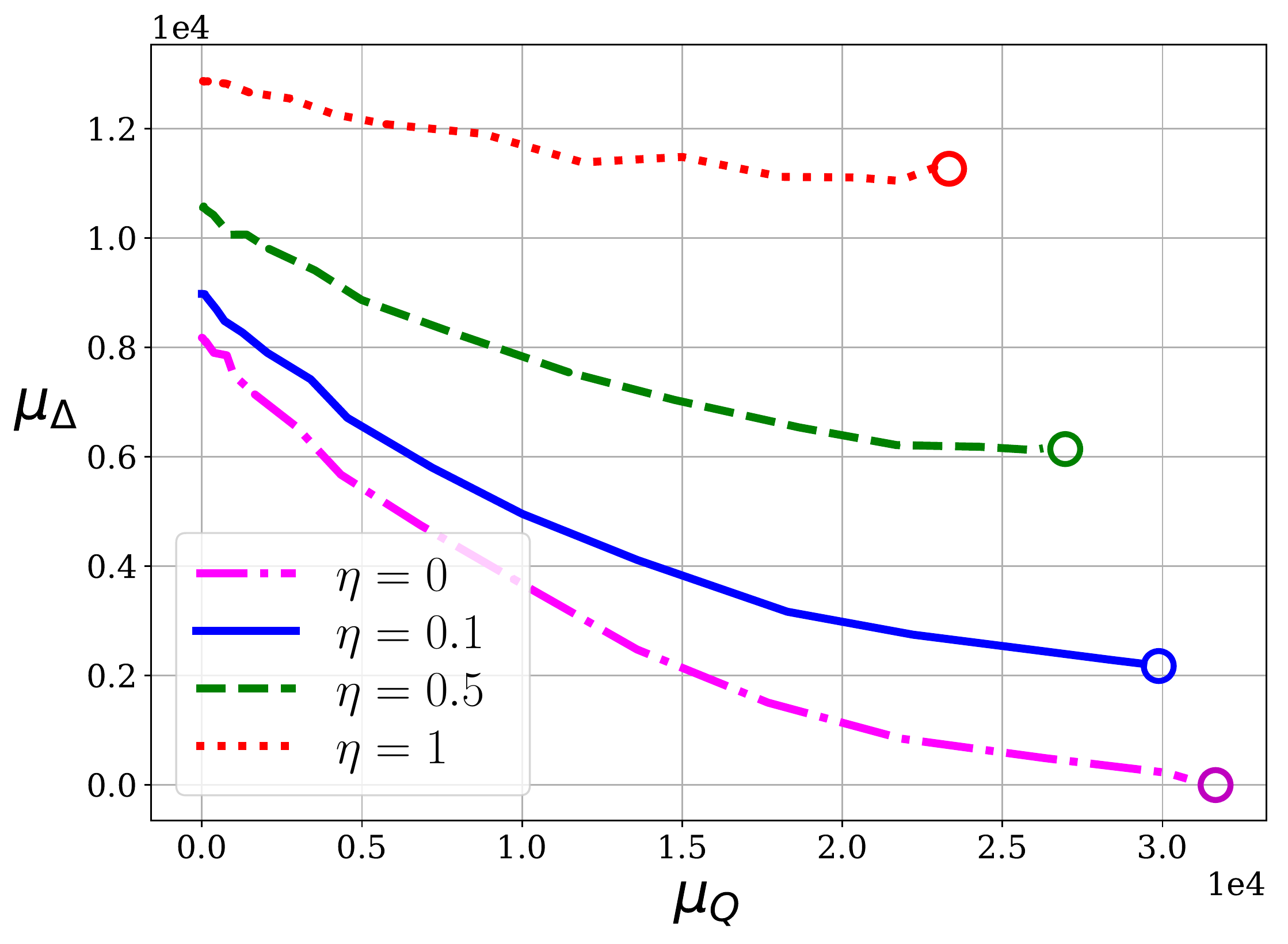}
\caption{skew.}
\end{subfigure}
\hspace*{20pt}
\begin{subfigure}[b]{0.46\linewidth}
\includegraphics[width=\linewidth]{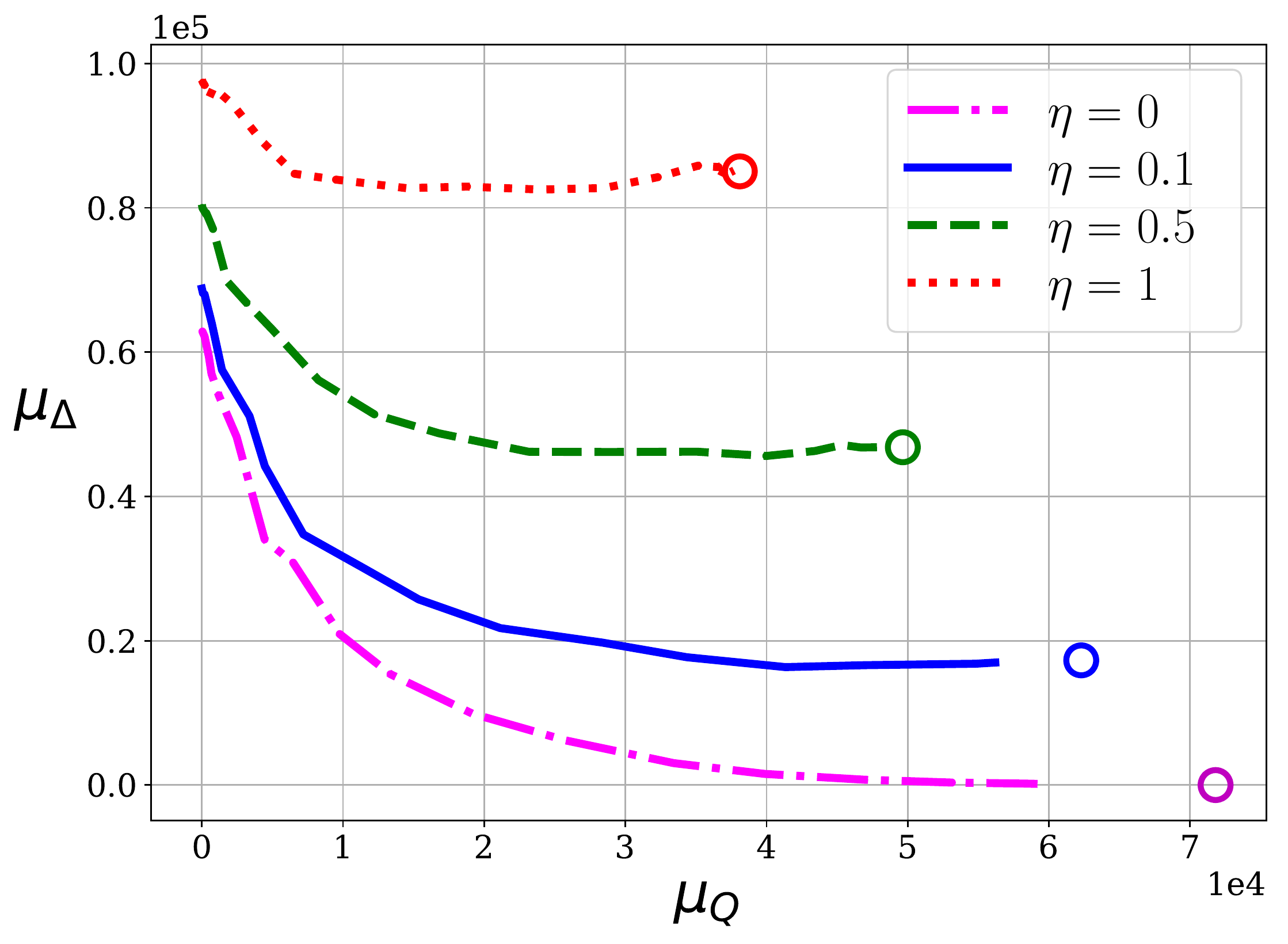} 
\caption{cora.}
\end{subfigure}
\caption{Performance of \access. \label{f:plot}}
\end{figure}

The clustering cost clearly drops as the number of queries increases.
This drop is particularly marked on \texttt{cora}, where \access\ achieves a clustering cost close to that of \kc\ using an order of magnitude fewer queries.
It is also worth noting that, for the case $\OPT=0$, the measured clustering cost achieved by \access\ is $2$ to $3$ times lower than the theoretical bound of $\approx 3.8 n^3 / Q$  given by Theorem~\ref{thm:access_cost}.

\subsubsection*{Acknowledgements}

The authors gratefully acknowledge partial support by the Google Focused Award \enquote{Algorithms and Learning for AI} (ALL4AI).
Marco Bressan and Fabio Vitale are also supported in part by the ERC Starting Grant DMAP 680153 and by the ``Dipartimenti di Eccellenza 2018-2022'' grant awarded to the Department of Computer Science of the Sapienza University of Rome.
Nicolò Cesa-Bianchi is also supported by the MIUR PRIN grant \textsl{Algorithms, Games, and Digital Markets} (ALGADIMAR).

\bibliographystyle{plainnat}
\bibliography{biblio}

\clearpage
\appendix

\newcommand{\et}{\,\wedge\,}

\section*{APPENDIX}

\section{Probability bounds}
\label{apx:chernoff_bounds}
We give Chernoff-type probability bounds that can be found in e.g.~\cite{Dubhashi2009} and that we repeatedly use in our proofs.
Let $X_1,\ldots,X_n$ be binary random variables. We say that $X_1,\ldots,X_n$ are non-positively correlated if for all $I \subseteq \{1,\ldots,n\}$ we have:
\begin{align}
\prob[\forall i \in I: X_i=0] \leq \prod_{i \in I} \prob[X_i=0] \quad \text{and} \quad
\prob[\forall i \in I: X_i=1] \leq \prod_{i \in I} \prob[X_i=1]
\end{align}
The following holds:
\begin{lemma}
\label{lem:chernoff}
Let $X_1,\ldots,X_n$ be independent or, more generally, non-positively correlated binary random variables. Let $a_1,\ldots,a_n \in [0,1]$ and $X=\sum_{i=1}^{n}a_iX_i$. Then, for any $\delta > 0$, we have:
\begin{align}
\prob[X < (1-\delta)\E[X]] &< e^{-\frac{\delta^2}{2}\E[X]} \\
\prob[X > (1+\delta)\E[X]] &< e^{-\frac{\delta^2}{2+\delta}\E[X]} 
\end{align}
\end{lemma}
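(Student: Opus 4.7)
The plan is to adapt the standard moment-generating function (MGF) approach to Chernoff bounds in two places: the weights $a_i \in [0,1]$ must be handled via convexity, and the product-of-MGFs equality (which holds for independent variables) must be replaced by an inequality that follows from the non-positive correlation hypothesis. Let $\mu = \E[X] = \sum_i a_i p_i$ with $p_i = \Pr[X_i=1]$.

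The first and most delicate step is to prove, for every $t \in \R$, the MGF inequality $\E[e^{tX}] \le \prod_i \E[e^{ta_iX_i}]$. Since $X_i \in \{0,1\}$, I would write $e^{ta_iX_i} = 1 + (e^{ta_i}-1)X_i$ and expand the product, obtaining
\[
    \prod_i e^{ta_iX_i} = \sum_{I\subseteq[n]} \Bigl(\prod_{i\in I}(e^{ta_i}-1)\Bigr)\Ind{\forall i\in I: X_i=1}.
\]
For $t>0$ every coefficient $\prod_{i\in I}(e^{ta_i}-1)$ is non-negative, so taking expectations and invoking the first non-positive-correlation inequality $\Pr[\forall i\in I: X_i=1] \le \prod_{i\in I}p_i$ yields exactly $\prod_i\bigl(1+p_i(e^{ta_i}-1)\bigr)=\prod_i\E[e^{ta_iX_i}]$. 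For $t<0$ the coefficients in that expansion can be negative, so I would re-factor as $e^{ta_iX_i} = e^{ta_i}\bigl(1+(e^{-ta_i}-1)(1-X_i)\bigr)$, whose subset expansion has non-negative coefficients; applying the second non-positive-correlation inequality to $\Ind{\forall i\in I: X_i=0}$ and simplifying algebraically gives the same product bound.

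For the remaining steps I would use convexity of $a\mapsto e^{ta}$ on $[0,1]$ to obtain the chord inequality $e^{ta_i}-1\le a_i(e^t-1)$, combine with $1+x\le e^x$ to get $\E[e^{tX}]\le \exp\bigl((e^t-1)\mu\bigr)$, then apply Markov to $e^{tX}$ and optimize in $t$: choose $t=\ln(1+\delta)$ for the upper tail and $t=-\ln(1-\delta)$ for the lower tail. The standard numerical inequalities $(1+\delta)\ln(1+\delta)-\delta\ge \delta^2/(2+\delta)$ and $(1-\delta)\ln(1-\delta)+\delta\ge \delta^2/2$ then finish both bounds. The main obstacle is precisely the first step: the subset-expansion trick is tailored so that its coefficients line up with exactly the two inequalities that define non-positive correlation, once on events $\{\forall i\in I: X_i=1\}$ (for $t>0$) and once on $\{\forall i\in I: X_i=0\}$ (for $t<0$); once the MGF product bound is in hand, the rest of the argument is the classical weighted Chernoff–Hoeffding derivation and goes through essentially verbatim.
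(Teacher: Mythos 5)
Your proposal is correct: the subset-expansion argument for the MGF product bound handles both tails exactly as needed (the re-factoring $e^{ta_iX_i}=e^{ta_i}\bigl(1+(e^{-ta_i}-1)(1-X_i)\bigr)$ correctly lines up the negative-$t$ case with the second non-positive-correlation inequality), and the remainder is the classical weighted Chernoff--Hoeffding derivation. The paper itself gives no proof of this lemma and simply cites Dubhashi and Panconesi, where essentially this same MGF-based argument for non-positively correlated variables appears, so your route matches the intended one.
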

\noindent

\section{Supplementary Material for Section 3}
\label{s:access}

\subsection{Pseudocode of \access}
For ease of reference we report the pseudocode of \access\ below.
\setcounter{algorithm}{0}

\subsection{Proof of Theorem 1}
We refer to the pseudocode above (Algorithm~\ref{alg:access}).
We use $V_r$ to denote the set of remaining nodes at the beginning of the $r$-th recursive call, and we let $n_r = |V_r|-1$.
Hence $V_1 = V$ and $n_1=n-1$.
If the condition in the \textbf{if} statement on line~\ref{line:sing} is not true, then $C_r$ is a singleton cluster.
We denote by $\Vsing$ the set nodes that are output as singleton clusters.

Let $\Gamma_A$ be the set of mistaken edges for the clustering output by $\access$ and let $\Delta_A = \big|\Gamma_A\big|$ be the cost of this clustering. Note that, in any recursive call, $\access$ misclassifies an edge $e=\{u,w\}$ if and only if $e$ is part of a bad triangle whose third node $v$ is chosen as pivot and does not become a singleton cluster, or if $\sigma(e)=+1$ and at least one of $u,w$ becomes a singleton cluster. More formally, $\access$ misclassifies an edge $e = \{u,w\}$ if and only if one of the following three disjoint events holds:
\begin{enumerate}[topsep=0pt,parsep=0pt,itemsep=0pt]
\item[$B_1(e)$:] \label{item:ev1} There exists $r \le \ceil{f(n-1)}$ and a bad triangle $T \equiv \{u,v,w\} \subseteq V_r$ such that $\pi_r=v$ and $v\not\in\Vsing$.
\item[$B_2(e)$:] \label{item:ev2} There exists $r \le \ceil{f(n-1)}$ such that $u,w \in V_r$ with $\sigma(u,w)=+1$ and $\pi_r \in \{u,w\} \cap \Vsing$.
\item[$B_3(e)$:] \label{item:ev3} \access\ stops after $\ceil{f(n-1)}$ rounds without removing neither $u$ nor $w$, and $\sigma(u,w)=+1$.
\end{enumerate}
Therefore the indicator variable for the event ``$e$ is mistaken'' is: 
\begin{align*}
    \Ind{e \in \Gamma_A}
&=
\Ind{B_1(e)} +
    \Ind{B_2(e)} + \Ind{B_3(e)}
\end{align*}
The expected cost of the clustering is therefore:
\begin{align}
    \E[\Delta_A] = \sum_{e \in \scE} \Pr(B_1(e)) + \sum_{e \in \scE} \Pr(B_2(e)) + \sum_{e \in \scE} \Pr(B_3(e))
\end{align}
We proceed to bound the three terms separately.
\paragraph{Bounding $\sum_{e \in \scE} \Pr(B_1(e))$.}
Fix an arbitrary edge $e=\{u,w\}$.
Note that, if $B_1(e)$ occurs, then $T$ is unique, i.e.\ exactly one bad triangle $T$ in $V$ satisfies the definition of $B_1(e)$.
Each occurrence of $B_1(e)$ can thus be charged to a single bad triangle $T$. We may thus write
\begin{align*}
    \sum_{e \in \scE} \Ind{B_1(e)} &=
    \sum_{e \in \scE}  \Ind{(\exists\,r) (\exists\,T\in\scT) \,:\, T \subseteq V_r \,\wedge\, e \subset T \,\wedge\, \pi_r\in T\setminus e  \,\wedge\, \pi_r\not\in\Vsing}
\\ &=
    \sum_{T \in \scT} \Ind{(\exists\,r)  \,:\, T \subseteq V_r \,\wedge\, \pi_r\in T  \,\wedge\, \pi_r\not\in\Vsing}
\\ &\le
    \sum_{T \in \scT} \Ind{A_T} 
\end{align*}
where $A_T \equiv \big\{(\exists\,r) \,:\, T \subseteq V_r \,\wedge\, \pi_r \in T\big\}$.
Let us then bound $\sum_{T \in \scT}\Pr(A_T)$.
Let $\scT(e) \equiv \theset{T'\in\scT}{e\in T'}$. We use the following fact extracted from the proof of \citep[Theorem~6.1]{Ailon2008}. If $\theset{\beta_T \ge 0}{T\in\scT}$ is a set of weights on the bad triangles such that $\sum_{T \in \scT(e)} \beta_T \le 1$ for all $e \in \scE$, then $\sum_{T \in \scT} \beta_T \le \OPT$. Given $e \in \scE$ and $T \in \scT$, let $F_T(e)$ be the event corresponding to $T$ being the first triangle in the set $\scT(e)$ such that $T \in V_r$ and $\pi_r \in T\setminus e$ for some $r$. Now if $F_T(e)$ holds then $A_T$ holds and no other $A_{T'}$ for $T'\in \scT(e)\setminus\{T\}$ holds. Therefore
\[
     \sum_{T \in \scT(e)} \Ind{A_T \,\wedge\, F_T(e)} = 1~.
\]
If $A_T$ holds for some $r_0$, then it cannot hold for any other $r > r_0$ because $\pi_{r_0} \in T$ implies that for all $r > r_0$ we have $\pi_{r_0} \not\in V_r$ implying $T \not\subseteq V_r$. Hence, given that $A_T$ holds for $r_0$, if $F_T(e)$ holds too, then it holds for the same $r_0$ by construction. This implies that $\Pr\big (F_T(e) \mid A_T \big) = \frac{1}{3}$ because $\access$ chooses the pivot u.a.r.\ from the nodes in $V_{r_0}$. Thus, for each $e \in E$ we can write
\begin{equation}
    1 = \sum_{T \in \scT(e)} \Pr\big(A_T \,\wedge\, F_T(e)\big) = \sum_{T \in \scT(e)} \Pr\big(F_T(e) \mid A_T\big) \Pr(A_T) = \sum_{T \in \scT(e)} \frac{1}{3}\Pr(A_T)~.
\label{eq:fp}
\end{equation}
Choosing $\beta_T = \frac{1}{3}\Pr(A_T)$ we get $\sum_{T \in \scT} \Pr(A_T) \le 3\OPT$.

In the proof of \kc, the condition $\sum_{T \in \scT(e)} \beta_T \le 1$ was ensured by considering events $G_T(e) = A_T \,\wedge\, e \in \Gamma_A$. Indeed, in \kc\ the events $\theset{G_T(e)}{T\in\scT(e)}$ are disjoint, because $G_T(e)$ holds iff $T$ is the first and only triangle in $\scT(e)$ whose node opposite to $e$ is chosen as pivot. For $\access$ this is not true because a pivot can become a singleton cluster, which does not cause $e \in \Gamma_A$ necessarily to hold.

\paragraph{Bounding $\sum_{e \in \scE}\prob(B_2(e))$.} For any $u \in V_r$, let $d^+_r(u) = \big|\theset{v \in V_r}{\sigma(u,v) = +1}\big|$. We have:
\[
    \sum_{e \in \scE} \Ind{B_2(e)} = \frac{1}{2} \sum_{u \in V} \sum_{r=1}^{\ceil{f(n-1)}} \Ind{\pi_r = u \,\wedge\, \pi_r\in\Vsing} d^+_r(u)~.
\]
Taking expectations with respect to the randomization of $\access$,
\begin{align*}
    \sum_{e \in \scE} \Pr\big(B_2(e)\big)
&=
   \frac{1}{2} \sum_{u \in V} \sum_{r=1}^{\ceil{f(n-1)}} \E\Big[\Ind{\pi_r = u \,\wedge\, \pi_r\in\Vsing} d^+_r(u) \Big]
\\ &=
    \frac{1}{2} \sum_{u \in V} \sum_{r=1}^{\ceil{f(n-1)}} \E\Big[\Ind{\pi_r\in\Vsing} d^+_r(u) \,\Big|\, \pi_r = u \Big] \Pr(\pi_r = u)
\end{align*}
For any round $r$, let $H_{r-1}$ be the sequence of random draws made by the algorithm before round $r$.
Then $\Pr\big(\pi_r\in\Vsing \,\big|\, \pi_r = u,\, H_{r-1} \big)d^+_r(u) = 0$ if either $d^+_r(u) = 0$, or $d^+_r(u) \ge 1$ and $d_r^-(u) < \ceil{f(n_r)}$. Otherwise,
\begin{equation}
\label{eq:hyper}
    \Pr\big(\pi_r\in\Vsing \,\big|\, \pi_r = u,\, H_{r-1} \big)
=
    \!\!\!\prod_{j=0}^{\ceil{f(n_r)}-1} \frac{d_r^-(u)-j}{n_r-j}
\le
    \left(\frac{d_r^-(u)}{n_r}\right)^{\ceil{f(n_r)}}
\!\!\!\!\!=
     \left(1 - \frac{d^+_r(u)}{n_r}\right)^{\ceil{f(n_r)}}
\end{equation}
where the inequality holds because $d_r^-(u) \le n_r$. Therefore, when $d^+_r(u) \ge 1$ and $d_r^-(u) \ge \ceil{f(n_r)}$,
\begin{align*}
    \E\Big[\Ind{\pi_r\in\Vsing} d^+_r(u) \,\Big|\, \pi_r = u,\, H_{r-1} \Big]
&=
    \Pr\big(\pi_r\in\Vsing \,\big|\, \pi_r = u,\, H_{r-1} \big) d^+_r(u)
\\ &=
    \left(1 - \frac{d^+_r(u)}{n_r}\right)^{\ceil{f(n_r)}}d^+_r(u)
\\ &=
    \left(1 - \frac{d^+_r(u)}{n_r}\right)^{\ceil{f(n_r)}}d^+_r(u)
\\ &\le
    \exp\left(-\frac{d^+_r(u)\ceil{f(n_r)}}{n_r}\right)d^+_r(u)
\\ &\le
    \max_{z > 0} \exp\left(-\frac{z\,\ceil{f(n_r)}}{n_r}\right) z
\\ &\le
    \frac{n_r}{e \ceil{f(n_r)}}
\\ &\le
    \frac{n_r}{e f(n_r)}~.
\end{align*}
Combining with the above, this implies
\begin{align*}
    \sum_{e \in \scE} \Pr\big(B_2(e)\big)
\le
     \frac{1}{2e} \sum_{r=1}^{\ceil{f(n-1)}} \E\left[\frac{n_r}{f(n_r)} \right] \le
     \frac{1}{2e} \sum_{r=1}^{\ceil{f(n-1)}} \frac{n}{f(n)}
\le
     \frac{n}{e}
\end{align*}
where we used the facts that $n_r \le n$ and the properties of $f$.

\paragraph{Bounding $\sum_{e \in \scE}\prob(B_3(e))$.}
Let $\Vfin$ be the remaining vertices in $V_r$ after the algorithm stops and assume $|\Vfin| > 1$ (so that there is at least a query left).
Let $\nfin = |\Vfin|-1$ and, for any $u \in \Vfin$, let $\dfin^+(u) = \big|\theset{v \in \Vfin}{\sigma(u,v) = +1}\big|$. In what follows, we conventionally assume $V_{r} \equiv \Vfin$ for any $r > \ceil{f(n-1)}$, and similarly for $\nfin$ and $\dfin^+$. We have
\begin{align*}
    \sum_{e \in \scE} \Ind{B_3(e)}
=
    \frac{1}{2} \sum_{u \in \Vfin} \dfin^+(u)
 \le
    \frac{1}{2} \left( \sum_{u \in \Vfin} \frac{\nfin}{\ceil{f(\nfin)}} + \sum_{u \in \Vfin} \Ind{\dfin^+(u) > \frac{\nfin}{\ceil{f(\nfin)}}} \dfin^+(u)\right).
\end{align*}
Fix some $r \le \ceil{f(n-1)}$.
Given any vertex $v \in V_r$ with $d^+_r(v) \ge \frac{n_r}{\ceil{f(n_r)}}$, let $E_r(v)$ be the event that, at round $r$, $\access$ queries $\sigma(v,u)$ for all $u \in V_r\setminus\{v\}$.
Introduce the notation $S_r = \sum_{u \in V_r} \Ind{d_r^+(u) > \frac{n_r}{\ceil{f(n_r)}}} d_r^+(u)$ with $S_r = \Sfin$ for all $r > \ceil{f(n)}$, and let $\delta_r = n_r - n_{r+1}$ be the number of nodes that are removed from $V_r$ at the end of the $r$-th recursive call.
Then
\[
    \delta_r
\ge
   \Ind{E_r(\pi_r)} d_r^+(\pi_r)
\ge
   \Ind{d_r^+(\pi_r) > \frac{n_r}{\ceil{f(n_r)}}} \Ind{E_r(\pi_r)} d_r^+(\pi_r)
\]
and
\[
    \E[\delta_r \mid H_{r-1}]
\ge
    \sum_{v \in V_r} \Ind{d_r^+(v) > \frac{n_r}{\ceil{f(n_r)}}} \Pr\big(E_r(v) \mid \pi_r = v,\, H_{r-1} \big) \Pr(\pi_r = v \mid H_{r-1}) d_r^+(v)~.
\]
Using the same argument as the one we used to bound~\eqref{eq:hyper},
\[
    \Pr\big(E_r(v) \mid \pi_r = v,\, H_{r-1}\big)
\ge
    1 - \left(1-\frac{d_r^+(v)}{n_r}\right)^{\ceil{f(n_r)}}
\ge
    1 - \left(1-\frac{1}{\ceil{f(n_r)}}\right)^{\ceil{f(n_r)}}
\ge
    1 - \frac{1}{e}
\]
and $\Pr(\pi_r = v \mid H_{r-1}) = \frac{1}{n_r+1}$ for any $v \in V_r$, we may write
\[
    \E[\delta_r \mid H_{r-1}]
\ge
    \left(1-\frac{1}{e}\right)\frac{\E[S_r \mid H_{r-1}]}{n_r+1}
\ge
    \left(1-\frac{1}{e}\right)\frac{\E[S_r \mid H_{r-1}]}{n}~.
\]
Observe now that $\sum_{r=1}^{\ceil{f(n-1)}} \delta_r \le n_1 - \nfin \le n-1$ and $S_r$ is monotonically nonincreasing in $r$. Thus 
\[
    n-1
\ge
    \sum_{r=1}^{\ceil{f(n-1)}} \E[\delta_r]
\ge
    \frac{1}{n} \left(1-\frac{1}{e}\right) \sum_{r=1}^{\ceil{f(n)}} \E[S_r]
\ge
    \frac{\ceil{f(n-1)}}{n} \left(1-\frac{1}{e}\right) \E[\Sfin]
\]
which implies $\E[\Sfin] \le \big(\frac{e}{e-1}\big)\frac{n(n-1)}{\ceil{f(n-1)}} \le \big(\frac{e}{e-1}\big)\frac{n(n-1)}{f(n-1)}$.
By the properties of $f$, however, $\big(\frac{e}{e-1}\big)\frac{n(n-1)}{f(n-1)} \le \big(\frac{e}{e-1}\big)\frac{n^2}{{f(n)}}$. So we have
\begin{align*}
    \sum_{e \in \scE} \Pr\big(B_3(e)\big)
\le
    \frac{1}{2} \left( \sum_{u \in \Vfin} \E\left[\frac{\nfin}{f(\nfin)}\right] + \E[\Sfin] \right)
\le
    \frac{1}{2} \left( \frac{n^2}{f(n)} + \frac{e}{e-1} \frac{n^2}{f(n)} \right)
\end{align*}
as claimed.

\paragraph{Bounding the number of queries.}
In any given round, $\access$ asks less than $n$ queries.
Since the number of rounds is at most $\lceil f(n) \rceil$, the overall number of queries is less than $n \lceil f(n) \rceil$.

\paragraph{\kc\ as special case.}
One can immediately see that, if $f(n)=n$ for all $n$, then \access\ coincides with \kc\ and therefore the bound $\E[\Delta] \le 3\OPT$ applies~\cite{Ailon2008}.


\subsection{Pseudocode of \aggress}
\setcounter{algorithm}{1}
\renewcommand{\algorithmicrequire}{\textbf{Parameters:}}
\begin{algorithm}[h!]
\caption{
\label{alg:access2}
\aggress\ with query rate $f$}
\begin{algorithmic}[1]
\setcounter{ALG@line}{0}
\Require{
residual node set $V_r$, round index $r$
}
\If{$\binom{|V_r|}{2} \le 2 n^2/f(n)$} STOP and declare every $v \in V_r$ as singleton\label{line:agg:stop1}
\EndIf
\State Sample the labels of $\ceil{\binom{|V_r|}{2} f(n)/n^2}$ pairs chosen u.a.r.\ from $\binom{V_r}{2}$ \label{line:agg:samp1}
\If{no label is positive}\label{line:agg:checkstop}
\State STOP and declare every $v \in V_r$ as singleton\label{line:agg:stop2}
\EndIf
\State Draw pivot $\pi_r$ u.a.r.\ from $V_r$
\State $C_r \gets \{\pi_r\}$ \Comment{Create new cluster and add the pivot to it}
\State Draw a random subset $S_r$ of $\ceil{f(|V_r|-1)}$ nodes from $V_r\setminus\{\pi_r\}$ 
\For{each $u \in S_r$} query $\sigma(\pi_r,u)$ \label{line:agg_q1}
\EndFor
\If{$\exists\,u \in S_r$ such that $\sigma(\pi_r,u)=+1$} 
\Comment{Check if there is at least an edge}
    \State Query all remaining pairs $(\pi_r,u)$ for $u \in V_r\setminus\big(\{\pi_r\}\cup S_r\big)$ \label{line:agg_q2}
    \State $C_r \gets C_r \cup \theset{u}{\sigma(\pi_r,u) = +1}$  \Comment{Populate cluster based on queries}
\EndIf
\State Output cluster $C_r$
\State $\aggress(V_r \setminus C_r, r+1)$ \Comment{Recursive call on the remaining nodes}
\end{algorithmic}
\end{algorithm}

\subsection{Proof of Theorem 2}
We refer to the pseudocode of \aggress\ (Algorithm~\ref{alg:access2}).

\paragraph{Bounding $\E[\Delta_A]$.}
Let $G_r$ be the residual graph at round $r$.
The total clustering cost $\Delta_A$ of \aggress\ can be bounded by the sum of two terms: the clustering cost $\Delta_1$ of \access\ without round restriction (i.e.\ \access\ terminating only when the residual graph is empty), and the number of edges $\Delta_2$ in the residual graph $G_r$ if $r$ is the round at which \aggress\ stops.
Concerning $\Delta_1$, the proof of Theorem 1 shows that $\E[\Delta_1] \le 3\OPT + n/e$.
Concerning $\Delta_2$, we have two cases. If \aggress\ stops at line~\ref{line:agg:stop1}, then obviously $\Delta_2 \le 2 n^2/f(n)$.
If instead \aggress\ stops at line~\ref{line:agg:stop2}, then note that for any $k \ge 0$ the probability that such an event happens given that $\Delta_2=k$ is at most:
\[
\left(1 - \frac{k}{\binom{|V_r|}{2}}\right)^{\big\lceil\binom{|V_r|}{2} f(n)/n^2\big\rceil} \le e^{-k f(n)/n^2} 
\]
Thus $\E[\Delta_2] \le \max_{k \ge 1}(k e^{-k f(n)/n^2}) \le \frac{n^2}{ef(n)} < 2 n^2/f(n)$.

\paragraph{Bounding $\E[Q]$.}
The queries performed at line 1 are deterministically at most $n \ceil{f(n)}$.
Concerning the other queries (line~\ref{line:agg_q1} and line~\ref{line:agg_q2}), we divide the algorithm in two phases: the ``heavy'' rounds $r$ where $G_r$ still contains at least $n^2/(2f(n))$ edges, and the remaining ``light'' rounds where $G_r$ contains less than $n^2/(2f(n))$ edges.

Consider first a ``heavy'' round $r$.
We see $G_r$ as an arbitrary fixed graph: for all random variables mentioned below, the distribution is thought solely as a function of the choices of the algorithm in the current round (i.e., the pivot node $\pi_r$ and the queried edges).
Now, let $Q_r$ be the number of queries performed at lines~\ref{line:agg_q1} and~\ref{line:agg_q2}), and $R_r = |V_r|-|V_{r+1}|$ be the number of nodes removed.
Let $\pi_r$ be the pivot, and let $D_r$ be its degree in $G_r$.
Let $X_r$ be the indicator random variable of the event that $\sigma(\pi_r, u) = +1$ for some $u \in S_r$.
Observe that:
\begin{align*}
Q_r \le \ceil{f(|V_r|-1)} + X_r (|V_r|-1)
\qquad\text{and}\qquad
R_r = 1 + X_r \, D_r
\end{align*}
Thus $\E[Q_r] \le \ceil{f(|V_r|-1)} + \E[X_r] |V_r|$, while $\E[R_r] = 1 + \E[X_r D_r]$.
However, $X_r$ is monotonically increasing in $D_r$, so $\E[X_r D_r] = \E[X_r]\E[D_r] + \operatorname{Cov}(X_r,D_r) \ge \E[X_r]\E[D_r]$.
Moreover, by hypothesis $\E[D_r] \ge 2\big(n^2/(2f(n))\big)/|V_r| \ge n/f(n)$.
Thus:
\begin{align*}
\E[R_r] &\ge 1 + \E[X_r] \E[D_r]
\\ &\ge 1 + \E[X_r] \frac{n}{f(n)}
\\ &\ge 1 + \E[X_r] \frac{|V_r|}{f(|V_r|)}
\\ &\ge 1 + \E[X_r] \frac{|V_r|}{\ceil{f(|V_r|)}}
\\ &\ge \frac{\E[Q_r]}{\ceil{f(|V_r|)}}
\\ &\ge \frac{\E[Q_r]}{\ceil{f(n)}}
\end{align*}
But then, since obviously $\sum_{r} R_r \le n$:
\begin{align*}
\E\left[\sum_{r \text{ heavy}}\!\! Q_r\right] \le \ceil{f(n)} \E\left[\sum_{r \text{ heavy}} \!\!R_r\right] \le n \ceil{f(n)}
\end{align*}
Consider now the ``light'' rounds, where $G_r$ contains less than $n^2/(2f(n))$ edges.
In any such round the expected number of edges found at line~\ref{line:agg:samp1} is less than:
\begin{align}
\frac{n^2/f(n)}{2\binom{|V_r|}{2}} \left\lceil\binom{|V_r|}{2} f(n)/n^2\right\rceil \label{eqn:expedg}
\end{align}
However, $\binom{|V_r|}{2} > 2n^2/f(n)$ otherwise \aggress\ would have stopped at line~\ref{line:agg:stop1}, hence:
\begin{align}
\left\lceil\binom{|V_r|}{2} f(n)/n^2\right\rceil \le \frac{3}{2}\binom{|V_r|}{2} f(n)/n^2
\end{align}
which implies that the expression in~\eqref{eqn:expedg} is bounded by $\frac{3}{4}$.
By Markov's inequality this is also an upper bound on the probability that \aggress\ finds some edge at line~\ref{line:agg:samp1}, so in every light round \aggress\ stops at line~\ref{line:agg:stop2} with probability at least $\frac{1}{4}$.
Hence \aggress\ completes at most $4$ light rounds in expectation; the corresponding expected number of queries is then at most $4n$.

\subsection{Proof of Theorem 3}
First of all, note that if the residual graph $G_r$ contains $\scO(n^2/f(n))$ edges, from $r$ onward \aggress\ stops at each round independently with constant probability.
The expected number of queries performed before stopping is therefore $\scO(n)$, and the expected error incurred is obviously at most $\scO(n^2/f(n))$.

We shall then bound the expected number of queries required before the residual graph contains $\scO(n^2/f(n))$ edges.
In fact, by definition of $i'$, if \aggress\ removes $C_{i'},\ldots,C_{\ell}$, then the residual graph contains $\scO(n^2/f(n))$ edges.
We therefore bound the expected number of queries before $C_{i'},\ldots,C_{\ell}$ are removed.

First of all recall that, when pivoting on a cluster of size $c$, the probability that the cluster is \emph{not} removed is at most $e^{-cf(n)/n}$.
Thus the probability that the cluster is not removed after $\Omega(c)$ of its nodes have been used as pivot is $e^{-\Omega(c^2) f(n)/n}$.
Hence the probability that \emph{any} of $C_{i'},\ldots,C_{\ell}$ is not removed after $\Omega(c)$ of its nodes are used as pivot is, setting $c=\Omega\big(h(n)\big)$ and using a union bound, at most $p = n e^{-\Omega(h(n)^2) f(n)/n}$.
Observe that $h(n) = \Omega\big(n/f(n)\big)$, for otherwise $\sum_{j=1}^{i'} \binom{C_j}{2}=o\big(n^2/f(n)\big)$, a contradiction.
Therefore $p \le n e^{-\Omega(h(n))}$.
Note also that we can assume $h(n) = \omega(\ln n)$, else the theorem bound is trivially $O(n^2)$.
This gives $p = \scO\big(n e^{-\omega(\ln n)}\big) = o\big(1/\operatorname{poly}(n)\big)$.
We can thus condition on the events that, at any point along the algorithm, every cluster among $C_{i'},\ldots,C_{\ell}$ that is still in the residual graph has size $\Omega\big(h(n)\big)$; the probability of any other event changes by an additive $\scO(p)$, which can be ignored.

Let now $k = \ell - i' + 1$, and suppose at a generic point $k' \le k$ of the clusters $C_{i'},\ldots,C_{\ell}$  are in the residual graph.
Their total size is therefore $\Omega\big(k' h(n)\big)$.
Therefore $\scO\big(n/k'h(n)\big)$ rounds in expectation are needed for the pivot to fall among those clusters.
Each time this happens, with probability $1 - e^{-\Omega(h(n))f(n)/n} = \Omega(1)$ the cluster containing the pivot is removed.
Hence, in expectation a new cluster among $C_{i'},\ldots,C_{\ell}$ is removed after $\scO\big(n/k'h(n)\big)$ rounds.
By summing over all values of $k'$, the number of expected rounds to remove all of $C_{i'},\ldots,C_{\ell}$  is
\begin{align*}
\scO\left(\sum_{k'=1}^k \frac{n}{k' h(n)}\right) = \scO\big(n (\ln n) / h(n)\big)
\end{align*}
Since each round involves $\scO(n)$ queries, the bound follows.

\section{Supplementary Material for Section 4}
\label{s:crec}

\subsection{Proof of Theorem~4}
Fix any $C$ that is $(1-\epsilon)$-knit.
We show that \access\ outputs a $\hat{C}$ such that
\begin{equation}
\label{eq:toprove}
    \E\big[|\hat{C} \cap C|\big] \ge \max\left\{\left(1-\frac{5}{2}\epsilon\right)|C| - 2 \frac{n}{f(n)}, \left(\frac{f(n)}{n}-\frac{5}{2}\epsilon\right)|C|\right\}
\;\text{and}\;
    \E\big[|\hat{C} \cap \bar{C}|\big] \le \frac{\epsilon}{2}|C|
\end{equation}
One can check that these two conditions together imply the first two terms in the bound.
We start by deriving a lower bound on $\E\big[|\hat{C} \cap C|\big]$ for \kc\ assuming $|E_C| = \binom{|C|}{2}$.
Along the way we introduce most of the technical machinery.
We then port the bound to \access, relax the assumption to $|E_C| \ge (1-\epsilon)\binom{|C|}{2}$, and bound $\E\big[|\hat{C} \cap \bar{C}|\big]$ from above.
Finally, we add the $|C|e^{-|C|f(n)/5n}$ part of the bound.
To lighten the notation, from now on $C$ denotes both the cluster and its cardinality $|C|$.

For the sake of analysis, we see \kc\ as the following equivalent process.
First, we draw a random permutation $\pi$ of $V$.
This is the ordered sequence of \emph{candidate pivots}.
Then, we set $G_1 = G$, and for each $i=1,\ldots,n$ we proceed as follows.
If $\pi_i \in G_i$, then $\pi_i$ is used as an actual pivot; in this case we let $G_{i+1} = G_i \setminus (\pi_i \cup \scN_{\pi_i})$ where $\scN_v$ is the set of neighbors of $v$.
If instead $\pi_i \notin G_i$, then we let $G_{i+1}=G_{i}$.
Hence, $G_i$ is the residual graph just before the $i$-th candidate pivot $\pi_i$ is processed.
We indicate the event $\pi_i \in G_i$ by the random variable $P_i$:
\begin{align}
P_i = \Ind{\pi_i \in G_i} = \Ind{\pi_i \text{ is used as pivot}}
\end{align}
More in general, we define a random variable indicating whether node $v$ is ``alive'' in $G_i$: 
\begin{align}
\label{eqn:Xvi}
X(v,i) = \Ind{v \in G_i} = \Ind{v \notin \cup_{j < i \,:\, P_j=1} \, (\pi_j \cup \scN_{\pi_j})}
\end{align}
Let $i_C = \min\{i : \pi_i \in C\}$ be the index of the first candidate pivot of $C$.
Define the random variable:
\begin{align}
\label{eqn:S0}
S_C = |C \cap G_{{i_C}}| = \sum_{v \in C} X(v,i_C)
\end{align}
In words, $S_C$ counts the nodes of $C$ still alive in $G_{i_C}$.
Now consider the following random variable:
\begin{align}
\label{eqn:S}
S = P_{i_C} \cdot S_C
\end{align}
Let $\hat{C}$ be the cluster that contains $\pi_{i_C}$ in the output of \kc.
It is easy to see that $|C \cap \hat{C}| \ge S$.
Indeed, if $P_{i_C}=1$ then $\hat{C}$ includes $C \cap G_{{i_C}}$, so $|C \cap \hat{C}| \ge P_{i_C} S_C = S$.
If instead $P_{i_C}=0$, then $S=0$ and obviously $|C \cap \hat{C}| \ge 0$.
Hence in any case $|C \cap \hat{C}| \ge S$, and $\E\big[|C \cap \hat{C}|\big] \ge \E[S]$.
Therefore we can bound $\E\big[|C \cap \hat{C}|\big]$ from below by bounding $\E[S]$ from below.

Before continuing, we simplify the analysis by assuming \kc\ runs on the graph $G$ after all edges not incident on $C$ have been deleted.
We can easily show that this does not increase $S$.
First, by~\eqref{eqn:Xvi} each $X(v,i_C)$ is a nonincreasing function of $\theset{P_i}{i < i_C}$.
Second, by~\eqref{eqn:S0} and~\eqref{eqn:S}, $S$ is a nondecreasing function of $\theset{X(v,i_C)}{v\in C}$.
Hence, $S$ is a nonincreasing function of $\theset{P_i}{i < i_C}$.
Now, the edge deletion forces $P_i=1$ for all $i < i_C$, since any $\pi_i:i < i_C$ has no neighbor $\pi_j : j < i$.
Thus the edge deletion does not increase $S$ (and, obviously, $\E[S]$).
We can then assume $G[V \setminus C]$ is an independent set.
At this point, any node not adjacent to $C$ is isolated and can be ignored.
We can thus restrict the analysis to $C$ and its neighborhood in $G$.
Therefore we let $\bar{C}=\theset{v}{\{u,v\} \in E,\, u \in C, v \notin C}$ denote both the neighborhood and the complement of $C$.

We turn to bounding $\E[S]$.
For now we assume $G[C]$ is a clique; we will then relax the assumption to $|E_C| \ge (1-\epsilon)\binom{C}{2}$.
Since by hypothesis $\cut(C, \bar{C}) < \epsilon C^2$, the average degree of the nodes in $\bar{C}$ is less than $\epsilon C^2 / \bar{C}$.
This is also a bound on the expected number of edges between $C$ and a node drawn u.a.r.\ from $\bar{C}$.
But, for any given $i$, conditioned on $i_C-1=i$ the nodes $\pi_{1},\ldots,\pi_{i_C-1}$ are indeed drawn u.a.r.\ from $\bar{C}$, and so have a total of at most $i \epsilon C^2 / \bar{C}$ edges towards $C$ in expectation.
Thus, over the distribution of $\pi$, the expected number of edges between $C$ and $\pi_{1},\ldots,\pi_{i_C-1}$ is at most:
\begin{align}
\sum_{i=0}^n \frac{i \epsilon C^2}{\bar{C}} \Pr(i_C-1 = i) = \frac{\epsilon C^2}{\bar{C}} \E[i_C-1]
=  \frac{\epsilon C^2}{\bar{C}} \frac{\bar{C}}{C+1} < \epsilon  C \label{eqn:degbound}
\end{align}
where we used the fact that $\E[i_C-1]=\bar{C}/(C+1)$.
Now note that (\ref{eqn:degbound}) is a bound on $C-\E[S_C]$, the expected number of nodes of $C$ that are adjacent to $\pi_1,\ldots,\pi_{i_C-1}$.
Therefore,
$
\E[S_C] \ge (1 - \epsilon)C
$.

Recall that $P_{i_C}$ indicates whether $\pi_{i_C}$ is not adjacent to any of $\pi_1,\ldots,\pi_{i_C-1}$. Since the distribution of $\pi_{i_C}$ is uniform over $C$, $\Pr(P_{i_C} \mid S_C) = S_C/C$.
But $S = P_{i_C}S_C$, hence $\E[S \mid S_C] = (S_C)^2/C$, and thus $\E[S] = \E\big[(S_C)^2\big]/C$.
Using $\E[S_C] \ge (1 - \epsilon)C$ and invoking Jensen's inequality we obtain
\begin{align}
\E[S] \ge \frac{\E[S_C]^2}{C} \ge (1-\epsilon)^2 C \ge (1-2\epsilon)C
\end{align}
which is our bound on $\E\big[|C \cap \hat{C}|\big]$ for \kc.

Let us now move to \access.
We have to take into account the facts that \access\ performs $f(|G_r|-1)$ queries on the pivot before deciding whether to perform $|G_r|-1$ queries, and that \access\ stops after $f(n-1)$ rounds.
We start by addressing the first issue, assuming for the moment \access\ has no restriction on the number of rounds.

Recall that $\Pr(P_{i_C} \mid S_C) = S_C/C$.
Now, if $P_{i_C}=1$, then we have $S_C-1$  edges incident on $\pi_{i_C}$.
It is easy to check that, if $n_r+1$ is the number of nodes at the round when $\pi_{i_C}$ is used, then the probability that \access\ finds some edge incident on $\pi_{i_C}$ is at least:
\begin{align}
1-\Big(1-\frac{S_C-1}{n_{r}}\Big)^{\ceil{f(n_r)}} \ge 1-e^{-f(n_r) \frac{S_C-1}{n_r}} \ge 1-e^{-f(n) \frac{S_C-1}{n}}
\end{align}
and, if this event occurs, then $S=S_C$.
Thus
\begin{align}
\E[S \mid S_C] = \Pr(P_{i_C} \mid S_C) S_C \ge \left(1-e^{-f(n) \frac{S_C-1}{n}}\right) \frac{S_C^2}{C} \label{eqn:ESgivenSC}
\ge \frac{S_C^2}{C} - S_C\frac{2 n}{f(n)C}
\end{align}
where we used the facts that for $S_C \le 1$ the middle expression in~(\ref{eqn:ESgivenSC}) vanishes, that $e^{-x}< 1/x$ for $x > 0$, and that $1/x < 2/(x+1)$ for all $x \ge 2$.
Simple manipulations, followed by Jensen's inequality and an application of $\E[S_C] \ge (1 - \epsilon)C$, give
\begin{align}
\E[S] \ge (1-\epsilon)^2 C - (1-\epsilon)C\frac{2 n }{ f(n) C} 
\ge (1-2\epsilon ) C - 2\frac{n }{ f(n)}
\end{align}
We next generalize the bound to the case $E_C \ge (1-\epsilon)\binom{C}{2}$.
To this end note that, since at most $\epsilon\binom{C}{2}$ edges are missing from any subset of $C$, then any subset of $S_C$ nodes of $C$ has average degree at least
\begin{align}
\max\left\{0, S_C-1 - \binom{C}{2}\frac{2\epsilon}{S_C}\right\} \ge S_C-\frac{\epsilon C (C-1)}{2S_C} - 1
\end{align}
We can thus re-write~(\ref{eqn:ESgivenSC}) as
\begin{align}
\E[S \mid S_C] &\ge \frac{S_C}{C} \left(1-e^{-f(n) \frac{S_C-1}{n}}\right) \left(S_C-\frac{\epsilon C (C-1)}{2S_C}\right)
\end{align}
Standard calculations show that this expression is bounded from below by $\frac{S_C^2}{C} - S_C\frac{2 n }{ f(n) C} - \frac{\epsilon C}{2}$, which by calculations akin to the ones above leads to $\E[S] \ge (1-\frac{5}{2}\epsilon)C - 2\frac{n}{f(n)}$.

Similarly, we can show that $\E[S] \ge \big(\frac{f(n)}{n} -\frac{5}{2}\epsilon\big)C$.
To this end note that when \access\ pivots on $\pi_{i_C}$ all the remaining cluster nodes are found with probability at least $\frac{f(n)}{n}$ (this includes the cases $S_C \le 1$, when such a probability is indeed $1$).
In~\eqref{eqn:ESgivenSC}, we can then replace $1-e^{-f(n) \frac{S_C-1}{n}}$ with $\frac{f(n)}{n}$, which leads to $\E[S] \ge \big(\frac{f(n)}{n} -\frac{5}{2}\epsilon\big)C$.
This proves the first inequality in \eqref{eq:toprove}.

For the second inequality in \eqref{eq:toprove}, note that any subset of $S_C$ nodes has $\cut(C,\bar{C}) \le \epsilon \binom{C}{2}$. Thus, $\pi_{i_C}$ is be incident to at most $\frac{\epsilon}{S_C}\binom{C}{2}$ such edges in expectation.
The expected number of nodes of $\bar{C}$ that \access\ assigns to $\hat{C}$, as a function of $S_C$, can thus be bounded by $\frac{S_C}{C} \frac{\epsilon}{S_C}\binom{C}{2} < \frac{\epsilon}{2}C$.

As far as the $\scO(C e^{-C f(n)/n})$ part of the bound is concerned,
simply note that the bounds obtained so far hold unless $i_C > \ceil{f(n-1)}$, in which case \access\ stops before ever reaching the first node of $C$.
If this happens, $\hat{C}=\{\pi_{i_C}\}$ and $|\hat{C} \oplus C| < |C|$.
The event $i_C > \ceil{f(n-1)}$ is the event that no node of $C$ is drawn when sampling $\ceil{f(n-1)}$ nodes from $V$ without replacement.
We can therefore apply Chernoff-type bounds to the random variable $X$ counting the number of draws of nodes of $C$ and get $\Pr\big(X < (1-\beta)\E[X]) \le \exp(-\beta^2\E[X]/2\big)$ for all $\beta > 0$.
In our case $\E[X] = \ceil{f(n-1)}|C|/n$, and we have to bound the probability that $X$ equals $0 < (1-\beta)\E[X]$.
Thus
\begin{align*}
\Pr(X = 0) \le \exp\left(-\frac{\beta^2 \E[X]}{2}\right)
=
\exp\left(-\frac{\beta^2 \ceil{f(n-1)}|C|}{2 n}\right)
\end{align*}
Note however that $\ceil{f(n-1)} \ge f(n)/2$ unless $n=1$ (in which case $V$ is trivial).
Then, choosing e.g.\ $\beta > \sqrt{4/5}$ yields $\Pr(X = 0) < \exp\big(-{|C|f(n)/5n}\big)$.
This case therefore adds at most $|C|\exp(-{|C|f(n)/5n})$ to $\E[|\hat{C} \oplus C|]$.

\subsection{Proof of Theorem~5}
Before moving to the actual proof, we need some ancillary results.
The next lemma bounds the probability that \access\ does not pivot on a node of $C$ in the first $k$ rounds.
\begin{lemma}
\label{lem:bound_late}
Fix a subset $C \subseteq V$ and an integer $k \ge 1$, and let $\pi_1,\ldots,\pi_n$ be a random permutation of $V$.
For any $v \in C$ let $X_v = \Ind{v \in \{\pi_1,\ldots,\pi_k\}}$, and let $X_C = \sum_{v \in C} X_v$.
Then $\E[X_C]=\frac{k|C|}{n}$, and $\prob(X_C = 0) < e^{-\frac{k |C|}{3 n}}$.
\end{lemma}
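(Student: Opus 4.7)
}
The expectation statement is a one-line computation by linearity: for any $v \in V$, the position of $v$ in a uniformly random permutation is uniform over $\{1,\ldots,n\}$, so $\E[X_v] = \Pr(v \in \{\pi_1,\ldots,\pi_k\}) = k/n$. Summing over $v \in C$ gives $\E[X_C] = k|C|/n$.

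For the probability bound, I would take the direct combinatorial route. The event $\{X_C = 0\}$ says that the first $k$ positions of $\pi$ are occupied entirely by elements of $V \setminus C$. If $k + |C| > n$ this event is impossible and the bound is trivial, so assume otherwise. Then
\[
    \Pr(X_C = 0)
=
    \prod_{i=0}^{k-1} \frac{n - |C| - i}{n - i}
=
    \prod_{i=0}^{k-1}\left(1 - \frac{|C|}{n-i}\right).
\]
Using $1 - x \le e^{-x}$ termwise and $n - i \le n$,
\[
    \Pr(X_C = 0)
\le
    \exp\!\left(-\sum_{i=0}^{k-1} \frac{|C|}{n - i}\right)
\le
    \exp\!\left(-\frac{k|C|}{n}\right)
<
    \exp\!\left(-\frac{k|C|}{3n}\right),
\]
which is in fact stronger than the claimed bound.

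As an alternative (and a sanity check), one can argue via Lemma~\ref{lem:chernoff}: the indicators $\{X_v\}_{v \in C}$ arise from sampling without replacement and are therefore negatively associated, hence non-positively correlated. Applying the lower-tail Chernoff bound with $\delta = 1$ to $X_C$ gives $\Pr(X_C = 0) \le \Pr(X_C \le 0) < \exp(-\E[X_C]/2) = \exp(-k|C|/(2n))$, again well within the claimed $1/3$ constant.

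There is no real obstacle here; the only thing to be slightly careful about is the corner case $k + |C| > n$, where the product expression above is not well-defined but the event is empty and the bound is vacuous.
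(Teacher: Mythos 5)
Your proof is correct, and your primary argument takes a genuinely different route from the paper's. The paper establishes the tail bound by observing that the $X_v$ arise from sampling without replacement, hence are non-positively correlated, and then applies the lower-tail Chernoff bound of Lemma~\ref{lem:chernoff} with $\eta \ge \sqrt{2/3}$ to get $\prob(X_C=0) \le \prob\big(X_C < (1-\eta)\E[X_C]\big) < e^{-\eta^2\E[X_C]/2} \le e^{-k|C|/3n}$ --- exactly the route you sketch as your ``sanity check'' (with $\delta=1$ you even get the better constant $2$ in place of $3$). Your main argument instead computes $\prob(X_C=0)$ exactly as the hypergeometric product $\prod_{i=0}^{k-1}\big(1-\tfrac{|C|}{n-i}\big)$ and bounds it by $e^{-k|C|/n}$, which is both more elementary (no concentration machinery, no appeal to negative correlation) and strictly sharper in the constant. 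What the paper's approach buys is uniformity: the same Lemma~\ref{lem:chernoff} template is reused elsewhere (e.g.\ in the proof of Theorem~4 for the event $i_C > \ceil{f(n-1)}$), whereas your direct computation is specific to the ``all of the first $k$ draws miss $C$'' event. Your handling of the degenerate case $k+|C|>n$ is a nice touch that the paper glosses over; the only remaining degenerate case, $C=\emptyset$, makes the strict inequality fail under both proofs and is clearly not intended by the statement.
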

\begin{proof}
Since $\pi$ is a random permutation, then for each $v \in C$ and each each $i=1,\ldots,k$ we have $\prob(\pi_i = v) = \frac{1}{n}$.
Therefore $\E[X_v] = \frac{k}{n}$ and $\E[X_C]=\frac{k |C|}{n}$.
Now, the process is exactly equivalent to sampling without replacement from a set of $n$ items of which $|C|$ are marked.
Therefore, the $X_v$'s are non-positively correlated and we can apply standard concentration bounds for the sum of independent binary random variables.
In particular, for any $\eta \in (0,1)$ we have:
\[
\prob(X_C = 0) \le \prob(X_C < (1-\eta)\E[X_C]) < \exp\Big(-\frac{\eta^2 \E[X_C]}{2}\Big)
\]
which drops below $e^{- \frac{k |C|}{3n}}$ by replacing $\E[X_C]$ and choosing $\eta \ge \sqrt{2/3}$.
\end{proof}
The next lemma is the crucial one.
\begin{lemma}
\label{lem:majority}
Let $\epsilon \le \frac{1}{10}$. Consider a strongly $(1-\epsilon)$-knit set $C$ with $|C| > \frac{10 n}{f(n)}$.
Let $u_C = \min\{v \in C\}$ be the id of $C$.
Then, for any $v \in C$, in any single run of \access\ we have $\prob(\id(v)=u_C) \ge \frac{2}{3}$.
\end{lemma}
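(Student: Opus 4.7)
The plan is to analyze the first round $T$ in which \access{} picks a pivot from inside $C$, and to argue that with probability exceeding $2/3$ the cluster produced in that round absorbs both $u_C$ and $v$. The key structural fact is the ``closed'' nature of strongly $(1-\epsilon)$-knit sets: since $\scN_w\subseteq C$ for every $w\in C$, no edge runs between $C$ and $V\setminus C$. Hence any pivot chosen outside $C$ forms a cluster disjoint from $C$ and never removes a $C$-node, so $C\subseteq V_r$ for every round $r$ preceding the first $C$-pivot. This observation disposes of the case $v=u_C$ outright: the cluster that eventually contains $u_C$ is either $u_C$'s own cluster or one spawned by some $\pi\in\scN_{u_C}\subseteq C$, and in both cases it lies inside $C$, whose minimum is $u_C$; therefore $\id(u_C)=u_C$ with probability $1$.

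For $v\ne u_C$, set $N[x]\defeq\{x\}\cup\scN_x$ and introduce three events: $A$ that some round $r\le\lceil f(n-1)\rceil$ has $\pi_r\in C$ (call the first such round $T$ and let $\tau\defeq\pi_T$); $B$ that $\tau\in N[v]\cap N[u_C]$; and $D$ that the initial sample of $\lceil f(n_T)\rceil$ queries at round $T$ detects at least one neighbour of $\tau$, so \access{} forms the complete cluster $\{\tau\}\cup\scN_\tau$. If $A\cap B\cap D$ holds then $\{\tau\}\cup\scN_\tau$ contains both $v$ and $u_C$, is a subset of $C$ (as $\tau\in C$ forces $\scN_\tau\subseteq C$), and hence has $u_C=\min C$ as its id, yielding $\id(v)=u_C$. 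It therefore suffices to lower bound $\Pr(A\cap B\cap D)$. For $A$, the closed-ness observation gives $\Pr(\pi_r\in C\mid\pi_1,\dots,\pi_{r-1}\notin C)\ge|C|/n$, so $\Pr(\neg A)\le(1-|C|/n)^{\lceil f(n-1)\rceil}\le\exp(-\lceil f(n-1)\rceil|C|/n)$; combining $|C|>10n/f(n)$ with $f(n-1)\ge(n-1)f(n)/n$ from Definition~\ref{def:fn} and the fact that the hypothesis forces $n>10$, this is at most $e^{-9}$. For $B$, conditional on $A$ the pivot $\tau$ is uniform on $C$ by symmetry (every $C$-node is equally likely to be drawn from $V_r$ as long as $C\subseteq V_r$); strong $(1-\epsilon)$-knitness gives $|N[x]|\ge(1-\epsilon)(|C|-1)+1$ for every $x\in C$, and inclusion--exclusion inside $C$ yields $|N[v]\cap N[u_C]|\ge(1-2\epsilon)|C|+2\epsilon$, whence $\Pr(B\mid A)\ge 1-2\epsilon\ge 4/5$. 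For $D$, since $\scN_\tau\subseteq C\subseteq V_T$, we have $|\scN_\tau|\ge(1-\epsilon)(|C|-1)$, and sampling $\lceil f(n_T)\rceil$ elements of $V_T\setminus\{\tau\}$ without replacement misses all neighbours with probability at most $\exp(-f(n_T)|\scN_\tau|/n_T)\le\exp(-f(n)(1-\epsilon)(|C|-1)/n)\le e^{-8}$, where we used $f(n_T)/n_T\ge f(n)/n$.

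Multiplying the three bounds, $\Pr(A\cap B\cap D)\ge(1-e^{-9})(4/5)(1-e^{-8})>2/3$, establishing the lemma. The main obstacle is step $B$: one must exploit both halves of the strong-knit definition simultaneously through an inclusion--exclusion inside $C$, to guarantee that the first $C$-pivot is almost always a common neighbour of $v$ and $u_C$. Steps $A$ and $D$ are routine concentration calculations whose constants are tuned via the hypotheses $\epsilon\le1/10$ and $|C|>10n/f(n)$ so that all three factors multiply to a value just above $2/3$.
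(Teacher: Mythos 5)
Your proof is correct and follows essentially the same route as the paper's: it isolates the same three events (the first $C$-pivot arrives within $\lceil f(n-1)\rceil$ rounds, it is a common neighbour of $u_C$ and $v$, and the initial sample detects a positive edge so the whole cluster $\{\tau\}\cup\scN_\tau\subseteq C$ is formed), bounds each using the closedness of strongly knit sets, and combines them. The only differences are cosmetic — you use a direct conditional product bound for the first event where the paper invokes a Chernoff bound on the random permutation of candidate pivots, and you treat $v=u_C$ separately via closed neighbourhoods — and all your constants check out.
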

\begin{proof}
We bound from above the probability that any of three ``bad'' events occurs.
As in the proof of Theorem 4, we equivalently see \access\ as going through a sequence of candidate pivots $\pi_1,\ldots,\pi_n$ that is a uniform random permutation of $V$.
Let $i_C = \min\{i : \pi_i \in C\}$ be the index of the first node of $C$ in the random permutation of candidate pivots.
The first event, $B_1$, is $\{i_C > \ceil{f(n-1)}\}$.
Note that, if $B_1$ does not occur, then \access\ will pivot on $\pi_{i_C}$.
The second event, $B_2$, is the event that $\pi_{i_C} \in V_{sing}$ if \access\ pivots on $\pi_{i_C}$ (we measure the probability of $B_2$ conditioned on $\bar{B_1}$).
The third event, $B_3$, is $\{\pi_{i_C} \notin P\}$ where $P = \scN_{u_C} \cap \scN_v$.
If none among $B_1,B_2,B_3$ occurs, then \access\ forms a cluster $\hat{C}$ containing both $u_C$ and $v$, and by the min-tagging rule sets $\id(v) = \min_{u \in \hat{C}} = u_C$.
We shall then show that $\prob(B_1 \cup B_2 \cup B_3) \le 1/3$.

For $B_1$, we apply Lemma~\ref{lem:bound_late} by observing that $i_C > \ceil{f(n-1)}$ corresponds to the event $X_C=0$ with $k=\ceil{f(n-1)}$.
Thus
\begin{align*}
    \prob(i_C > \ceil{f(n-1)}) < e^{-\frac{\ceil{f(n-1)}|C|}{3n}} \le
    e^{-\frac{f(n-1)}{3n}\frac{10\, n}{f(n)}} = e^{-\frac{f(n-1)}{f(n)}\frac{10 }{3}}
    <
e^{-3}
\end{align*}
where we used the fact that $n \ge |C| \ge 11$ and therefore $f(n-1)\ge \frac{10}{11}f(n)$.

For $B_2$, recall that by definition every $v \in C$ has at least $(1-\epsilon)c$ edges.
By the same calculations as the ones above, if \access\ pivots on $\pi_{i_C}$, then:
\begin{align*}
\prob(\pi_{i_C} \in V_{sing}) \le \exp\!\Big(\!-\frac{f(n-1)}{n-1}(1-\epsilon)c\!\Big) \le \exp\!\Big(\!-\frac{f(n-1)}{n-1}\big(1-\frac{1}{10}\big)\frac{10\, n}{f(n)}\!\Big) \le e^{-9} 
\end{align*}
For $B_3$, note that the distribution of $\pi_{i_C}$ is uniform over $C$.
Now, let $\scN_{u_C}$ and $\scN_v$ be the neighbor sets of $u_C$ and $v$ in $C$, and let $P=\scN_{u_C} \cap \scN_v$.
We call $P$ the set of good pivots.
Since $C$ is strongly $(1-\epsilon)$-knit, both $u_C$ and $v$ have at least $(1-\epsilon)c$ neighbors in $C$.
But then $|C \setminus P| \le 2\epsilon c$ and
\begin{align*}
\Pr(\pi_{i_C} \notin P) = \frac{|C \setminus P|}{|C|} \le 2\epsilon \le 1/5
\end{align*}
By a union bound, then, $\prob(B_1 \cup B_2 \cup B_3) \le e^{-3} + e^{-9} + 1/5 < 1/3$.
\end{proof}
We are now ready to conclude the proof.
Suppose we execute \access\ independently $K=48\lceil\ln(n/p)\rceil$ times with the min-tagging rule.
For a fixed $v \in G$ let $X_v$ be the number of executions giving $\id(v)=u_C$.
On the one hand, by Lemma~\ref{lem:majority}, $\E[X_v] \ge \frac{2}{3}K$.
On the other hand, $v$ will not be assigned to the cluster with id $u_C$ by the majority voting rule only if $X_v \le \frac{1}{2}K \le \E[X_v](1-\delta)$ where $\delta=\frac{1}{4}$.
By standard concentration bounds, then, $\Pr(X_v \le \frac{1}{2}K) \le \exp(-\frac{\delta^2 \E[X_v]}{2}) = \exp(-\frac{K}{48})$.
By setting $K=48\ln(n/p)$, the probability that $v$ is not assigned id $u_C$ is thus at most $p/n$.
A union bound over all nodes concludes the proof.

\section{Supplementary Material for Section 6}
\label{s:lb}

\subsection{Proof of Theorem 8}
We prove that there exists a distribution over labelings $\sigma$ with $\OPT=0$ on which any deterministic algorithm has expected cost at least $\frac{n\ve^2}{8}$. Yao's minimax principle then implies the claimed result.
 
Given $V = \{1,\ldots,n\}$, we define $\sigma$ by a random partition of the vertices in $d \ge 2$ isolated cliques $T_1,\ldots,T_d$ such that $\sigma(v,v') = +1$ if and only if $v$ and $v'$ belong to the same clique. The cliques are formed by assigning each node $v \in V$ to a clique $I_v$ drawn uniformly at random with replacement from $\{1,\dots,d\}$, so that $T_i = \theset{v \in V}{I_v=i}$. Consider a deterministic algorithm making queries $\{s_t,r_t\} \in \scE$. Let $E_i$ be the event that the algorithm never queries a pair of nodes in $T_i$ with $|T_i| \ge \frac{n}{2d} > 5$. Apply Lemma~\ref{lem:cluster} below with $d = \frac{1}{\ve}$. This implies that the expected number of non-queried clusters of size at least $\frac{n}{2d}$ is at least $\frac{d}{2} = \frac{1}{2\ve}$. The overall expected cost of ignoring these clusters is therefore at least
\[
	\frac{d}{2}\left(\frac{n}{2d}\right)^2 = \frac{n^2}{8d} = \frac{\ve n^2}{8}
\]
and this concludes the proof.

\begin{lemma}
\label{lem:cluster}
Suppose $d > 0$ is even, $n \ge 16d\ln d$, and $B < \frac{d^2}{50}$. Then for any deterministic learning algorithm making at most $B$ queries,
\[
    \sum_{i=1}^d \Pr(E_i) > \frac{d}{2}~.
\]
\end{lemma}
\begin{proof}
For each query $\{s_t,r_t\}$ we define the set $L_t$ of all cliques $T_i$ such that $s_t\not\in T_i$ and some edge containing both $s_t$ and a node of $T_i$ was previously queried. The set $R_t$ is defined similarly using $r_t$. Formally,
\begin{align*}
    L_t = & \theset{ i }{ (\exists \tau < t) \; s_\tau=s_t \,\wedge\, r_\tau\in T_i \,\wedge\, \sigma(s_{\tau},r_{\tau}) = -1 }
\\
    R_t = & \theset{ i }{ (\exists \tau < t) \; r_\tau=r_t \,\wedge\, s_\tau\in T_i \,\wedge\, \sigma(s_{\tau},r_{\tau}) = -1 }~.
\end{align*}
Let $D_t$ be the event that the $t$-th query discovers a new clique of size at least $\frac{n}{2d}$, and let $P_t = \max\bigl\{|L_t|,|R_t|\bigr\}$. Using this notation,
\begin{align}
\label{eq:clubound}
    \sum_{t=1}^B \Ind{D_t}
=
    \sum_{t=1}^B \Ind{D_t \,\wedge\, P_t < d/2} + \underbrace{\sum_{t=1}^B \Ind{D_t \,\wedge\, P_t \ge d/2}}_{N}~.
\end{align}
We will now show that unless $B \ge \tfrac{d^2}{50}$, we can upper bound $N$ deterministically by $\sqrt{2B}$.

Suppose $N > \tfrac{d}{2}$, and let $t_1,\dots,t_N$ be the times $t_k$ such that $\Ind{D_{t_k} \,\wedge\, P_{t_k} \ge d/2} = 1$. Now fix some $k$ and note that, because the clique to which $s_{t_k}$ and $r_{t_k}$ both belong is discovered, neither $s_{t_k}$ nor $r_{t_k}$ can occur in a future query $\{s_t,r_t\})$ that discovers a new clique. Therefore, in order to have $\Ind{D_t \,\wedge\, P_t \ge d/2} = 1$ for $N > \tfrac{d}{2}$ times, at least
\[
    \binom{N}{2} \ge \frac{d^2}{8}
\]
queries must be made, since each one of the other $N-1 \ge \frac{d}{2}$ discovered cliques can contribute with at most a query to making $P_t \ge \tfrac{d}{2}$. So, it takes at least
$
    B \ge \frac{d^2}{8}
$
queries to discover the first $\frac{d}{2}$ cliques of size at least two, which contradicts the lemma's assumption that $B \le \tfrac{d^2}{16}$. Therefore, $N \le \tfrac{d}{2}$.

Using the same logic as before, in order to have $\Ind{D_t \,\wedge\, P_t \ge d/2} = 1$ for $N \le \tfrac{d}{2}$ times, at least
\[
    \frac{d}{2} + \left(\frac{d}{2} - 1\right) + \dots + \left(\frac{d}{2} - N + 1\right)
\]
queries must be made. So, it must be
\[
    B \ge \sum_{k=1}^N \left(\frac{d}{2} - (k - 1)\right) = (d+1)\frac{N}{2} - \frac{N^2}{2}
\]
or, equivalently, $N^2 -(d+1)N + 2B \ge 0$. Solving this quadratic inequality for $N$, and using the hypothesis $N\le \tfrac{d}{2}$, we have that $N\leq \frac{(d+1)-\sqrt{(d+1)^2-8B}}{2}$. Using the assumption that $B\leq \tfrac{d^2}{50}$ we get that $N\leq \sqrt{2B}$.

We now bound the first term of~(\ref{eq:clubound}) in expectation. The event $D_t$ is equivalent to $s_t,r_t \in T_i$ for some $i \in \neg L_t \cap \neg R_t$, where for any $S \subseteq \{1,\dots,d\}$ we use $\neg S$ to denote $\{1,\dots,d\}\setminus S$.

Let $\Pr_t = \Pr\bigl(\,\cdot\mid P_t < d/2 \bigr)$. For $L',R'$ ranging over all subsets of $\{1,\dots,d\}$ of size strictly less than $\tfrac{d}{2}$,
\begin{align}
\nonumber
    \Pr_t(D_t)
&=
    \sum_{L',R'}\sum_{i \in \neg L' \cap \neg R'} \!\Pr_t\bigl(s_t \in T_i \,\wedge\, r_t \in T_i \,\big|\, L_t = L',\, R_t = R'\bigr)\,\Pr_t(L_t = L' \,\wedge\, R_t = R')
\\ &=
\label{eq:indep}
    \sum_{L',R'}\sum_{i \in \neg L' \cap \neg R'} \!\Pr_t\bigl(s_t \in T_i \,\big|\, L_t = L'\bigr)\,\Pr_t\bigl(r_t \in T_i \,\big|\, R_t = R'\bigr)\,\Pr_t(L_t = L' \,\wedge\, R_t = R')
\\ &=
\label{eq:split}
    \sum_{L',R'}\sum_{i \in \neg L' \cap \neg R'} \frac{1}{|\neg L'|}\,\frac{1}{|\neg R'|}\,\Pr_t(L_t = L' \,\wedge\, R_t = R')
\\ &=
\nonumber
    \sum_{L',R'} \frac{|\neg L' \cap \neg R'|}{|\neg L'|\,|\neg R'|}\,\Pr_t(L_t = L' \,\wedge\, R_t = R')
\\ &\le
\label{eq:count}
	\frac{2}{d}~.
\end{align}
Equality~(\ref{eq:indep}) holds because $P_t = \max\{L_t,R_t\} < \frac{d}{2}$ implies that there are at least two remaining cliques to which $s_t$ and $r_t$ could belong, and each node is independently assigned to one of these cliques. Equality~(\ref{eq:split}) holds because, by definition of $L_t$, the clique of $s_t$ is not in $L_t$, and there were no previous queries involving $s_t$ and a node belonging to a clique in $\neg L_t$ (similarly for $r_t$). Finally, (\ref{eq:count}) holds because $|\neg L'| \ge \tfrac{d}{2}$, $|\neg R'| \ge \tfrac{d}{2}$, and $|\neg L' \cap \neg R'| \le \min\{|\neg L'|,|\neg R'|\}$. Therefore,
\begin{align*}
    \sum_{t=1}^B \Pr\bigl(D_t \,\wedge\, P_t < d/2\bigr)
\le
    \sum_{t=1}^B \Pr\bigl(D_t \mid P_t < d/2\bigr)
\le
    \frac{2B}{d}~.
\end{align*}
Putting everything together,
\begin{equation}\label{eq:clubound1}
\E\left[\sum_{t=1}^B \Ind{D_t}\right] \leq \frac{2B}{d} + \sqrt{2B}~.
\end{equation}
On the other hand, we have
\begin{equation}
\label{eq:clubound2}
	\sum_{t=1}^B \Ind{D_t}
=
	\sum_{i=1}^{d}\Big(\Ind{|T_i|\ge\tfrac{n}{2d}}-\Ind{E_i}\Big)
=
	d - \sum_{i=1}^{d}\Big(\Ind{|T_i|<\tfrac{n}{2d}}+\Ind{E_i}\Big)
\end{equation}
Combining \eqref{eq:clubound1} and \eqref{eq:clubound2}, we get that
\[
	\sum_{i=1}^{d}\Pr(E_i)
\ge
	d-\sum_{i=1}^{d}\Pr\big(|T_i|<\tfrac{n}{2d}\big) - \frac{2B}{d}-\sqrt{2B}~.
\]
By Chernoff-Hoeffding bound,
$
	\Pr\big(|T_i|<\tfrac{n}{2d}\big) \le \frac{1}{d^2}
$
for each $i=1,\dots,d$ when $n \ge 16d\ln d$. Therefore,
\[
	\sum_{i=1}^{d}\Pr(E_i)
\ge
	d - \frac{2B+1}{d} - \sqrt{2B}~.
\]
To finish the proof, suppose on the contrary that $\sum_{i=1}^{d}\Pr(E_i)\leq \frac{d}{2}$. Then from the inequality above, we
would get that
\[
	\frac{d}{2}
\ge
	d - \frac{2B+1}{d} - \sqrt{2B}
\]
which implies $B\geq \left(\frac{2-\sqrt{2}}{4}\right)^2 d^2> \frac{d^2}{50}$, contradicting the assumptions. Therefore, we must have $\sum_{i=1}^{d}\Pr(E_i)> \frac{d}{2}$ as required.
\end{proof}

\subsection{Proof of Theorem 9}
Choose a suitably large $n$ and let $V=[n]$.
We partition $V$ in two sets $A$ and $B$, where $|A|=\alpha n$ and $|B|=(1-\alpha)n$; we will eventually set $\alpha = 0.9$, but for now we leave it free to have a clearer proof.
The set $A$ is itself partitioned into $k = 1/\epsilon$ subsets $A_1,\ldots,A_k$, each one of equal size $\alpha n/k$ (the subsets are not empty because of the assumption on $\ve$).
The labeling $\sigma$ is the distribution defined as follows.
For each $i=1,\ldots,k$, for each pair $u,v \in A_i$, $\sigma(u,v)=+1$; for each $u,v\in B$, $\sigma(u,v) = -1$.
Finally, for each $v \in B$ we have a random variable $i_v$ distributed uniformly over $[k]$.
Then, $\sigma(u,v)=+1$ for all $u \in A_{i_v}$ and $\sigma(u,v)=-1$ for all $u \in A \setminus A_{i_v}$.
Note that the distribution of $i_v$ is independent of the (joint) distributions of the $i_{w}$'s for all $w \in B \setminus \{v\}$.

Let us start by giving an upper bound on $\E[\OPT]$.
To this end consider the (possibly suboptimal) clustering $\cs = \{C_i : i \in [k]\}$ where $C_i=A_i \cup \{v \in B : i_v = i\}$.
One can check that $\cs$ is a partition of $V$.
The expected cost $\E[\Delta_{\cs}]$ of $\cs$ can be bound as follows.
First, note the only mistakes are due to pairs $u,v \in B$.
However, for any such fixed pair $u,v$, the probability of a mistake (taken over $\sigma$) is $\prob(i_u \ne i_v) = 1/k$.
Thus,
\begin{align}
\label{eqn:OPTbound}
\E[\OPT] \le \E[\Delta_0] < \frac{|B|^2}{k} = \frac{(1-\alpha)^2n^2}{k}
\end{align}
Let us now turn to the lower bound on the expected cost of the clustering produced by an algorithm.
For each $v \in B$ let $Q_v$ be the total number of distinct queries the algorithm makes to pairs $\{u,v\}$ with $u \in A$ and $v \in B$.
Let $Q$ be the total number of queries made by the algorithm; obviously, $Q \ge \sum_{v \in B} Q_v$.
Now let $S_v$ be the indicator variable of the event that one of the queries involving $v$ returned $+1$.
Both $Q_v$ and $S_v$ as random variables are a function of the input distribution and of the choices of the algorithm.
The following is key:
\begin{align}
\label{eqn:joint_small}
\prob(S_v \et Q_v < \nicefrac{k}{2}) < \frac{1}{2}
\end{align}
The validity of (\ref{eqn:joint_small}) is seen by considering the distribution of the input limited to the pairs $\{u,v\}$.
Indeed, $S_v \et Q_v < \nicefrac{k}{2}$ implies the algorithm discovered the sole positive pair involving $v$ in less than $k/2$ queries.
Since there are $k$ pairs involving $v$, and for any fixed $j$ the probability (taken over the input) that the algorithm finds that particular pair on the $j$-th query is exactly $1/k$.
Now,
\begin{align}
\prob(S_v \et Q_v < \nicefrac{k}{2}) + \prob(\bar{S_v} \et Q_v < \nicefrac{k}{2}) + \prob(Q_v \ge \nicefrac{k}{2}) = 1
\end{align}
and therefore
\begin{align}
\label{eqn:sumup}
\prob(\bar{S_v} \et Q_v < \nicefrac{k}{2}) + \prob(Q_v \ge \nicefrac{k}{2}) > \frac{1}{2}
\end{align}
Let us now consider $R_v$, the number of mistakes involving $v$ made by the algorithm.
We analyse $\E[R_v \,|\, \bar{S_v} \et Q_v < \nicefrac{k}{2}]$.
For all $i\in [k]$ let $Q_v^i$ indicate the event that, for some $u \in A_i$, the algorithm queried the pair $\{u,v\}$.
Let $I = \{i \in [k] : Q_v^i = 0\}$; thus $I$ contains all $i$ such that the algorithm did not query any pair $u,v$ with $u \in A_i$.
Suppose now the event $\bar{S_v} \et Q_v < \nicefrac{k}{2}$ occurs.
On the one hand, $\bar{S_v}$ implies that: 
\begin{align}
\Pr(\sigma(u,v)=+1 \mid I) = \left\{
\begin{array}{ll}
\nicefrac{1}{|I|} & u \in A_i, i \in I \\
0 & u \in A_i, i \in [k] \setminus I
\end{array}
 \right.
\end{align}
Informally speaking, this means that the random variable $i_v$ is distributed uniformly over the (random) set $I$.
Now observe that, again conditioning on the joint event $\bar{S_v} \et Q_v < \nicefrac{k}{2}$, whatever label $s$ the algorithm assigns to a pair $u,v$ with $u \in A_i$ where $i \in I$, the distribution of $\sigma(u,v)$ is independent of $s$.
This holds since $s$ can obviously be a function only of $I$ and of the queries made so far, all of which returned $-1$, and possibly of the algorithm's random bits.
In particular, it follows that:
\begin{align}
\Pr(\sigma(u,v) \ne s \mid I) \ge \min\big\{\nicefrac{1}{|I|}, 1-\nicefrac{1}{|I|}\big\}
\end{align}
However, $Q_v < \nicefrac{k}{2}$ implies that $|I| \ge k-Q_v > \nicefrac{k}{2} = \nicefrac{2}{\epsilon} > 2$, which implies $\min\{\nicefrac{1}{|I|}, 1-\nicefrac{1}{|I|}\} \ge \nicefrac{1}{|I|}$.
Therefore, $\Pr(\sigma(u,v) \ne s \mid I) \ge \nicefrac{1}{|I|}$ for all $u \in A_i$ with $i \in I$.

We can now turn to back to $R_v$, the number of total mistakes involving $v$.
Clearly, $R_v \ge \sum_{i = 1}^{k}\sum_{u \in A_i} \Ind{\sigma(u,v) \ne s}$.
Then:
\begin{align}
\E[R_v \,|\,E] &= \E\Big[\sum_{i = 1}^{k}\sum_{u \in A_i} \Ind{\sigma(u,v) \ne s} \,\Big|\, \bar{S_v} \et Q_v < \nicefrac{k}{2} \Big] \\
&= \E\Big[ \E\Big[ \sum_{i = 1}^{k}\sum_{u \in A_i} \Ind{\sigma(u,v) \ne s} \,\Big|\, I \Big] \,\Big|\, \bar{S_v} \et Q_v < \nicefrac{k}{2} \Big] \\
&\ge \E\Big[ \E\Big[ \sum_{i \in I}\sum_{u \in A_i} \Ind{\sigma(u,v) \ne s} \,\Big|\, I \Big] \,\Big|\, \bar{S_v} \et Q_v < \nicefrac{k}{2} \Big] \\
&\ge \E\Big[ \E\Big[ \sum_{i \in I}\sum_{u \in A_i} \frac{1}{|I|} \,\Big|\, I \Big] \,\Big|\, \bar{S_v} \et Q_v < \nicefrac{k}{2} \Big] \\
&= \E\Big[ \E\Big[ \frac{\alpha n}{k} \Big] \,\Big|\, \bar{S_v} \et Q_v < \nicefrac{k}{2} \Big] \\
&=  \frac{\alpha n}{k}
\end{align}
And therefore:
\begin{align*}
\E[R_v] &\ge \E[R_v \,|\, \bar{S_v} \et Q_v < \nicefrac{k}{2}] \cdot \prob(\bar{S_v} \et Q_v < \nicefrac{k}{2}) \\
&> \frac{\alpha n}{k} \cdot \prob(\bar{S_v} \et Q_v < \nicefrac{k}{2}) 
\end{align*}
This concludes the bound on $\E[R_v]$.
Let us turn to $\E[Q_v]$. Just note that:
\begin{align}
\E[Q_v] \ge \frac{k}{2} \cdot \prob(Q_v \ge \nicefrac{k}{2})
\end{align}
By summing over all nodes, we obtain:
\begin{align}
\E[Q] &\ge \sum_{v \in B} \E[Q_v] \ge \frac{k}{2} \Big(\sum_{v \in B} \prob(Q_v \ge \nicefrac{k}{2}) \Big) \label{eqn:eQ} \\
\E[\Delta] &\ge \sum_{v \in B} \E[R_v] > \frac{\alpha n}{k} \Big(\sum_{v \in B} \prob(\bar{S_v} \et Q_v < \nicefrac{k}{2}) \Big) \label{eqn:eDelta}
\end{align}
to which, by virtue of (\ref{eqn:sumup}), applies the constraint:
\begin{align}
\label{eqn:constr}
\Big(\sum_{v \in B} \prob(Q_v \ge \nicefrac{k}{2}) \Big) + \Big(\sum_{v \in B} \prob(\bar{S_v} \et Q_v < \nicefrac{k}{2}) \Big)  > |B|\frac{1}{2} = \frac{(1-\alpha)n}{2}
\end{align}
This constrained system gives the bound.
Indeed, by (\ref{eqn:eQ}), (\ref{eqn:eDelta}) and (\ref{eqn:constr}), it follows that if $\E[Q] < \frac{k}{2}\frac{(1-\alpha)n}{4} = \frac{(1-\alpha)nk}{8}$ then $\E[\Delta] > \frac{\alpha n}{k} \frac{(1-\alpha)n}{2} = \frac{\alpha(1-\alpha)n^2}{4k}$.
It just remains to set $\alpha$ and $k$ properly so to get the statement of the theorem.

Let $\alpha = \nicefrac{9}{10}$ and recall that $k=1/\epsilon$.
Then, first, $\frac{(1-\alpha)nk}{8} = \frac{nk}{80} = \frac{n}{80 \, \epsilon}$.
Second, (\ref{eqn:OPTbound}) gives $\E[\OPT] <\frac{(1-\alpha)^2n^2}{k} = \frac{n^2}{100 k} = \frac{\epsilon n^2}{100}$.
Third, $\frac{\alpha(1-\alpha)n^2}{4k} = \frac{9 n^2}{400 k} = \frac{9 \epsilon n^2}{400} > \E[\OPT] + \frac{ \epsilon n^2}{80}$.
The above statement hence becomes: if $\E[Q] < \frac{n}{80 \epsilon}$, then $\E[\Delta] > \E[\OPT]+ \frac{ \epsilon n^2}{80}$.
An application of Yao's minimax principle completes the proof.

As a final note, we observe that for every $c \ge 1$ the bound can be put in the form $\E[\Delta] \ge  c \cdot \E[\OPT] + \Omega(n^2\epsilon)$  by choosing $\alpha \ge c/(c+\nicefrac{1}{4})$.

\section{Supplementary Material for Section 7}
\label{s:ex}
We report the complete experimental evaluation of $\access$ including error bars (see the main paper for a full description of the experimental setting). The details of the datasets are found in Table \ref{t:datasets}.

\begin{table}[ht!]
\caption{
\label{t:datasets}
Description of the datasets.
}
\centering
\begin{tabular}{lccc}
\toprule
Datasets & Type & $|V|$ & \#Clusters \\
\midrule
captchas & Real & 244 & 69 \\
cora & Real-world & 1879 & 191 \\
gym & Real & 94 & 12 \\
landmarks & Real & 266 & 12 \\
skew & Synthetic & 900 & 30 \\
sqrt & Synthetic & 900 & 30 \\
\midrule
\end{tabular}
\end{table}

\begin{figure}
\caption{Clustering cost vs.\ number of queries.}
\newcommand{\mywidth}{0.46\linewidth}
\begin{subfigure}[b]{\mywidth}
\includegraphics[width=\linewidth]{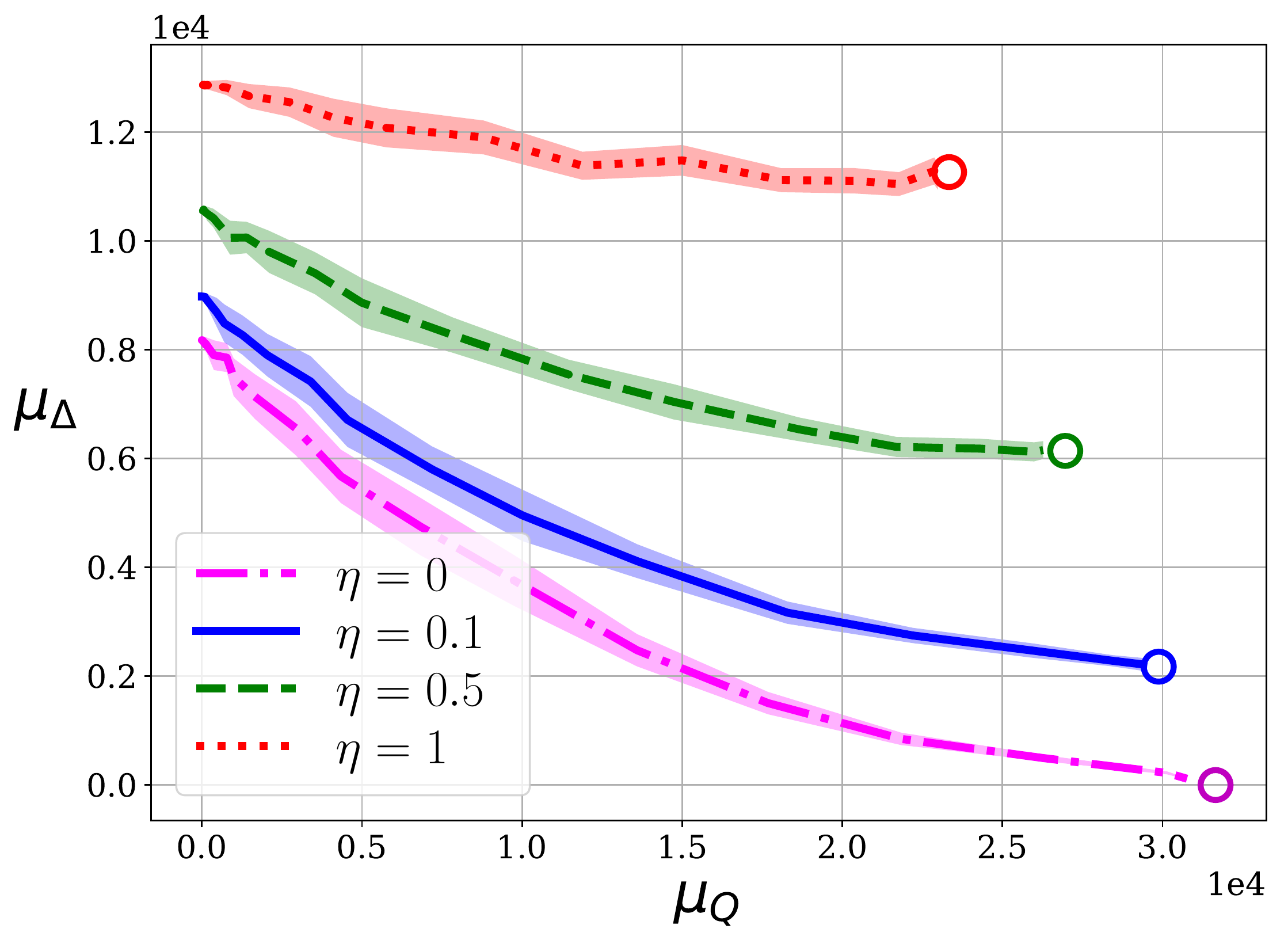}
\caption{skew.}
\end{subfigure}
\hspace*{20pt}
\begin{subfigure}[b]{\mywidth}
\includegraphics[width=\linewidth]{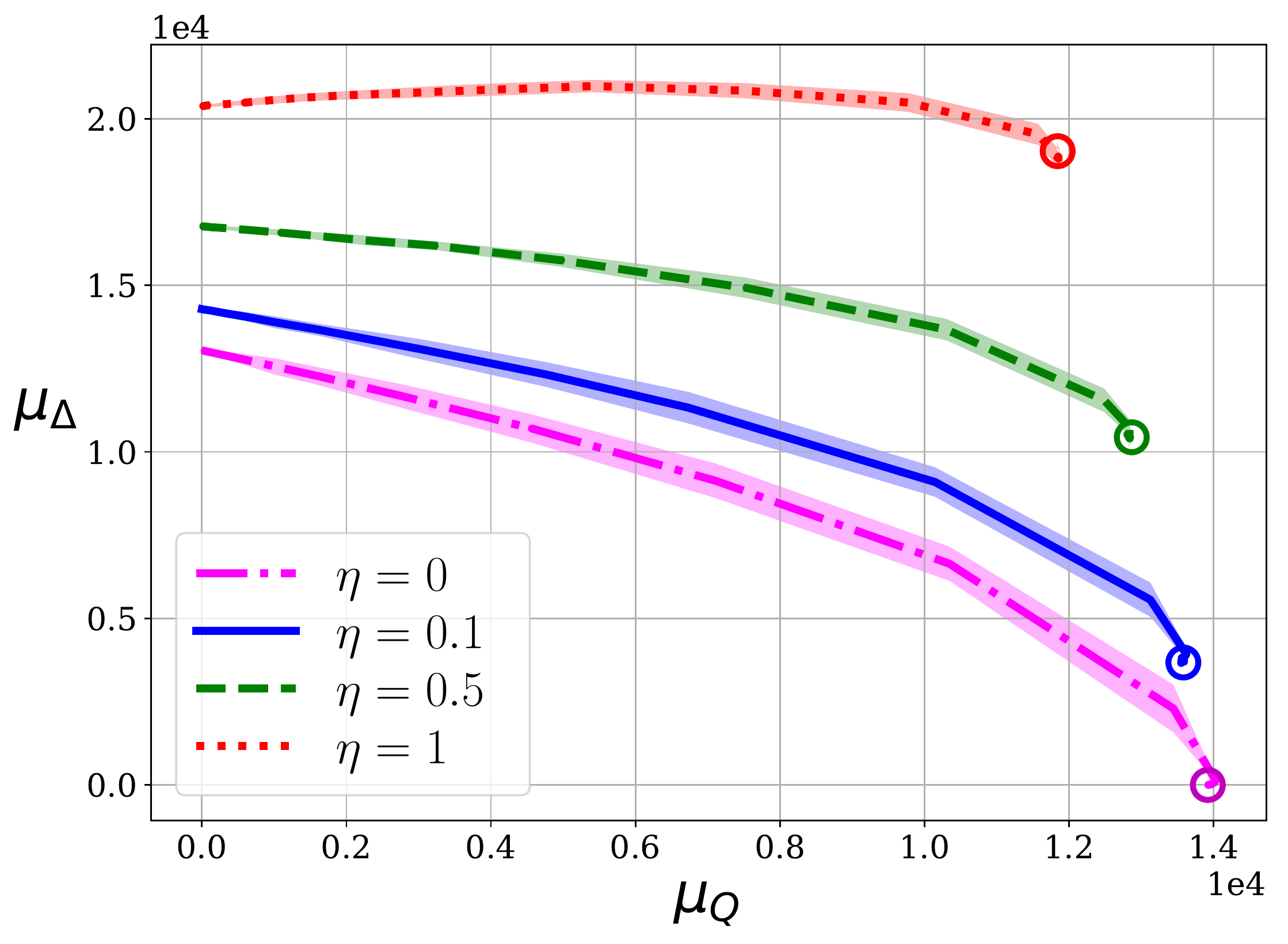}
\caption{sqrt.}
\end{subfigure}
\\
\begin{subfigure}[b]{\mywidth}
\includegraphics[width=\linewidth]{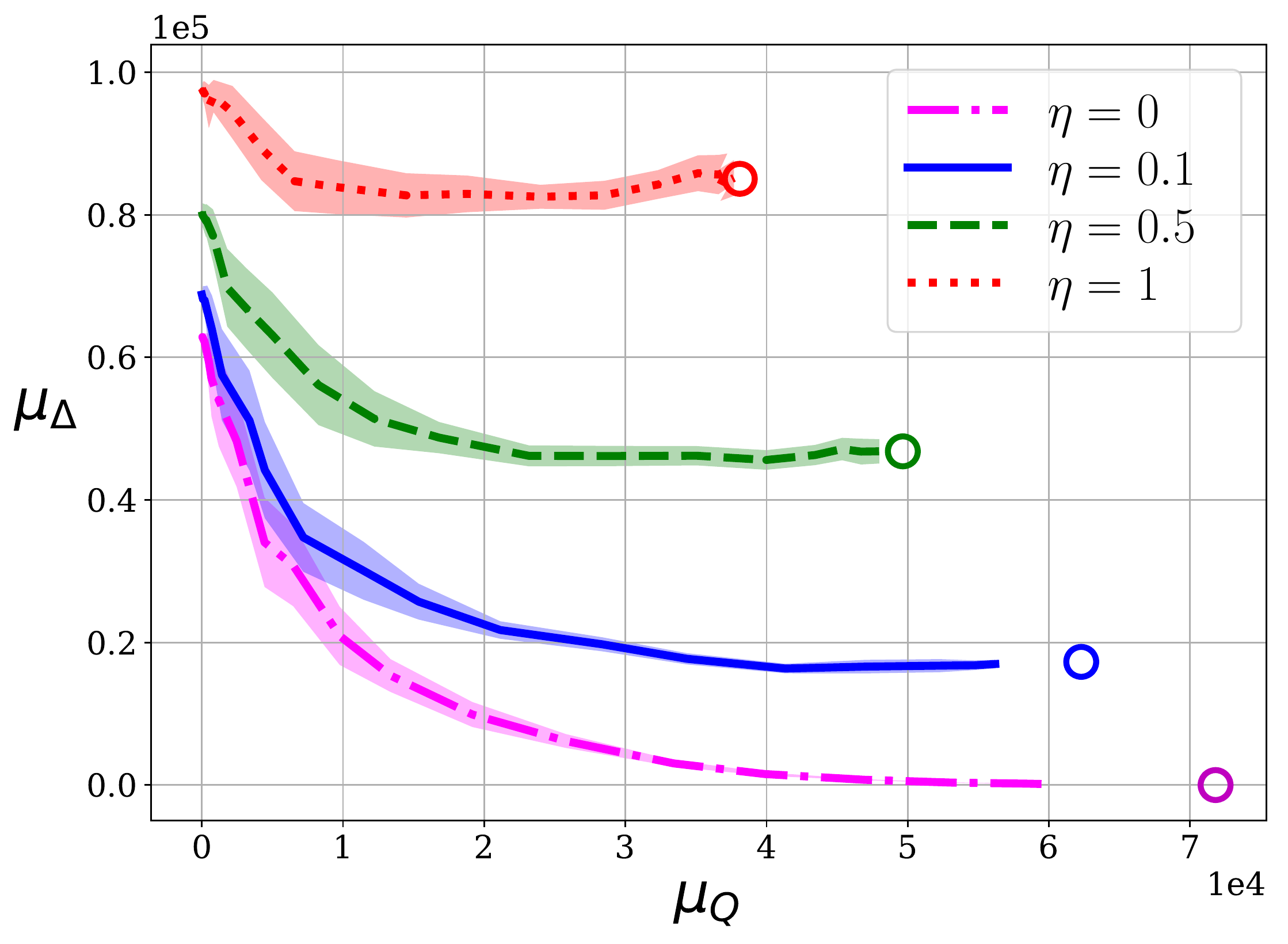}
\caption{cora.}
\end{subfigure}
\hspace*{20pt}
\begin{subfigure}[b]{\mywidth}
\includegraphics[width=\linewidth]{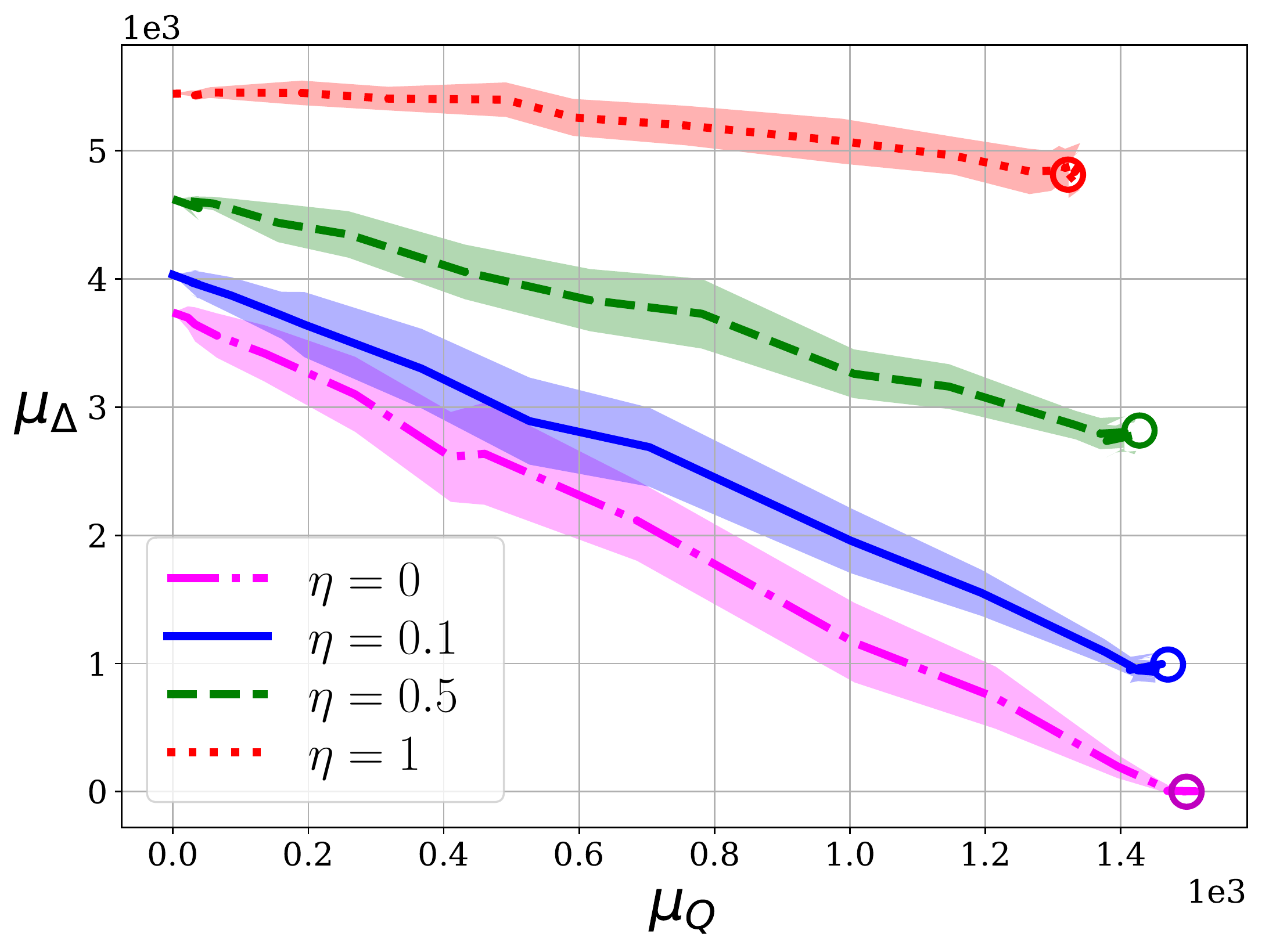} 
\caption{landmarks.}
\end{subfigure}
\\
\begin{subfigure}[b]{\mywidth}
\includegraphics[width=\linewidth]{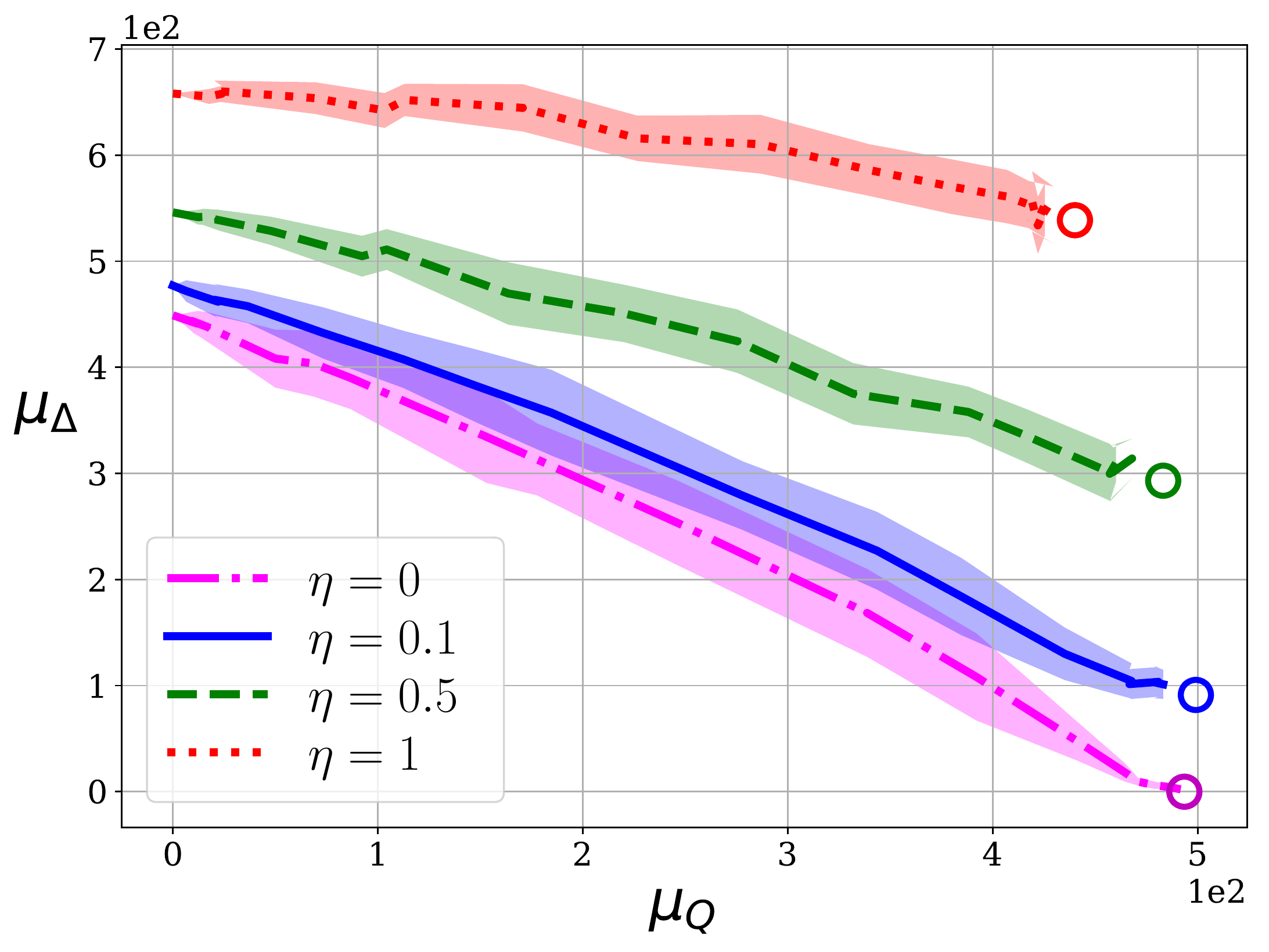}
\caption{gym.}
\end{subfigure}
\hspace*{20pt}
\begin{subfigure}[b]{\mywidth}
\includegraphics[width=\linewidth]{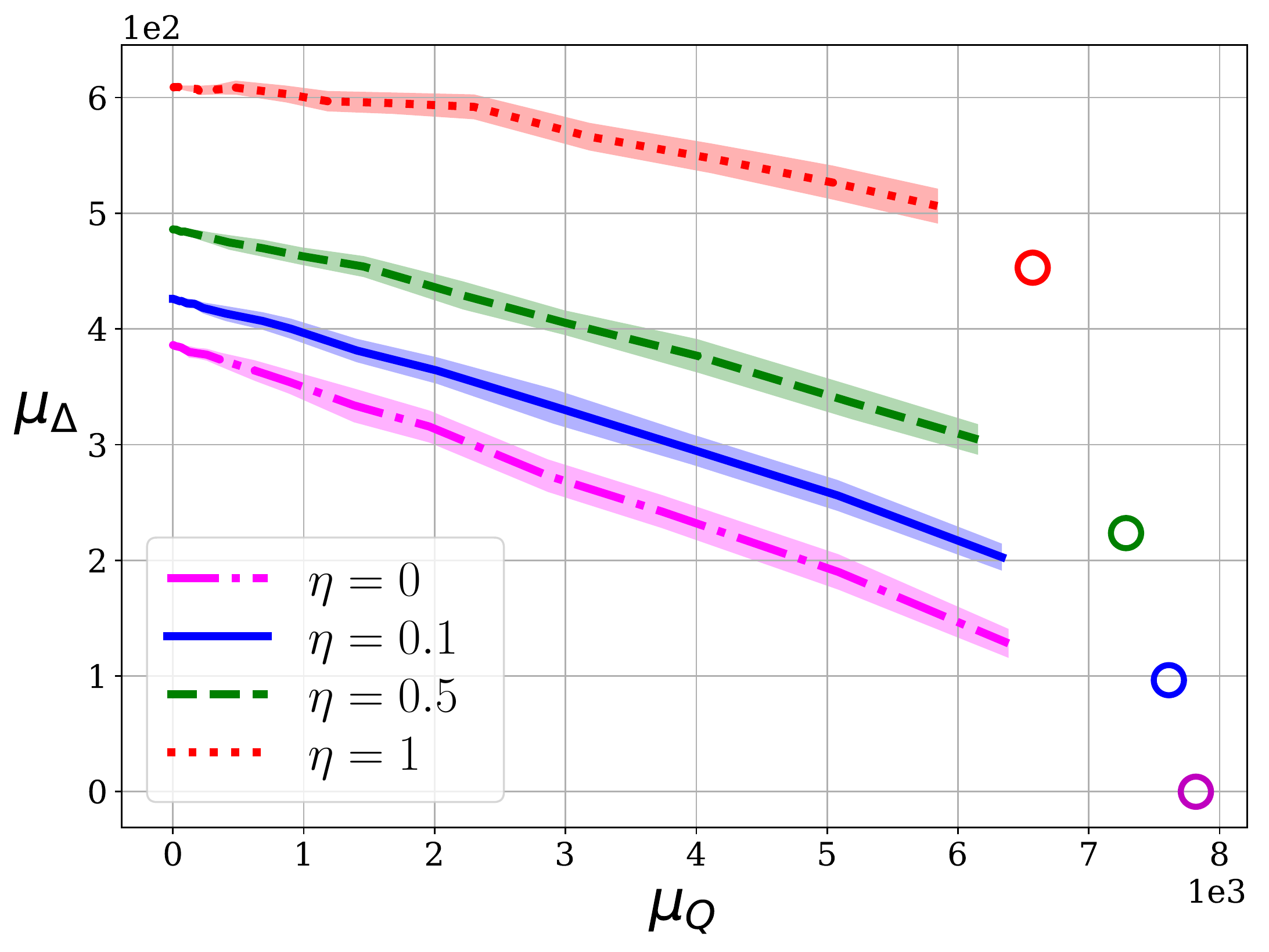} 
\caption{captchas.}
\end{subfigure}
\end{figure}


\end{document}